\newcommand{\myNum}[1]{(\emph{#1})}
\newcommand{\smartparagraph}[1]{\vspace{2pt} \noindent {\bf #1}}
\newcommand{\R}{\mathbb{R}} 
\newcommand{\N}{\mathbb{N}} 
\newcommand{\mbE}{\mathbb{E}} 
\newcommand{\fC}{\mathfrak{C}} 
\newcommand{\cA}{{\cal A}}
\newcommand{\cC}{{\cal C}}
\newcommand{\cD}{{\cal D}}
\newcommand{\cO}{{\cal O}}
\newcommand{\mS}{{\bf S}}
\newcommand{\eqdef}{:=}
\newcommand{\dotprod}[1]{\left< #1\right>} 
\newcommand{\norm}[1]{\lVert#1\rVert}      
 \newtheorem*{lemma*}{Lemma}
 \newtheorem*{proposition*}{Proposition}
 \newtheorem*{theorem*}{Theorem}
 \newtheorem*{corollary*}{Corollary}
 \newtheorem*{remark*}{Remark}
\newtheorem{assumption}{Assumption}
\newtheorem{lemma}{Lemma}
\newtheorem{theorem}{Theorem}
\newtheorem{corollary}{Corollary}
\theoremstyle{definition}
\newtheorem{definition}[theorem]{Definition}
\theoremstyle{remark}
\newtheorem{remark}{Remark} 
\renewcommand{\@notice}{}
\title{Rethinking gradient sparsification\\ as total error minimization}
\author{
Atal Narayan Sahu, Aritra Dutta, Ahmed M. Abdelmoniem, \\ \textbf{Trambak Banerjee$^\dagger$, Marco Canini, Panos Kalnis}  \\
  KAUST, $^\dagger$University of Kansas
}
\begin{document}

\maketitle

\begin{abstract}

Gradient compression is a widely-established remedy to tackle the communication bottleneck in distributed training of large deep neural networks (DNNs). Under the error-feedback framework, Top-$k$ sparsification, sometimes with $k$ as little as $0.1\%$ of the gradient size, enables training to the same model quality as the uncompressed case for a similar iteration count. From the optimization perspective, we find that Top-$k$ is the communication-optimal sparsifier given a per-iteration $k$ element budget.
We argue that to further the benefits of gradient sparsification, especially for DNNs, a different perspective is necessary --- one that moves from per-iteration optimality to consider optimality for the entire training.

We identify that the \emph{total error} --- the sum of the compression errors for all iterations --- encapsulates sparsification throughout training. Then, we propose a communication complexity model that minimizes the total error under a communication budget for the entire training. We find that the \emph{hard-threshold sparsifier}, a variant of the Top-$k$ sparsifier with $k$ determined by a constant hard-threshold, is the optimal sparsifier for this model. Motivated by this, we provide convex and non-convex convergence analyses for the hard-threshold sparsifier with error-feedback. Unlike with Top-$k$ sparsifier, we show that hard-threshold has the same asymptotic convergence and linear speedup property as SGD in the convex case and has no impact on the data-heterogeneity in the non-convex case. Our diverse experiments on various DNNs and a logistic regression model demonstrated that the hard-threshold sparsifier is more communication-efficient than Top-$k$.

\end{abstract}

\section{Introduction}

With the emergence of huge DNNs consisting of hundreds of millions to billions of parameters~\cite{megatronml, gpt3}, distributed data-parallel training \cite{parallel_SGD} is an increasingly important workload. As the training process typically spans several compute nodes (or workers) that periodically exchange the local gradient vectors at each iteration of the optimizer (e.g., SGD), communication among nodes remains in many cases the main performance bottleneck~\cite{switchml, byteps2020, pipedream}.

Lossy gradient compression techniques are becoming a common approach to rein in communication efficiency~\cite{grace}. In particular, sparsification, which sends only a subset of gradient coordinates (e.g., Top-$k$~\cite{aji_sparse, Alistarh18_sparse} sends the $k$ largest gradient coordinates by magnitude in each iteration), may significantly reduce data volumes and thus speed up training.
However, due to its lossy nature, compression raises a complex trade-off between training performance and accuracy.
For instance, Agarwal et al.~\cite{agarwal2020accordion} note that training ResNet-18 on CIFAR-100 using sparsification speeds up training significantly (3.6$\times$), but it also degrades final accuracy by 1.5\%. On the other hand, Lin et al.~\cite{lin2018deep} reports a $500\times$ data reduction via sparsification under deep gradient compression (DGC) for ResNet-50 on ImageNet while preserving the same final accuracy when adopting a carefully-tuned warmup phase.

The vast literature on gradient compression largely considers a fixed communication budget per iteration while leaving it up to practitioners to grapple with specifying an additional hyper-parameter that determines the degree of compression before training begins.
Meanwhile, recent adaptive Top-$k$ sparsifiers~\cite{agarwal2020accordion, zhong2021adaptive} empirically demonstrate that tuning the degree of compression in different phases of DNN training yields a more communication-efficient scheme than a fixed communication scheme (e.g., a static $k$ for Top-$k$).
However, these works lack a theoretical framework proving that adaptive compression enjoys better convergence guarantees than the fixed compression scheme.

This raises a fundamental question: \emph{Given a fixed communication budget, is there a provably better communication scheme than fixed per-iteration compressed communication?}
In this paper, we first observe that Top-$k$ is the communication-optimal sparsifier for a fixed per-iteration communication budget (\S\ref{sec:topk}).
Then, our insight is that by adopting a different perspective that accounts for \emph{the effect of sparsification throughout training}, a more efficient communication scheme is possible under a revised notion of optimality that considers an overall communication budget (instead of a per-iteration budget).

Consequently, we consider sparsification by using the \emph{error-feedback} (EF) mechanism \cite{stich2018sparsified, Alistarh18_sparse}, a delayed gradient component update strategy that is instrumental for the convergence of sparsifiers. Let $e_t$ denote the error arising due to sparsification at iteration $t$. In EF, this error is added to the gradient update at iteration $t+1$. We identify that the term affecting the non-convex convergence in EF-SGD is the \emph{total-error:} $\sum_t\norm{e_t}^2$ \cite{ef-sgd, stich2019error}.

\begin{figure*}[t]
\centering
\begin{subfigure}{0.32\textwidth}
		\centering
		\includegraphics[width=\textwidth]{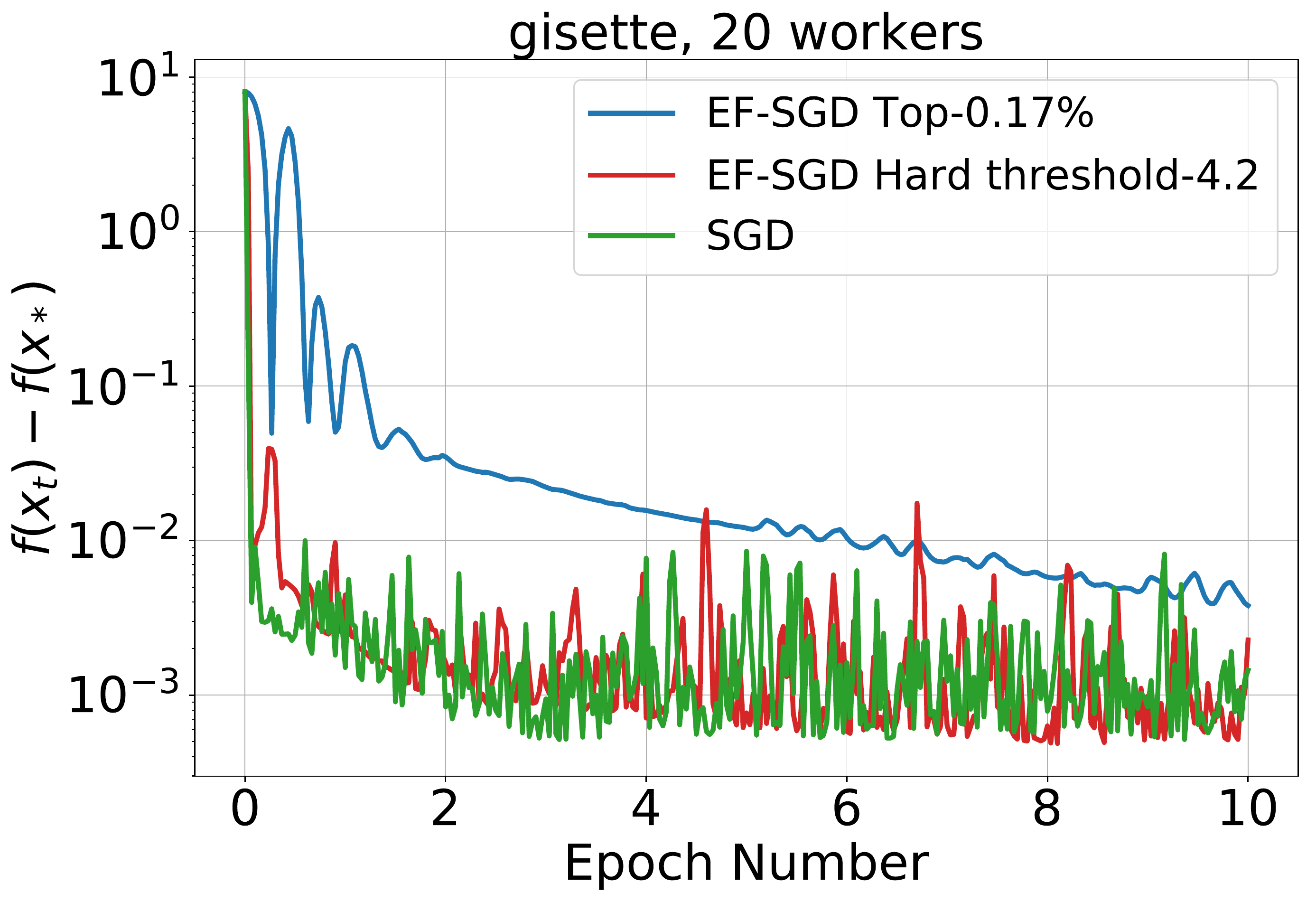}
				\caption{}
	\end{subfigure}
\begin{subfigure}{0.32\textwidth}
		\centering
		\includegraphics[width=\textwidth]{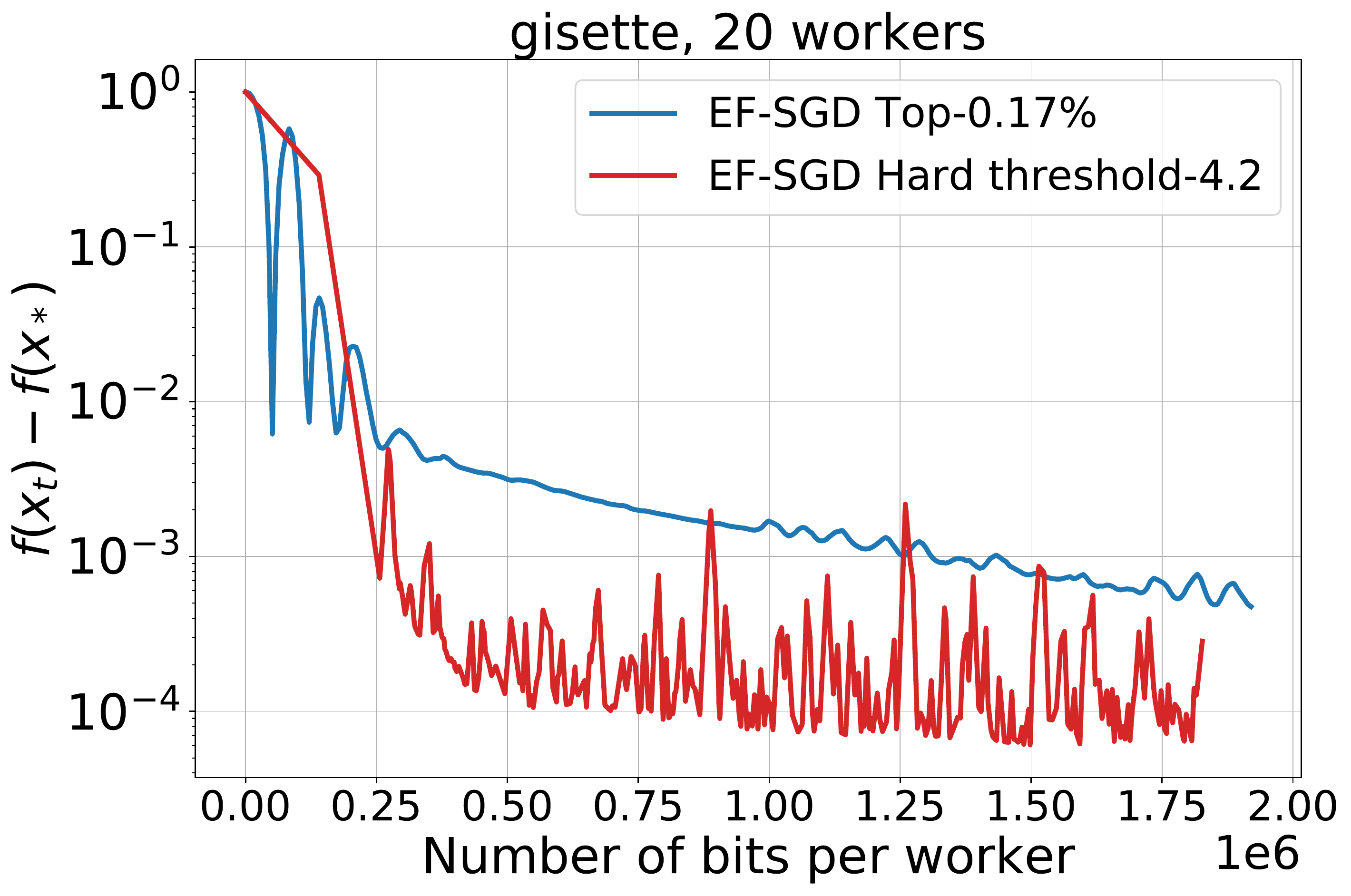}
		\caption{}
	\end{subfigure}
\begin{subfigure}{0.32\textwidth}
        \centering
		\includegraphics[width=\textwidth]{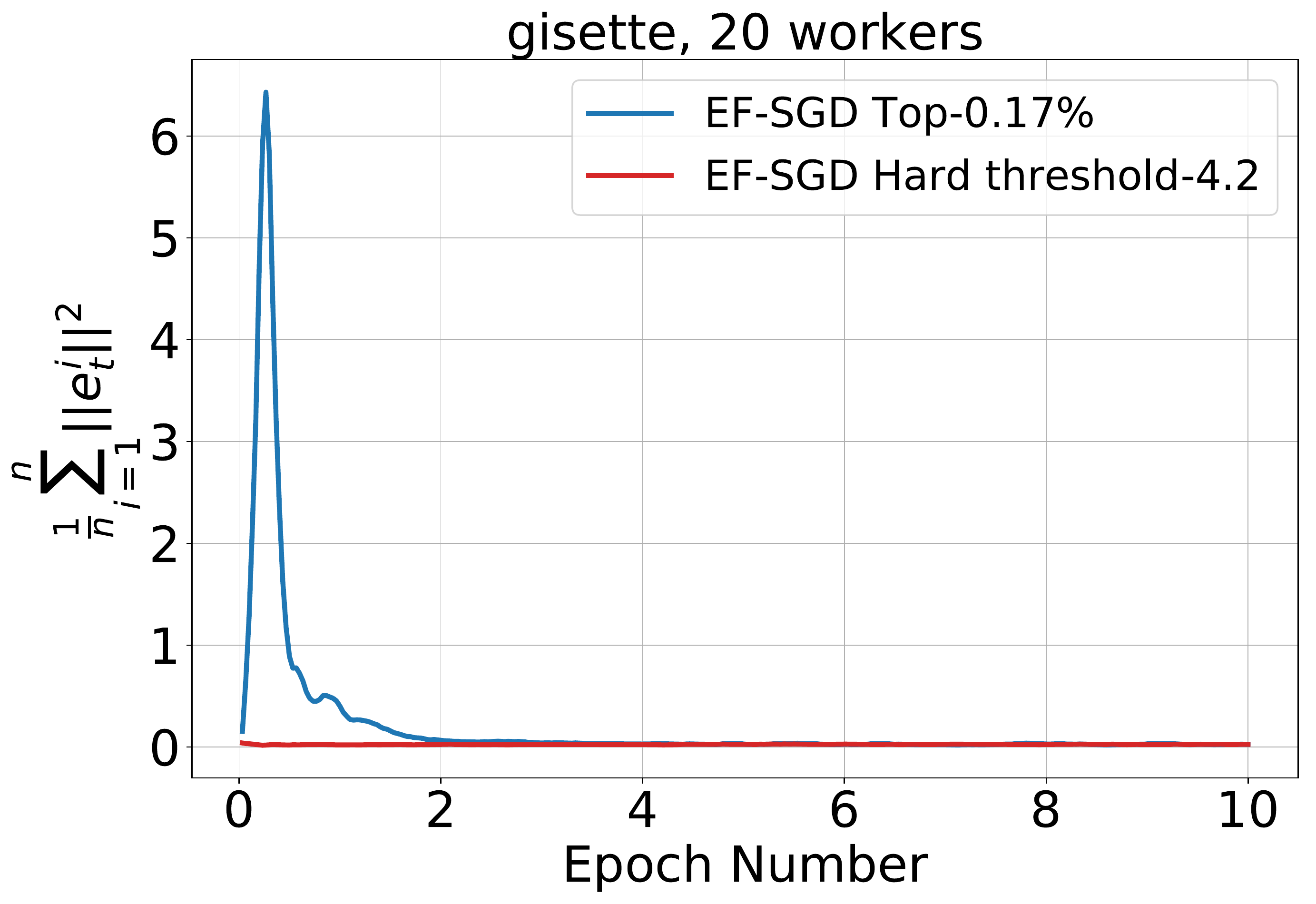}
		\caption{}
	\end{subfigure}
\caption{
\small{Convergence of Top-$k$ and Hard-threshold for a logistic regression model on \texttt{gisette} LIBSVM dataset with 20 workers: (a) Functional suboptimality vs. epochs; (b) functional suboptimality vs. bits communicated;  (c) error norm vs. epochs. Hard-threshold converges as fast as the baseline, no compression SGD and much faster than Top-$k$ because of a smaller total-error than Top-$k$.}}\label{fig:topk_error_accumulation}
\end{figure*}

Directly minimizing the total-error is not possible; thus, Top-$k$ minimizes $\norm{e_t}^2$ at each iteration. We argue that it is possible to focus on the \emph{sum of $\norm{e_t}^2$} and devise a communication scheme that achieves a smaller total-error than any fixed communication sparsifier.
We demonstrate that to achieve this change of perspective; it is sufficient to consider a practical yet straightforward mechanism that is a natural counterpart of Top-$k$: the \emph{hard-threshold sparsifier}, which 
communicates the gradient coordinates with magnitude greater than or equal to a fixed given threshold, $\lambda\ge0$, in each iteration.
Although the two sparsifiers are in an equivalence relation (a given $\lambda$ corresponds to a $k$), under the total-error minimization perspective, we adopt a \emph{fixed} threshold, $\lambda$, which implies a \emph{variable} $k$ at every iteration.

To illustrate intuitively why this change of perspective yields benefits, consider the following example.
Figure \ref{fig:topk_error_accumulation} shows an experiment where we train on 20 workers a 6,000-parameter logistic regression model on the \texttt{gisette} LIBSVM dataset \cite{libsvm} using the Top-$k$ and hard-threshold sparsifiers configured to send the \emph{same data volume}.\footnote{We train for $10$ epochs and set $k=0.17\%$ for Top-$k$, and $\lambda=4.2$ for hard-threshold.}
The loss function is strongly convex and has a unique minimizer, $x_\star$, therefore, a unique optimum, $f(x_\star)$.
We see that hard-threshold converges at the same speed as SGD while communicating $\sim 600 \times$ less data, whereas Top-$k$ has a significantly slower convergence speed.
We attribute this to the fact that Top-$k$ has a large error accumulation in the initial $500$ iterations, while the error magnitude for hard-threshold is less than $0.04$ throughout training (cf. Figure \ref{fig:topk_error_accumulation}c). Our results with DNN training also reflect this insight (\S\ref{sec:experiments}).

Moreover, the hard-threshold sparsifier has computational benefits over Top-$k$ sparsifier, as hard-threshold's underlying filtering operation requires $d$ comparisons in each iteration, where $d$ is the number of parameters. In contrast, Top-$k$ is a compute-intensive sparsifier (e.g., on CPU, the computational complexity is $\mathcal{O}(d\log_{2}k)$~\cite{ShanbhagMIT2018}). For GPUs, several optimized implementations are proposed but they rely on the data distribution and are efficient only for a small $k$~\cite{ShanbhagMIT2018}. For instance, PyTorch uses Radix select algorithm which has a computational complexity of $\mathcal{O}\left(\lceil{b}/{r}\rceil~d\right)$ where $b$ is the number of bits to represent gradient values and $r$ is the radix size~\cite{pytorch}.

Finally, while the hard-threshold sparsifier already exists in the literature \cite{Strom15,layer-wise}, we are the first to formally study it and theoretically demonstrate its benefits as an adaptive counterpart of Top-$k$. Moreover, our argument in favor of hard-threshold precisely falsifies the claim by Dryden et al. \cite{Dryden2016CommunicationQF} that stopped its widespread adoption --- \emph{a hard-threshold may lead to a degenerate situation when the EF in gradient compression builds up.}

This paper makes the following contributions:

\myNum{i}\smartparagraph{Communication complexity model (\S\ref{sec:communication_complexity_model}).} We propose a communication complexity model that captures the effects of compression in the entire optimization process. We allow for variable communication in each iteration by only imposing a total communication budget. We show that the hard-threshold sparsifier is the communication-optimal sparsifier in this model.\\ 
\myNum{ii}\smartparagraph{Absolute compressors (\S\ref{sec:absolute_compressors}).} We identify that the hard-threshold sparsifier, along with other existing compressors \cite{Dettmers20158, switchml}, belongs to the class of \emph{absolute compressors}, which have an absolute bound on the error. Absolute compressors have not been formally studied before with EF. We show that absolute compressors with EF converge for both strongly convex and non-convex loss functions. In both cases, absolute compressors enjoy the same asymptotic convergence with linear speedup (with respect to the number of workers) as no-compression SGD. Notably, $\delta$-contraction operators \cite{stich2018sparsified}, the class in which Top-$k$ belongs, do not have the same asymptotic convergence and linear speedup property in the strongly convex case. This was previously considered as a drawback of EF in general over unbiased compression \cite{horvath2021a}. Moreover, for the non-convex case, $\delta$-contraction operators have a worse dependence on $\delta$ in the distributed setting with heterogeneous data, while absolute compressors do not have such an anomaly.\\
\myNum{iii}\smartparagraph{Experiments (\S\ref{sec:experiments}).} We conduct diverse experiments on both strongly convex and non-convex (for DNNs) loss functions to substantiate our claims. Our DNN experiments include computer vision, language modeling, and recommendation tasks, and our strongly convex experiment is on logistic regression. We find that the hard-threshold sparsifier is consistently more communication-efficient than the Top-$k$ sparsifier given the same communication budget.

\section{Related work}
\smartparagraph{Gradient compression} techniques can be broadly classified into quantization \cite{alistarh2017qsgd, DBLP:conf/interspeech/SeideFDLY14,Dettmers20158, ef-sgd, DBLP:conf/nips/WenXYWWCL17}, sparsification \cite{aji_sparse,lin2018deep, WangniWLZ18}, hybrid compressors \cite{Qsparse-local-SGD, Strom15, Dryden2016CommunicationQF}, and low-rank methods \cite{PowerSGD, ATOMO}.~The state-of-the-art compressors are biased $\delta$-contraction operators~\cite{PowerSGD,lin2018deep}, see \S\ref{sec:discussion}.~We refer to \cite{grace} for a recent survey and quantitative evaluation of these techniques.

\smartparagraph{Error-feedback}~(EF) or memory was first empirically used in \cite{DBLP:conf/interspeech/SeideFDLY14, Strom15}.~However, \cite{stich2018sparsified, ef-sgd, stich2019error} were the first to give a convergence analysis of the EF framework, which was extended to the distributed setup in \cite{beznosikov2020biased,zheng2019communication}. 
Recently, \cite{cser2020} proposed {\em error-reset}, a different form of EF, while~\cite{horvath2021a} introduce another alternative by communicating a compressed version of the error. \cite{gorbunov2020linearly} propose and analyze multiple algorithms combining EF with variance reduction in a general frame-work. 

\smartparagraph{Communication-optimal compression.} \cite{gandikota2019vqsgd, safaryan2020uncertainty, wei2020trilemma, alyazeed2020optimal} devise a communication-optimal compressor by minimizing the worst-case compression factor\footnote{For a vector $x$ and a possibly randomized compression operator $\cC$, we denote the compression error as $\mbE_{\cC}[\norm{x-\cC(x)}^2]$, and compression factor as $\mbE_{\cC}[\norm{x-\cC(x)}^2]/\norm{x}^2$.} under a per-vector communication budget.

\smartparagraph{Adaptive compression.}
\cite{WangniWLZ18} designed an adaptive sparsifier that minimizes expected sparsity of the compressed vector under a given variance budget. While AdaQS \cite{guo2020adaqs} periodically doubles the quantization states in QSGD \cite{alistarh2017qsgd} to reduce the compression factor, DQSGD~\cite{yan2021dqsgd} sets the number of quantization states proportional to the gradient norm. ACCORDION \cite{agarwal2020accordion} chooses a low compression ratio if training is in the critical regime, and a high compression ratio otherwise.

\smartparagraph{Efficient Top-$k$ estimation} is a focus of many recent works.
While \cite{lin2018deep} estimates the Top-$k$ threshold on a randomly sampled subset, \cite{shi2019understanding, sayed2021sidco} estimate it by fitting parametric statistical distributions. \cite{dctgupta2020} estimates the threshold every $1,000$ iterations. These works determine a threshold to approximately determine a \emph{fixed} Top-$k$ set, while we use a hard-threshold to determine the $k$ in each iteration.

\section{Background on Error-Feedback (EF) SGD}

Consider the following distributed optimization problem with $n$ workers:\useshortskip
\begin{equation}\label{eq:opt}
\textstyle    \min_{x\in\R^d} f(x) \coloneqq \frac{1}{n}\sum_{i\in[n]}f_i(x) ,
\end{equation}
where $f_i(x)=\mbE_{z_i \sim \cD_i}{l(x;z_i)}$ denotes the loss function evaluated on input $z_i$ sampled from $\cD_i$, the data-distribution at the $i^{th}$ worker. Let $g_{i,t}$ denote the stochastic gradient computed at $i^{th}$ worker at iteration $t$ such that $\textstyle g_{i,t} = \nabla f_i(x_{t}) + \xi_{i,t}$ with $\mbE [\xi_{i,t}| x_{t}] = \mathbf{0}$.

Algorithm \ref{algo:ef-sgd} gives the pseudo-code of compressed EF SGD \cite{stich2018sparsified, ef-sgd, Alistarh18_sparse} to solve \eqref{eq:opt}. Let $e_{i,t}$ denote the locally accumulated error at $i^{th}$ worker due to compression from previous steps. Adding this error to the current gradient, $\gamma_{t} g_{i,t}$ provides the corrected update, $p_{i,t}$. This corrected update is further compressed and exchanged with other machines, and the local error is updated for the next step.
\begin{algorithm}
    \SetAlgoLined
    \SetKwFor{ForP}{for}{in parallel do}
    \DontPrintSemicolon
    \SetInd{0.5em}{0.75em}
    \kwInput{$\cC$-compressor}
    Initialize $x_{0}\in \R^d$\;
    \ForP{worker $i \in [n]$}{
        Initialize error $e_{i,0}=0$\;
        \For{iteration $t = 0,\hdots, T-1$}{
        $p_{i,t}\leftarrow e_{i, t} + \gamma_{t} g_{i,t}$ \tcc*[f]{Incorporate EF into update}\;
        $\Delta_{i,t}\leftarrow \gamma_{t}\cC(\frac{p_{i,t}}{\gamma_{t}})$ \;
        $e_{i,t+1}\leftarrow p_{i,t}-\Delta_{i,t}$ \tcc*[f]{Update error}\;
        $\bar{\Delta}_t=\frac{1}{n}\sum_{i\in [n]} \Delta_{i,t}$ \tcc*[f]{Exchange and average $\Delta_{i,t}$ among workers}\;
        $x_{t+1} \leftarrow x_{t} - \bar{\Delta}_t$\;
     }
    }
\caption{Distributed EF SGD}\label{algo:ef-sgd}
\end{algorithm}
\begin{remark}
In Algorithm \ref{algo:ef-sgd}, we have $\Delta_{i,t}\leftarrow \gamma_{t}\cC(\frac{p_{i,t}}{\gamma_{t}})$, while \cite{stich2018sparsified, ef-sgd, Alistarh18_sparse} consider $\Delta_{i,t}\leftarrow \cC(p_{i,t})$. We do this to extend the EF framework for absolute compressors. We note that Algorithm \ref{algo:ef-sgd} is more general as $\gamma_{t}\cC(\frac{p_{i,t}}{\gamma_{t}})$ is equivalent to $\cC(p_{i,t})$ for all known $\delta$-contraction operators.
\end{remark}

\vspace{-7pt}
\subsection{Assumptions}\label{sec:assumptions}
We consider the following general assumptions on the loss function.
\begin{assumption}\label{ass:smoothness}
\textbf{(Smoothness)} The function, $f_i: \R^d\to \R$ at each worker, $i \in [n]$ is $L$-smooth, i.e., for every $x,y\in\R^d$ we have, $f_i(y)\leq f_i(x)+\dotprod{\nabla f_i(x), y-x}+\frac{L}{2}\norm{y-x}^2$. 
\end{assumption}
\begin{assumption}\label{ass:minimum}
\textbf{(Global minimum)} 
There exists $x_\star$ such that, $f(x_\star)=f^\star\leq f(x)$, for all $x\in\R^d.$  
\end{assumption}
\begin{assumption}\label{ass:m-sigma2-bounded_noise}
\textbf{(($M, \sigma^2$) bounded noise)} \cite{stich2019error} For every stochastic noise $\xi_{i,t}$, there exist $M, \sigma^2\geq0$, such that 
$\mbE [\norm{\xi_{i,t}}^2 \mid x_{t}] \leq M\norm{\nabla f_i(x_{t})}^2+\sigma^2$, for all $x_t\in\R^d$.
\end{assumption}
\begin{remark}\label{remark:noise model}
The above assumption implies $\mbE[{\norm{g_{i,t}}^2}\mid x_{t}]\leq (M+1)\norm{\nabla f_i(x_{t})}^2+\sigma^2$. This general noise model does not uniformly bound the second moment of stochastic gradients as in \cite{ef-sgd,zheng2019communication,Qsparse-local-SGD}.
\end{remark}
\begin{assumption}\label{ass:bounded_similarity}
\textbf{($(C, \zeta^2)$ bounded similarity)} The variance of gradients among workers is bounded, i.e., there exist constants, $C,\zeta\geq0$ such that,
$\frac{1}{n}\sum_{i \in [n]}\norm{\nabla f_i(x) - \nabla f(x)}^2 \leq C\norm{\nabla f(x)}^2 +\zeta^2,$ for all $x\in\R^d$.
\end{assumption}
\begin{remark}
If all the workers have the same training data, all $f_i$ are the same, resulting in $C,\zeta=0$. This assumption is an extension from \cite{lian2017can, jiang2018linear}, which consider $C=0$.
\end{remark}
For the \emph{convergence of strongly convex functions}, we require an additional assumption as follows.
\begin{assumption}\label{ass:mu-strong-convex}
\textbf{($\mu$-strong convexity)} The functions, $f_i:\R^d\to\R$ are 
$\mu$-strongly convex, i.e., there exists $\mu\geq0$, such that
$    f_i(y) \geq f_i(x)+ \dotprod{\nabla f_i(x), y-x}+ \frac{\mu}{2}\norm{x-y}^2,$ for all $x\in\R^d$.
\end{assumption}

\smartparagraph{Convergence of EF-SGD.} The following result shows the convergence of EF-SGD \cite{ef-sgd} in minimizing general smooth functions for a single worker $(n=1)$ case. 
\begin{theorem} \label{thm:ef-gen} \cite{ef-sgd, stich2019error} 
Let Assumption \ref{ass:smoothness}, \ref{ass:minimum} and \ref{ass:m-sigma2-bounded_noise} hold. Then Algorithm \ref{algo:ef-sgd} with a constant step-size, $\gamma$ where $\gamma\leq \frac{1}{2L(M+1)}$ and $n=1$ follows
\begin{eqnarray*}
\frac{1}{T}\sum_{t=0}^{T-1}\mbE\norm{\nabla f(x_t)}^2 \le \frac{4(f(x_0)-f^{\star})}{\gamma T} +2\gamma L \sigma^2+  2L^2\sum_{t=0}^{T-1}\frac{\gamma^2\mbE\norm{ g_t + \frac{e_t}{\gamma}-\cC(g_t + \frac{e_t}{\gamma})}^2}{T}.
\end{eqnarray*}
\end{theorem}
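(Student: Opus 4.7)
The plan is to apply the perturbed (virtual) iterate technique that is standard in error-feedback analyses. Define the virtual iterate $\tilde{x}_t \eqdef x_t - e_t$. Using the updates $e_{t+1} = p_t - \Delta_t = \gamma g_t + e_t - \gamma \cC(g_t + e_t/\gamma)$ and $x_{t+1} = x_t - \gamma \cC(g_t + e_t/\gamma)$, a direct calculation gives $\tilde{x}_{t+1} = \tilde{x}_t - \gamma g_t$, so the virtual iterate performs \emph{clean} SGD. Note also that $e_0=0$, hence $\tilde{x}_0 = x_0$. This is the key step that reduces the analysis to a standard smooth-SGD argument with an $O(L^2\|e_t\|^2)$ perturbation term.

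Next I would apply $L$-smoothness on $\tilde{x}_{t+1}$ to obtain
\begin{equation*}
 f(\tilde{x}_{t+1}) \le f(\tilde{x}_t) - \gamma \langle \nabla f(\tilde{x}_t), g_t\rangle + \tfrac{L\gamma^2}{2}\|g_t\|^2,
\end{equation*}
take conditional expectation (using $\mbE[g_t\mid x_t] = \nabla f(x_t)$), and split the inner product via the identity $2\langle a,b\rangle = \|a\|^2 + \|b\|^2 - \|a-b\|^2$ with $a = \nabla f(\tilde x_t)$ and $b=\nabla f(x_t)$. Smoothness then gives $\|\nabla f(\tilde x_t) - \nabla f(x_t)\|^2 \le L^2\|e_t\|^2$, while Assumption~\ref{ass:m-sigma2-bounded_noise} yields $\mbE\|g_t\|^2 \le (M+1)\|\nabla f(x_t)\|^2 + \sigma^2$. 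Combining these gives
\begin{equation*}
 \mbE f(\tilde x_{t+1}) \le \mbE f(\tilde x_t) - \tfrac{\gamma}{2}\|\nabla f(x_t)\|^2 + \tfrac{L\gamma^2(M+1)}{2}\|\nabla f(x_t)\|^2 + \tfrac{\gamma L^2}{2}\|e_t\|^2 + \tfrac{L\gamma^2\sigma^2}{2},
\end{equation*}
after discarding the non-positive $-\tfrac{\gamma}{2}\|\nabla f(\tilde x_t)\|^2$ term. The step-size restriction $\gamma \le \tfrac{1}{2L(M+1)}$ is then used precisely to absorb the $\|\nabla f(x_t)\|^2$ coefficient, leaving $-\tfrac{\gamma}{4}\|\nabla f(x_t)\|^2$ on the right-hand side.

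To finish, I telescope from $t=0$ to $T-1$, use $\tilde x_0=x_0$ and $f(\tilde x_T)\ge f^\star$, divide through by $\gamma T/4$, and arrive at
\begin{equation*}
 \tfrac{1}{T}\sum_{t=0}^{T-1}\mbE\|\nabla f(x_t)\|^2 \le \tfrac{4(f(x_0)-f^\star)}{\gamma T} + 2\gamma L\sigma^2 + \tfrac{2L^2}{T}\sum_{t=0}^{T-1}\mbE\|e_t\|^2 .
\end{equation*}
Finally I rewrite the accumulated-error sum using $e_{t+1} = \gamma\bigl[g_t + e_t/\gamma - \cC(g_t+e_t/\gamma)\bigr]$, so $\|e_{t+1}\|^2 = \gamma^2\|g_t + e_t/\gamma - \cC(g_t+e_t/\gamma)\|^2$; together with $e_0=0$ this gives $\sum_{t=0}^{T-1}\|e_t\|^2 \le \sum_{t=0}^{T-1}\|e_{t+1}\|^2$, yielding the third term in the stated bound.

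The main obstacle is bookkeeping around the two gradient norms $\|\nabla f(\tilde x_t)\|^2$ and $\|\nabla f(x_t)\|^2$: one must use the $2\langle a,b\rangle$ identity (rather than a naive Young's inequality) so that the smoothness bound on $\nabla f(\tilde x_t)-\nabla f(x_t)$ cleanly turns into an $L^2\|e_t\|^2$ term, and one must choose $\gamma \le \tfrac{1}{2L(M+1)}$ tightly enough that the $L\gamma^2(M+1)\|\nabla f(x_t)\|^2$ contribution from the noise assumption is absorbed without spoiling the $\tfrac{4(f(x_0)-f^\star)}{\gamma T}$ constant. Every other step is a direct application of the assumptions and the definitions from Algorithm~\ref{algo:ef-sgd}.
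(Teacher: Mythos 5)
Your proposal is correct and follows essentially the same route as the paper: the paper proves Theorem~\ref{thm:ef-gen} by specializing its non-convex descent lemma (Lemma~\ref{lem:descent_lemma}, built on the virtual iterate $\tilde{x}_t = x_t - e_t$ of Lemma~\ref{lem:virtual_iterate}, the second-moment bound from Assumption~\ref{ass:m-sigma2-bounded_noise}, and absorbing the $L\gamma^2(M+1)\norm{\nabla f(x_t)}^2$ term via the step-size restriction) to $n=1$, telescoping, and shifting the error-sum index using $e_0=0$, exactly as you do. The only cosmetic difference is that you handle the cross term with the polarization identity and drop $-\tfrac{\gamma}{2}\norm{\nabla f(\tilde{x}_t)}^2$, whereas the paper writes $-\dotprod{\nabla f(\tilde{x}_t),\nabla f(x_t)} = -\norm{\nabla f(x_t)}^2 + \dotprod{\nabla f(x_t)-\nabla f(\tilde{x}_t),\nabla f(x_t)}$ and applies Young's inequality; both yield identical constants.
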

\begin{remark}\label{rem:total-error}
Theorem \ref{thm:ef-gen} is a simplified version of the distributed case for $n>1$ and quoted to emphasize the effect of compression between the error-corrected gradient, $g_t + \frac{e_t}{\gamma}$, and its compressed form, $\cC(g_t + \frac{e_t}{\gamma})$. The term that \emph{solely accounts for the effect of compression in the entire training process} is the \emph{total-error}: $\sum_{t=0}^{T-1}\mbE\norm{e_{t+1}}^2=\sum_{t=0}^{T-1}\gamma^2\mbE\norm{ g_t + \frac{e_t}{\gamma}-\cC(g_t + \frac{e_t}{\gamma})}^2$.
\end{remark}

\section{A communication complexity perspective to sparsification}\label{sec:communication_complexity_model}
We now propose a communication complexity model and contrast it with existing communication-optimal strategies. We start with a sparse approximation problem that we encounter in our subsequent discussions.
\subsection{A sparse approximation problem}
Let $p\in\R^m$ be a given vector.~We want to approximate $p$ with a sparse vector, $q$, that has at most $0<\tau\le m$ non-zero elements. Formally, we write the {\em constrained sparse approximation} problem as:
\begin{equation}\label{eq:sparse_approx_problem}
    \textstyle    q^\star=\arg\min_{q \in \R^{m}} \norm{p-q}^2
     \quad \text{subject to } \|q\|_0 \leq \tau ,
\end{equation}
where $\|\cdot\|_0$ denotes the number of non-zero elements in a vector.
~Problem \eqref{eq:sparse_approx_problem} and its variants are well studied and arise in signal processing \cite{donoho2006compressed,bryan2013making, dutta_thesis} and matrix approximation \cite{stewart1993early, boas2017shrinkage}. 

\begin{lemma}\label{sol:sparse_approx_problem}
The solution $q^\star$ to \eqref{eq:sparse_approx_problem} is obtained by keeping Top-$\tau$ magnitude entries from $p$ and setting the rest to zeros.
\end{lemma}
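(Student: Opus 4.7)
The plan is to decouple the two nested choices in problem \eqref{eq:sparse_approx_problem}: first the choice of \emph{support} $S \subseteq \{1,\dots,m\}$ with $|S|\le \tau$, and then, given $S$, the choice of values for $q$ restricted to that support. For any fixed $S$, I would write the objective as
\[
\|p-q\|^2 \;=\; \sum_{i\in S}(p_i - q_i)^2 \;+\; \sum_{i\notin S} p_i^2,
\]
using that $q_i = 0$ whenever $i\notin S$. Since the second sum does not depend on $q$, the inner minimization over $q$ is a coordinatewise unconstrained least-squares problem on $S$, solved in closed form by $q_i = p_i$ for $i\in S$. Substituting this back, the optimal value given support $S$ is exactly $\sum_{i\notin S}p_i^2$.

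Next I would reduce the original problem to the combinatorial problem
\[
\min_{S:\,|S|\le \tau}\;\sum_{i\notin S} p_i^2 \;=\; \|p\|^2 \;-\; \max_{S:\,|S|\le \tau}\;\sum_{i\in S} p_i^2,
\]
so minimizing the residual is equivalent to maximizing $\sum_{i\in S}p_i^2$ over index sets of size at most $\tau$. Because each term $p_i^2$ is nonnegative, it is never worse to use $|S|=\tau$, and a classical exchange argument applies: if some $S$ contains an index $j$ with $p_j^2$ smaller than $p_\ell^2$ for some $\ell\notin S$, swapping $j$ for $\ell$ weakly increases the sum. Hence the maximum is attained by taking $S$ to be the indices of the $\tau$ largest values of $|p_i|$, i.e., the Top-$\tau$ magnitude entries of $p$.

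Combining the two steps, the optimizer $q^\star$ has $q^\star_i = p_i$ on the Top-$\tau$ magnitude support of $p$ and $q^\star_i = 0$ elsewhere, which is exactly the claimed form. I do not anticipate a genuine obstacle here; the only subtlety worth a line is the tie-breaking case, where several coordinates of $p$ share the threshold magnitude. In that case the optimal support is non-unique, but any choice of $\tau$ indices among the tied largest-magnitude coordinates yields the same optimal value, so the lemma statement (existence of such an optimum) is unaffected.
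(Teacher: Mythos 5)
Your proof is correct and complete: the support/value decoupling, the closed-form inner minimization $q_i=p_i$ on $S$, the exchange argument for the support, and the remark on ties are all sound. The paper itself gives no proof of this lemma --- it states it and defers to standard references on sparse approximation --- so there is nothing to compare against; your argument is precisely the standard one that those references would supply.
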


\subsection{Minimizing the total-error is not possible}\label{sec:cannot_min_total_error}
Let $\fC$ denote the class of all compressors. 
We constrain to the class of deterministic sparsifiers, denoted by $\mS \subset \fC$, but one can similarly consider other subclasses in $\fC$. For each $x\in\R^d$, a deterministic sparsifier, $\cC_p$ with sparsification parameter, $p$ determines a sparse support set, $S_p(x)\subseteq[d]$ and sparsifies as
\begin{equation*}\label{eq:def_sparse}
 \textstyle   \cC_p(x) = \sum_{i \in S_p(x)}x[i] e_i, 
\end{equation*}
where $e_i$ denotes the $i^{th}$ standard basis in $\R^d$, and $x[i]$ denotes the corresponding element in $x$. For example, for hard-threshold sparsifier, $\cC_{\lambda}$, we have $S_{{\lambda}}(x) = \{i \mid |x[i]|\geq\lambda\}.$ Motivated by Theorem \ref{thm:ef-gen} and Remark \ref{rem:total-error}, we now propose the following  communication complexity model:
\begin{eqnarray}\label{eq:opt_comm_total}
    \min_{\cC \in \mS} \sum_{t=0}^{T-1} && \mbE\norm{ g_t + \frac{e_t}{\gamma}-\cC(g_t + \frac{e_t}{\gamma})}^2 \quad \text{subject\;to\;} \sum_{t=0}^{T-1}\norm{\cC(g_t + \frac{e_t}{\gamma})}_0 \leq K,
\end{eqnarray}
where $K$ is the budget on the number of elements communicated in $T$ iterations. However, solving \eqref{eq:opt_comm_total} is intractable since the optimization process is dynamic with new gradient and error depending on the previous gradients and errors, and involves randomness due to stochastic gradients. 

\subsection{Top-$k$ is communication-optimal for a per-iteration $k$ element budget}\label{sec:topk}
To simplify \eqref{eq:opt_comm_total}, one can focus individually at the error at each iteration. Based on this, we show in Lemma \ref{lemma:top-k} that Top-$k$ has the best compression error among all sparsifiers under a per-iteration $k$-element communication budget. 
\begin{lemma}\label{lemma:top-k}
Given the gradient $g_t$ and error $e_t$ at iteration $t$, Top-$k$ sparsifier achieves the optimal objective for the optimization problem:
\begin{equation}\label{eq:per-iteration-optimal}
    \min_{\cC\in\mS} \norm{g_t + \frac{e_t}{\gamma}-\cC(g_t + \frac{e_t}{\gamma})}^2 \quad \text{subject\;to\;} \norm{\cC(g_t + \frac{e_t}{\gamma})}_0 \leq k.
\end{equation}
\end{lemma}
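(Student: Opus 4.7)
The plan is to reduce the per-iteration problem \eqref{eq:per-iteration-optimal} to an instance of the constrained sparse approximation problem \eqref{eq:sparse_approx_problem} and invoke Lemma \ref{sol:sparse_approx_problem}. For brevity write $p \eqdef g_t + \frac{e_t}{\gamma} \in \R^d$, so that \eqref{eq:per-iteration-optimal} reads $\min_{\cC\in\mS} \|p-\cC(p)\|^2$ subject to $\|\cC(p)\|_0 \le k$.

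The key observation is that for any deterministic sparsifier $\cC\in\mS$, by definition $\cC(p) = \sum_{i\in S(p)} p[i]\, e_i$ for some support set $S(p)\subseteq[d]$, and the constraint $\|\cC(p)\|_0\le k$ forces $|S(p)|\le k$. Conversely, for \emph{any} index set $S\subseteq[d]$ with $|S|\le k$, the map $x\mapsto \sum_{i\in S} x[i]\,e_i$ is a valid element of $\mS$. Hence as $\cC$ ranges over the feasible set of \eqref{eq:per-iteration-optimal}, the output $\cC(p)$ ranges over \emph{exactly} the set of $k$-sparse vectors in $\R^d$ whose nonzero coordinates coincide (in value) with the corresponding coordinates of $p$.

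With this identification, \eqref{eq:per-iteration-optimal} becomes
\begin{equation*}
    \min_{q\in\R^d}\; \|p-q\|^2 \quad \text{subject to } \|q\|_0 \le k,
\end{equation*}
which is precisely problem \eqref{eq:sparse_approx_problem} with $m=d$ and $\tau=k$. Applying Lemma \ref{sol:sparse_approx_problem} yields that the optimum is attained by retaining the Top-$k$ magnitude entries of $p = g_t + \frac{e_t}{\gamma}$ and zeroing out the rest, which is by definition the Top-$k$ sparsifier evaluated at $g_t + \frac{e_t}{\gamma}$.

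There is no real obstacle here beyond the bijection between feasible sparsifiers (modulo their action on $p$) and $k$-sparse coordinate restrictions of $p$; once that is noted, optimality follows immediately from Lemma \ref{sol:sparse_approx_problem}. The only mild subtlety worth spelling out is that the minimizer need not be unique (ties in the magnitudes of $p$ allow for several Top-$k$ choices), but any such choice achieves the same optimal objective, which is all that the statement asserts.
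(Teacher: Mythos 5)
Your proof is correct and takes essentially the same route as the paper, which justifies the lemma in one sentence by reducing \eqref{eq:per-iteration-optimal} to the sparse approximation problem \eqref{eq:sparse_approx_problem} and invoking Lemma \ref{sol:sparse_approx_problem}; your write-up merely makes explicit the identification between feasible sparsifiers and $k$-sparse coordinate restrictions of $p$, and correctly observes that the relaxed minimizer is attained within that restricted feasible set.
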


Similar to Lemma \ref{sol:sparse_approx_problem}, \eqref{eq:per-iteration-optimal} is solved when $\cC(g_t + \frac{e_t}{\gamma})$ contains the $k$ highest magnitude elements of $g_t + \frac{e_t}{\gamma}$. That is, when $\cC$ is the Top-$k$ sparsifier. Additionally, based on the above model, a per-iteration $k$-element communication budget (resulting in a total budget of $kT$ elements throughout training), implies that Top-$k$ is performed at each iteration. However, to have a more communication-efficient compression, we require a communication complexity model that \myNum{i} better captures total-error in Theorem \ref{thm:ef-gen}; and \myNum{ii} allows for adaptive communication, i.e., sends variable data in each iteration.

\subsection{A communication complexity model for adaptive sparsification}

Although the total-error cannot be minimized (\S\ref{sec:cannot_min_total_error}), Lemma \ref{lemma:top-k} motivates us to consider a {\em simplified model} that can capture the total-error. Instead of $(g_t + \frac{e_t}{\gamma})_{t=0}^{T-1}$, we consider a fixed sequence $(a_t)_{t=0}^{T-1}$ and examine the following communication complexity model:
\begin{equation}\label{eq:opt_comm_total_fixed}
 \textstyle   \min_{\cC \in \mS} \sum_{t = 0}^{T-1} \norm{a_t-\cC(a_t)}^2 
     \quad \text{subject\;to\;} \sum_{t = 0}^{T-1}\|\cC(a_t)\|_0 \leq K,
\end{equation}
where $K\in\N$ denotes the total communication budget. 

Let $\cA\in\R^{d T}$ be formed by stacking $(a_t)_{t=0}^{T-1}$ vertically and consider the following sparse approximation problem:
\begin{equation}\label{eq:matrix_opt_comm_total_fixed}
\textstyle    \min_{B \in \R^{dT}}\norm{\cA-B}^2
     \quad \text{subject\;to\;} \|B\|_0 \leq K,
\end{equation}
Note that \eqref{eq:matrix_opt_comm_total_fixed} allows for all $B$ that are formed by stacking $(\cC(a_t))_{t=0}^{T-1}$ vertically, for some sparsifier $\cC$ satisfying $\sum_{t=0}^{T-1}\|\cC(a_t)\|_0 \leq K$. Therefore, the optimal objective for \eqref{eq:matrix_opt_comm_total_fixed} is a lower bound to the optimal objective for \eqref{eq:opt_comm_total_fixed}. Let $\cA_{(i)}$ denote the element with $i^{th}$ largest magnitude in $\cA$, and assume no two elements have the same magnitude, i.e., $\cA_{(i+1)}\neq \cA_{(i)},$ for all $i\in[dT]$. Then, $B=\cC_{\lambda}(A)$ with $\lambda \in \left(A_{\left(K+1\right)}, A_{\left(K\right)}\right]$ contains the Top-$K$ magnitude entries from $A$, and therefore, by Lemma \ref{sol:sparse_approx_problem} is optimal for \eqref{eq:matrix_opt_comm_total_fixed}. Moreover, since hard-threshold is an element-wise sparsifier, $\cC_{\lambda}(A)$ is equivalent to stacking $(\cC_{\lambda}(a_t))_{t=0}^{T-1}$ vertically. Therefore, $\cC_{\lambda}$ with $\lambda=\left(\cA_{\left(K+1\right)}, \cA_{\left(K\right)}\right]$ achieves optimal objective in \eqref{eq:opt_comm_total_fixed}. The following lemma formalizes this.

\begin{lemma}\label{lemma:hard-threshold}
$\cC_{\lambda}$ is optimal for the communication complexity model \eqref{eq:opt_comm_total_fixed}. That is, for every budget $K$, there exists a $\lambda\ge0$ such that $\cC_{\lambda}$  minimizes \eqref{eq:opt_comm_total_fixed}.
\end{lemma}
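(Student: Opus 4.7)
The plan is a sandwich argument: lower bound the optimal value of \eqref{eq:opt_comm_total_fixed} by a relaxed global sparse approximation, then exhibit a hard-threshold sparsifier that attains this lower bound. Concretely, I would stack $a_0,\ldots,a_{T-1}$ vertically into a single vector $\cA \in \R^{dT}$ and observe that any feasible sparsifier $\cC \in \mS$ for \eqref{eq:opt_comm_total_fixed} yields a stacked image $B = (\cC(a_0);\ldots;\cC(a_{T-1})) \in \R^{dT}$ satisfying $\|B\|_0 \le K$ and $\|\cA - B\|^2 = \sum_{t=0}^{T-1}\|a_t - \cC(a_t)\|^2$. Hence the optimal value of the unconstrained sparse approximation \eqref{eq:matrix_opt_comm_total_fixed} is a lower bound for the optimal value of \eqref{eq:opt_comm_total_fixed}.

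Next I would invoke Lemma \ref{sol:sparse_approx_problem} applied to $\R^{dT}$: the minimum of \eqref{eq:matrix_opt_comm_total_fixed} is attained by zeroing all but the Top-$K$ magnitude entries of $\cA$. To realize this minimizer via a hard-threshold sparsifier, I would pick $\lambda \in (\cA_{(K+1)}, \cA_{(K)}]$, which is nonempty under the distinct-magnitudes assumption stated in the preceding paragraph. Because $\cC_\lambda$ acts coordinate-wise --- retaining $x[i]$ iff $|x[i]|\ge \lambda$ --- applying it separately to each $a_t$ and restacking reproduces exactly the Top-$K$ vector selected in $\R^{dT}$. Therefore $\cC_\lambda$ is feasible for \eqref{eq:opt_comm_total_fixed} and meets the lower bound, establishing optimality. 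This gives the desired $\lambda\ge0$ whose existence the lemma claims.

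The main point that needs care, rather than a deep obstacle, is confirming that the relaxation \eqref{eq:matrix_opt_comm_total_fixed} does not give away too much: allowing an arbitrary $K$-sparse $B\in\R^{dT}$ a priori admits ``cross-iteration'' sparsity patterns that no single entry-wise sparsifier could match. The key observation that closes the gap is that the relaxed optimum already has the block structure $B^\star = (\cC_\lambda(a_0);\ldots;\cC_\lambda(a_{T-1}))$, because the hard-threshold rule with threshold $\lambda\in(\cA_{(K+1)},\cA_{(K)}]$ picks out precisely the Top-$K$ coordinates of $\cA$ regardless of which block they lie in.

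For the tie case $\cA_{(K)} = \cA_{(K+1)}$, I would simply take $\lambda$ just above the tied value. This may result in fewer than $K$ nonzeros (still feasible), while the resulting objective equals the Top-$K$ value of the relaxed problem since any tie-breaking among equal-magnitude coordinates is optimal. Hence $\cC_\lambda$ remains optimal without the distinct-magnitude assumption, completing the argument.
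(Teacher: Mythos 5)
Your main argument is exactly the paper's proof: stack the $a_t$ into $\cA\in\R^{dT}$, lower-bound \eqref{eq:opt_comm_total_fixed} by the relaxed problem \eqref{eq:matrix_opt_comm_total_fixed}, solve the latter via Lemma \ref{sol:sparse_approx_problem}, and observe that the element-wise nature of $\cC_\lambda$ with $\lambda\in(\cA_{(K+1)},\cA_{(K)}]$ lets the stacked Top-$K$ solution decompose back into per-iteration sparsifications --- including your (correct and worthwhile) emphasis that the relaxation's optimum happens to have the required block structure. One caveat: your closing tie-case remark is not right. If $\cA_{(K)}=\cA_{(K+1)}=v>0$, taking $\lambda$ just above $v$ drops \emph{every} coordinate of magnitude $v$, including those that belong to the Top-$K$ set, so the objective exceeds the Top-$K$ optimum by $(K-K')v^2$ where $K'<K$ is the number of coordinates of magnitude strictly greater than $v$; meanwhile a deterministic sparsifier in $\mS$ that breaks the tie (keeping exactly $K$ coordinates) is feasible and attains the strictly smaller Top-$K$ value, so $\cC_\lambda$ is not optimal there. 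This is precisely why the paper imposes the distinct-magnitude assumption rather than dispensing with it.
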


\subsection{Discussion}\label{sec:discussion}
To capture the effect of compression, existing works \cite{alistarh2017qsgd, stich2018sparsified, ef-sgd} use a bound on the \emph{compression factor}, $\max_{x\in\R^d} \frac{\mbE_{\cC} \norm{\cC(x)-x}^2}{\norm{x}^2}$. We formally define them as relative compressors.

\begin{definition}\textbf{Relative Compressor} \cite{alistarh2017qsgd,stich2018sparsified}\label{defn:rel_comp}. An operator, $\cC:\R^d\to\R^d$ is a {\em relative compressor} if for all vector, $x\in\R^d$ it satisfies 
\useshortskip\begin{eqnarray}\label{eq:quantization}
 \textstyle \mathbb{E}_{\cC}\|x-\cC(x)\|^2\le \Omega \|x\|^2, 
\end{eqnarray}
where $\Omega >0$ is the compression factor and the expectation, $\mbE_{\cC}$, is taken with respect to the randomness of $\cC$. {\em $\delta$-contraction operators} \cite{stich2018sparsified,ef-sgd} with $\Omega=1-\delta$ and $\delta\in(0,1]$, are special cases of relative compressors. 
\end{definition}

Top-$k$ is a $\delta$-contraction operator with $\delta=\frac{k}{d}$ \cite{stich2018sparsified}. 
Therefore, by \eqref{eq:quantization}, Top-$k$ allows for larger compression error with larger inputs. Our communication complexity model demonstrates that this might not necessarily be a good idea. Moreover, with EF, a large error at any iteration has a cascading effect --- a large $e_t$ results in a large $\gamma g_t + e_t$, out of which only $k/d$ fraction of the total components are kept by the Top-$k$ strategy.  
This results in a large $e_{t+1}$ (see \S\ref{sec:large_error_topk}). Figure \ref{fig:topk_error_accumulation} shows that this \emph{error-buildup} has severe implications on the total-error. On the other hand, the hard-threshold performs a variable Top-$k$ in each iteration and sends an element as soon as its magnitude is bigger than the threshold. This prohibits the error build-up. 

\smartparagraph{Comparison with existing communication-optimal compression strategies.} 
Since the compression factor, $\Omega$ solely determines the effect of compression in convergence \cite{horvath2021a, stich2019error}, many recent works \cite{gandikota2019vqsgd, safaryan2020uncertainty, wei2020trilemma, alyazeed2020optimal} propose communication-optimal compression strategies by optimizing for $\Omega$ under a {\em communication budget} for each vector, i.e., they propose to solve
\begin{equation}\label{eq:complexity_model_fixed}
    \min_{\cC \in \fC} \max_{x \in \R^d}  \frac{\mbE_{\cC}\norm{x-\cC(x)}^2}{\norm{x}^2} \quad \text{subject\;to\; \texttt{Bits}}(\cC(x))\leq B,
\end{equation}
where \texttt{Bits}$(\cC(x))$ denotes the number of bits needed to encode $\cC(x)$.
We stress that while the compression affected term in Theorem \ref{thm:ef-gen} has the sum of compression errors over the iterations, the above communication complexity model only captures the compression factor.

\vspace{-0.5em}
\section{Absolute compressors and their convergence}\label{sec:absolute_compressors}
Motivated by the previous section, we formally define \emph{absolute compressors}---compressors that have an absolute bound on the error.

\begin{definition}\textbf{Absolute Compressor.} An operator, $\cC:\R^d\to\R^d$ is an {\em absolute compressor} if there exists a $\kappa>0$ such that for all vector, $x\in\R^d$ it satisfies
\useshortskip\begin{eqnarray}\label{eq:absolute_compression}
 \textstyle \mathbb{E}_{\cC}\|x-\cC(x)\|^2\le \kappa^2.
\end{eqnarray}
\end{definition}
In contrast to the relative compressors in \eqref{eq:quantization}, the compression error (or variance) of absolute compressors is bounded by a constant, independent of $x$. 
Based on the above definition, hard-threshold is an absolute sparsifier with $\kappa^2=d\lambda^2$. The stochastic rounding schemes with bounded rounding error in \cite{Gupta_2015} are absolute compressors~(used for model quantization).~Precisely, any rounding scheme, with rounding error bounded by $\epsilon$, is an absolute compressor with $\kappa^2=d\epsilon^2$.~Similarly, the scaled integer rounding scheme in \cite{switchml} is an absolute compressor. While this class of compressors existed in the literature, we are the first to provide their convergence result with an EF. 

\subsection{Convergence results}

Inspired by \cite{stich2019error}, we establish convergence of EF-SGD (Algorithm \ref{algo:ef-sgd}) with absolute compressors. Convergence analysis for the momentum case \cite{zheng2019communication} can be extended similarly. However, we do not include it for brevity, and the existing analyses do not show any benefit over vanilla SGD.~Similarly, analysis for error-reset \cite{cser2020} and local updates \cite{Qsparse-local-SGD, cser2020} can also be extended. We provide convergence results for the convex and non-convex cases and compare them to $\delta$-contraction operators. We start with a bound on the error for absolute compressors. 
\begin{remark}\label{rem:bounded_memory}
\textbf{(Error bound)}
For all $i \in [n], t \in \{0, \hdots T-1\}$, we have
\begin{equation*}
  \textstyle  \mbE_{\cC} [\norm{e_{i,t+1}}^2 \mid p_{i,t}] = \mbE_{\cC} \norm{p_{i,t}-\gamma_t\cC(\frac{p_{i,t}}{\gamma_t})}^2 = \gamma_t^2\mbE_{\cC} \norm{\frac{p_{i,t}}{\gamma_t}-\cC(\frac{p_{i,t}}{\gamma_t})}^2 \leq \gamma_t^2 \kappa^2.
\end{equation*}
\end{remark}
A similar absolute bound for $\delta$-contraction operators requires the bounded gradient assumption \cite{ef-sgd,Qsparse-local-SGD}, but absolute compressors achieve this by design. 

\subsubsection{Convex convergence}

Throughout this section, we denote $P_{T}\eqdef\mbE[f(\bar{x}_T)]-f^{\star}$ and $R_T\eqdef\norm{x_T-x^{\star}}^2$ for $T\ge0$, where $\bar{x}_T=\frac{1}{W_T}\sum_{t=0}^Tw_tx_t$, and $W_T=\sum_{t=0}^Tw_t$ with $w_t\ge 0$. Further denote  $D\eqdef\frac{1}{n}\sum_{i=1}^n\norm{\nabla f_i(x^{\star})}^2$. With these notations, we quote the strongly convex ($\mu>0$) and convex ($\mu=0$) convergence results for absolute compressors, and compare them with the $\delta$-contraction operators from \cite{beznosikov2020biased} for distributed case ($n\ge1$). For exact choices of step-sizes and weights, see \S\ref{sec:abs-quasi}.

\begin{theorem}\label{thm:sc_abs}
Let $\mu>0$ and Assumptions \ref{ass:smoothness}, \ref{ass:minimum}, \ref{ass:m-sigma2-bounded_noise}, and \ref{ass:mu-strong-convex} hold. Then the iterates, $\{x_t\}_{t\geq0}$ of Algorithm \ref{algo:ef-sgd} with an absolute compressor, $\cC_{\upsilon}$, a constant setp-size, $\gamma(T)$ with $\gamma(T)\leq\frac{1}{4L(1+{2M}/n)}$ follow \footnote{The $\tilde{\cO}$ notation hides constants and factors polylogarithmic in the problem parameters.}
\useshortskip
\begin{equation*}
\textstyle P_T = \tilde{\cO}\left(L(1+M/n)R_0\exp\left[-\frac{\mu T}{8L(1+2M/n)}\right]+\frac{\sigma^2+MD}{\mu nT}+ \frac{L\kappa^2}{\mu^2T^2}\right).  \end{equation*}
\end{theorem}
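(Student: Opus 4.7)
The plan is to follow the \emph{perturbed iterate} analysis in the style of \cite{stich2018sparsified, stich2019error}, adapted to the absolute compressor setting, and in particular to exploit the key feature of absolute compressors identified in Remark \ref{rem:bounded_memory}: the error bound $\mbE\norm{e_{i,t+1}}^2 \le \gamma_t^2\kappa^2$ is \emph{independent} of the iterate $x_t$ and of $\nabla f_i(x_t)$, so no bounded-gradient assumption is needed. I would first introduce the virtual sequence $\tilde{x}_t \eqdef x_t - \bar{e}_t$ with $\bar{e}_t \eqdef \tfrac{1}{n}\sum_i e_{i,t}$ and verify, by unrolling the updates in Algorithm \ref{algo:ef-sgd}, that $\tilde{x}_{t+1} = \tilde{x}_t - \gamma\bar{g}_t$ with $\bar{g}_t \eqdef \tfrac{1}{n}\sum_i g_{i,t}$; so $\tilde{x}_t$ evolves exactly as plain SGD on $f$.

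Next I would derive a one-step recursion for $\mbE\norm{\tilde{x}_{t+1}-x^\star}^2$. Expanding the square and taking conditional expectation gives the usual three ingredients: an inner product $\langle \tilde{x}_t - x^\star, \nabla f(x_t)\rangle$, a variance term $\gamma^2 \mbE\norm{\bar{g}_t}^2$, and cross terms involving $\bar{e}_t$. I would split $\tilde{x}_t - x^\star = (x_t-x^\star) - \bar{e}_t$, apply $\mu$-strong convexity to the $x_t-x^\star$ part to extract both $-2\gamma(f(x_t)-f^\star)$ and $-\mu\gamma\norm{x_t-x^\star}^2$, and bound the residual cross term $2\gamma\langle \bar{e}_t,\nabla f(x_t)\rangle$ by Young's inequality with parameter $1/(2L)$, using $\norm{\nabla f(x_t)}^2 \le 2L(f(x_t)-f^\star)$ from smoothness. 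The inequality $\norm{x_t-x^\star}^2\ge \tfrac{1}{2}\norm{\tilde{x}_t-x^\star}^2 - \norm{\bar{e}_t}^2$ then converts the contraction from $x_t$ to $\tilde{x}_t$. For the variance term, decomposing $\bar{g}_t$ into mean and noise and using Assumption \ref{ass:m-sigma2-bounded_noise} gives $\mbE\norm{\bar{g}_t}^2 \le \norm{\nabla f(x_t)}^2 + \tfrac{M}{n^2}\sum_i\norm{\nabla f_i(x_t)}^2 + \tfrac{\sigma^2}{n}$; the standard convex-smoothness bound $\tfrac{1}{n}\sum_i\norm{\nabla f_i(x_t)}^2 \le 4L(f(x_t)-f^\star)+2D$ (using $\nabla f(x^\star)=0$) then puts everything in terms of $f(x_t)-f^\star$, $D$, and $\sigma^2$. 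Choosing $\gamma \le 1/[4L(1+2M/n)]$ absorbs the $\gamma^2\mbE\norm{\bar{g}_t}^2$ coefficient into $-\tfrac{\gamma}{2}(f(x_t)-f^\star)$.

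Collecting terms, and using Remark \ref{rem:bounded_memory} together with $\mbE\norm{\bar{e}_t}^2\le \tfrac{1}{n}\sum_i\mbE\norm{e_{i,t}}^2 \le \gamma^2\kappa^2$, yields a recursion of the form
\begin{equation*}
  \mbE R_{t+1}' \le \Bigl(1-\tfrac{\mu\gamma}{2}\Bigr)\mbE R_t' - \tfrac{\gamma}{2}\mbE\bigl[f(x_t)-f^\star\bigr] + \gamma^2\tfrac{\sigma^2+2MD}{n} + \gamma^3\kappa^2(2L+\mu),
\end{equation*}
where $R_t' \eqdef \norm{\tilde{x}_t-x^\star}^2$. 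This fits precisely the template $r_{t+1}\le(1-a\gamma)r_t - b\gamma s_t + c\gamma^2 + e\gamma^3$ with $a=\mu/2$, $b=1/2$, $c=(\sigma^2+2MD)/n$, $e=\Theta(L\kappa^2)$, and maximum step-size $\gamma_{\max}=1/[4L(1+2M/n)]$. Finally, I would invoke the standard weighted-averaging / step-size tuning lemma in the strongly convex setting (e.g.\ Lemma 2 of \cite{stich2019error}) with weights $w_t = (1-\mu\gamma/2)^{-(t+1)}$; this converts Jensen averaging of $\mbE[f(x_t)-f^\star]$ into $P_T = \mbE f(\bar{x}_T) - f^\star$ and, after tuning $\gamma(T)$, produces the three advertised terms $L(1+M/n)R_0\exp[-\mu T/(8L(1+2M/n))]$, $(\sigma^2+MD)/(\mu n T)$, and $L\kappa^2/(\mu^2T^2)$.

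The main obstacle, and the real novelty over the $\delta$-contraction analysis of \cite{beznosikov2020biased}, is to make sure that the compression-induced term appears as $\gamma^3\kappa^2$ rather than being entangled with $\norm{\nabla f_i(x_t)}^2$ or $\norm{x_t}^2$; this is precisely where the absolute bound in Remark \ref{rem:bounded_memory} pays off and is what produces the $1/T^2$ rate on the compression term. The only bookkeeping subtlety will be aligning the index $t$ vs.\ $t-1$ on $\gamma$ inside the error bound, which for a constant step-size $\gamma(T)$ is immaterial.
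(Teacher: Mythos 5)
Your proposal follows essentially the same route as the paper's proof: the perturbed-iterate sequence $\tilde{x}_t = x_t - \bar{e}_t$, the strongly convex descent lemma (Lemma \ref{lem:scd}, adapted from \cite{beznosikov2020biased}) with the same Young's-inequality, strong-convexity, and Bregman-divergence bounds you describe, the absolute error bound of Remark \ref{rem:bounded_memory} to turn the $\norm{\bar{e}_t}^2$ term into $\gamma^3\kappa^2$, and a weighted-averaging/step-size-tuning lemma with weights $w_t=(1-\mu\gamma/2)^{-(t+1)}$. The only cosmetic difference is that the paper uses its own mild extension (Lemma \ref{lem:sc_order_bounds_const_lr}) of the tuning lemma from \cite{stich2019error} to accommodate the extra $b\gamma^2$ term, which yields the $L\kappa^2/(\mu^2T^2)$ rate exactly as you anticipate.
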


\begin{remark}\label{remark:delta strong}
Under the same setting as in Theorem \ref{thm:sc_abs}, iterates of Algorithm \ref{algo:ef-sgd} with $\delta$-contraction operators \cite{beznosikov2020biased} follow:
\begin{equation*}
\textstyle P_T = \tilde{\cO}\left(L(1/\delta+M)R_0\exp\left[-\frac{\mu T}{28L(2/\delta+M)}\right]+\frac{\sigma^2+D(\frac{1}{\delta}+\frac{M}{n})}{\mu T}\right).
\end{equation*}
\end{remark}
Remark \ref{remark:delta strong} indicates some key convergence behaviors of $\delta$-contraction operators for strongly convex case:\myNum{i} The compression slows down the slower decaying $\cO(\frac{1}{T})$ term, as $\frac{1}{\delta}$ occurs in it. \myNum{ii} They have slower exponential convergence with compression affecting the exponential term. 
\myNum{iii} They do not exhibit the linear speedup in the slower decaying $\cO(\frac{1}{T})$ term.  
The absence of a linear speedup was considered as a disadvantage of EF over unbiased relative compression \cite{horvath2021a}. As our result for absolute compressors with EF in Theorem \ref{thm:sc_abs} has the linear speedup property, we conclude the absence of a linear speedup is a drawback of $\delta$-contraction operators and not EF in general. See further comparison against unbiased compressors \cite{horvath2021a} in \S\ref{sec:unbiased-comp}.
\begin{theorem}\label{thm:weakly-convex}
Let $\mu=0$ and Assumptions \ref{ass:smoothness}, \ref{ass:minimum}, \ref{ass:m-sigma2-bounded_noise}, and \ref{ass:mu-strong-convex} hold. Then the iterates, $\{x_t\}_{t\geq0}$ of Algorithm \ref{algo:ef-sgd} with an absolute compressor, $\cC_{\upsilon}$, a constant setp-size, $\gamma(T)$ with $\gamma(T)\leq\frac{1}{4L(1+{2M}/n)}$ follow
\begin{equation*}
\textstyle P_T = \cO\left(\frac{\sqrt{(\sigma^2+MD)R_0}}{\sqrt{nT}}+\frac{\left(\frac{nL\kappa^2}{\sigma^2+MD}+L(1+M/n)\right)R_0}{T}\right).
\end{equation*}
\end{theorem}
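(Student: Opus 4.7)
My plan is to follow a virtual-iterate argument adapted to the convex (non-strongly-convex) regime under the absolute-error bound from Remark~\ref{rem:bounded_memory}. First, introduce $\tilde{x}_t := x_t - \bar{e}_t$ where $\bar{e}_t := \frac{1}{n}\sum_i e_{i,t}$, so that Algorithm~\ref{algo:ef-sgd} gives $\tilde{x}_{t+1} = \tilde{x}_t - \gamma \bar{g}_t$ with $\bar{g}_t := \frac{1}{n}\sum_i g_{i,t}$. This removes compression from the update law and shifts it into the perturbation $\bar{e}_t$, which Remark~\ref{rem:bounded_memory} (combined with Jensen for the average) controls as $\mbE\|\bar{e}_t\|^2 \leq \gamma^2 \kappa^2$.

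Second, expand $\|\tilde{x}_{t+1}-x^\star\|^2$, take expectations, and split the cross term as $\langle \nabla f(x_t), \tilde{x}_t-x^\star\rangle = \langle \nabla f(x_t), x_t-x^\star\rangle - \langle \nabla f(x_t), \bar{e}_t\rangle$. The first piece is at least $f(x_t) - f^\star$ by convexity. For the second, apply Young's inequality together with the smoothness-consequence $\|\nabla f(x_t)\|^2 \leq 2L(f(x_t) - f^\star)$ (since $\nabla f(x^\star)=0$) to get an upper bound of $\tfrac{1}{2}(f(x_t)-f^\star) + L\|\bar{e}_t\|^2$. For the $\gamma^2\mbE\|\bar g_t\|^2$ term, decompose into the deterministic $\|\nabla f(x_t)\|^2$ plus the independent per-worker noise contributing $\tfrac{M}{n^2}\sum_i\|\nabla f_i(x_t)\|^2 + \sigma^2/n$; bound the sum by co-coercivity, $\tfrac{1}{n}\sum_i\|\nabla f_i(x_t)\|^2 \leq 4L(f(x_t)-f^\star) + 2D$. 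Using $\gamma \leq 1/(4L(1+2M/n))$ absorbs a fraction of the $f(x_t)-f^\star$ coefficients, leaving the recursion
\begin{equation*}
\mbE\|\tilde{x}_{t+1}-x^\star\|^2 \leq \mbE\|\tilde{x}_t-x^\star\|^2 - \tfrac{\gamma}{2}\,\mbE[f(x_t)-f^\star] + 2\gamma^3 L\kappa^2 + \gamma^2\,\tfrac{\sigma^2+2MD}{n}.
\end{equation*}

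Third, telescope from $t=0$ to $T-1$ (using $\bar{e}_0=0$ so $\|\tilde{x}_0-x^\star\|^2=R_0$), divide by $\gamma T/2$, and apply Jensen's inequality with uniform weights to pass from $\frac{1}{T}\sum_t \mbE[f(x_t)-f^\star]$ to $P_T = \mbE[f(\bar{x}_T)] - f^\star$, yielding
\begin{equation*}
P_T \leq \frac{2R_0}{\gamma T} + 4\gamma^2 L\kappa^2 + 2\gamma\,\frac{\sigma^2+2MD}{n}.
\end{equation*}
Fourth, pick $\gamma = \min\!\bigl(\tfrac{1}{4L(1+2M/n)},\;\sqrt{nR_0/(T(\sigma^2+2MD))}\bigr)$ and invoke a standard step-size-tuning lemma (cf.\ Stich--Karimireddy) to collect terms into the claimed rate; the noise-optimal choice turns the $\gamma^2 L\kappa^2$ contribution into $\tfrac{nLR_0\kappa^2}{T(\sigma^2+MD)}$, which is exactly the unusual prefactor appearing in the statement.

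The main obstacle is the coupling between $\bar{e}_t$ and $\nabla f(x_t)$ without strong convexity: there is no $\mu\|x_t-x^\star\|^2$ available to absorb a quadratic perturbation, so the cross term $\langle \nabla f(x_t), \bar{e}_t\rangle$ must be handled via smoothness (routing through $f(x_t)-f^\star$) rather than distance-to-optimum, which is what injects the extra $L$ into the $\gamma^3 L\kappa^2$ term and explains the $\tfrac{nL\kappa^2}{\sigma^2+MD}$ prefactor after tuning $\gamma$. Secondary care is needed to ensure the noises $\xi_{i,t}$ are cross-worker independent so the variance of $\bar{g}_t$ carries the $1/n$ linear-speedup factor, and to promote the conditional bound of Remark~\ref{rem:bounded_memory} to a uniform (unconditional) bound on $\mbE\|e_{i,t}\|^2$ by straightforward iteration.
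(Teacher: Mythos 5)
Your proposal is correct and follows essentially the same route as the paper: the perturbed-iterate sequence $\tilde{x}_t = x_t - \bar{e}_t$, a descent recursion obtained by combining convexity for the $\langle \nabla f(x_t), x_t - x^\star\rangle$ term, Young's inequality plus $\|\nabla f(x_t)\|^2 \le 2L(f(x_t)-f^\star)$ for the error cross term, and co-coercivity for $\mbE\|\bar g_t\|^2$, followed by the absolute error bound $\mbE\|\bar e_t\|^2 \le \gamma^2\kappa^2$, telescoping, and a three-term step-size-tuning lemma (the paper's Lemma~\ref{lem:order_bounds_cubic}). The only differences are immaterial constants (e.g., $2\gamma^3 L\kappa^2$ versus the paper's $3L\gamma_t^3\kappa^2$), and your identification of the $\tfrac{nL\kappa^2}{\sigma^2+MD}$ prefactor as the image of the $\gamma^2$ term under the noise-optimal step size matches the paper's tuning exactly.
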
 
\begin{remark}
Theorem \ref{thm:weakly-convex} holds when both $\sigma^2$ and $D$ are not simultaneously zero. Typically, we encounter heterogeneous data settings where $D\neq0$, and Theorem \ref{thm:weakly-convex} holds. In case both $\sigma^2$ and $D$ are zero, we get $\cO(\frac{\left(L\kappa R_0\right)^{\frac{2}{3}}}{T^{\frac{2}{3}}})$ convergence. 
\end{remark}
\begin{remark}\label{remark:delta convex}
Under the same setting as in Theorem \ref{thm:weakly-convex}, iterates of Algorithm \ref{algo:ef-sgd} with $\delta$-contraction operators follow \cite{beznosikov2020biased}
\begin{equation*}
\textstyle P_T = \cO\left(\frac{\sqrt{(\sigma^2+D(\frac{1}{\delta}+\frac{M}{n}))R_0}}{\sqrt{T}}+\frac{L(1/\delta+M)R_0}{T}\right).
\end{equation*}
\end{remark}
Remark \ref{remark:delta convex} indicates some key convergence behaviors of $\delta$-contraction operators for convex case: \myNum{i} The compression slows down the slower decaying $\cO(\frac{1}{\sqrt{T}})$ term as $\frac{1}{\delta}$ occurs in it; \myNum{ii} $\delta$-contraction operators do not have linear speedup in the slower decaying  $\cO(\frac{1}{\sqrt{T}})$ term. 

Designing a variance-reduced algorithm by using absolute compressors with EF is orthogonal to this work.

\vspace{-0.5em}
\subsubsection{Non-convex convergence}\label{sec:non_convex_convergence}
\begin{theorem}\label{thm:non-convex-wo-bg}
\textbf{(Non-convex convergence of absolute compressors)} Let Assumptions \ref{ass:smoothness}, \ref{ass:minimum}, \ref{ass:m-sigma2-bounded_noise}, and \ref{ass:bounded_similarity} hold.~Then the iterates, $\{x_t\}_{t\geq0}$ of Algorithm \ref{algo:ef-sgd} with an absolute compressor and a constant setp-size $\textstyle{\gamma \leq \frac{n}{2L(M(C+1)+n)}}$ follow \useshortskip
\begin{eqnarray*}
 \textstyle \frac{1}{T}\sum_{t=0}^{T-1} \mbE \norm{\nabla f(x_t)}^2 \leq \frac{4(f(x_0)-f^{\star})}{\gamma T} +\frac{2\gamma L(M\zeta^2+\sigma^2)}{n} + 2\gamma^2 L^2\kappa^2.
\end{eqnarray*}
 \end{theorem}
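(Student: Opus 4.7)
The plan is to adapt the perturbed-iterate analysis of Stich et al.~by introducing the virtual sequence $\tilde x_t \eqdef x_t - \bar e_t$ with $\bar e_t \eqdef \frac{1}{n}\sum_{i\in[n]} e_{i,t}$. A direct bookkeeping on Algorithm \ref{algo:ef-sgd} gives $\bar\Delta_t + \bar e_{t+1} = \bar e_t + \gamma \bar g_t$ where $\bar g_t \eqdef \frac{1}{n}\sum_i g_{i,t}$, so that $\tilde x_{t+1} = \tilde x_t - \gamma \bar g_t$. Hence the virtual sequence performs ordinary SGD on $f$ while all compression artefacts are absorbed into the error terms, and $\tilde x_0 = x_0$ since $e_{i,0}=0$. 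Moreover $\mathbb{E}[\bar g_t\mid x_t] = \nabla f(x_t)$ by Assumption \ref{ass:m-sigma2-bounded_noise}.

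Next, I would apply $L$-smoothness at $\tilde x_t$ to get $f(\tilde x_{t+1}) \le f(\tilde x_t) - \gamma\langle \nabla f(\tilde x_t),\bar g_t\rangle + \tfrac{L\gamma^2}{2}\|\bar g_t\|^2$, condition on $x_t$, and split $\nabla f(\tilde x_t) = \nabla f(x_t) + (\nabla f(\tilde x_t)-\nabla f(x_t))$. Using $2\langle a,b\rangle \le \|a\|^2 + \|b\|^2$ together with smoothness on the deviation term yields
\begin{equation*}
-\gamma\,\mathbb{E}\langle\nabla f(\tilde x_t),\nabla f(x_t)\rangle \;\le\; -\tfrac{\gamma}{2}\mathbb{E}\|\nabla f(x_t)\|^2 - \tfrac{\gamma}{2}\mathbb{E}\|\nabla f(\tilde x_t)\|^2 + \tfrac{\gamma L^2}{2}\mathbb{E}\|\bar e_t\|^2.
\end{equation*}
For the second-moment term I combine Assumption \ref{ass:m-sigma2-bounded_noise} with the bounded-similarity Assumption \ref{ass:bounded_similarity} on the averaged sum to derive $\mathbb{E}[\|\bar g_t\|^2\mid x_t] \le \bigl(1 + \tfrac{M(C+1)}{n}\bigr)\|\nabla f(x_t)\|^2 + \tfrac{M\zeta^2+\sigma^2}{n}$; the $1/n$ in both noise and heterogeneity follows from averaging $n$ independent mini-batches.

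The step-size bound $\gamma \le \frac{n}{2L(M(C+1)+n)}$ is then exactly what forces $\frac{L\gamma}{2}\bigl(1+\tfrac{M(C+1)}{n}\bigr) \le \tfrac{1}{4}$, which lets me absorb the positive $\|\nabla f(x_t)\|^2$ contribution into the $-\tfrac{\gamma}{2}\|\nabla f(x_t)\|^2$ term and drop the non-positive $-\tfrac{\gamma}{2}\|\nabla f(\tilde x_t)\|^2$ term. The per-step inequality becomes
\begin{equation*}
\tfrac{\gamma}{4}\,\mathbb{E}\|\nabla f(x_t)\|^2 \;\le\; \mathbb{E} f(\tilde x_t) - \mathbb{E} f(\tilde x_{t+1}) + \tfrac{\gamma L^2}{2}\,\mathbb{E}\|\bar e_t\|^2 + \tfrac{L\gamma^2(M\zeta^2+\sigma^2)}{2n}.
\end{equation*}
Telescoping over $t = 0,\dots,T-1$ and using $\tilde x_0 = x_0$ together with Assumption \ref{ass:minimum} bounds the $f$-difference by $f(x_0)-f^{\star}$.

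The decisive step, and the whole point of working with an absolute compressor, is handling the $\sum_t \mathbb{E}\|\bar e_t\|^2$ term. By Jensen's inequality $\|\bar e_t\|^2 \le \frac{1}{n}\sum_i\|e_{i,t}\|^2$, and Remark \ref{rem:bounded_memory} provides the \emph{uniform} bound $\mathbb{E}\|e_{i,t}\|^2 \le \gamma^2\kappa^2$ with no dependence on $\|\nabla f_i(x_t)\|$, so the whole sum contributes only $2\gamma^2 L^2\kappa^2$ after dividing by $\gamma/4$. This is precisely where a $\delta$-contraction analysis would instead leave behind a $\|\nabla f(x_t)\|^2$-dependent error term that interacts with the heterogeneity constants $C,\zeta^2$. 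The only real care required is tracking factors of two in the Young-inequality split and in the step-size condition so that the absorption into the descent succeeds; everything else is routine rearrangement.
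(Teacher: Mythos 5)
Your proposal is correct and follows essentially the same route as the paper: the virtual iterate $\tilde x_t = x_t - \bar e_t$ (Lemma \ref{lem:virtual_iterate}), the second-moment bound \eqref{eq:acbk}, the descent recursion of Lemma \ref{lem:descent_lemma} with the same step-size absorption yielding the $-\tfrac{\gamma}{4}\mbE\norm{\nabla f(x_t)}^2$ term, and finally Jensen plus the uniform error bound of Remark \ref{rem:bounded_memory} before telescoping. The only cosmetic difference is that you invoke the polarization identity on $-\gamma\dotprod{\nabla f(\tilde x_t),\nabla f(x_t)}$ (retaining and then discarding the non-positive $-\tfrac{\gamma}{2}\mbE\norm{\nabla f(\tilde x_t)}^2$ term) where the paper applies Young's inequality to the cross term; the resulting per-step inequality and all constants are identical.
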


Alongside, we compare with the non-convex convergence for $\delta$-contraction operators in a distributed setting. The existing analyses tackle this by using a stronger {\em uniform bounded gradient} assumption \cite{zheng2019communication, ef-sgd, layer-wise}. We use weaker Assumption \ref{ass:m-sigma2-bounded_noise} and \ref{ass:bounded_similarity}, to establish the convergence analysis. 

\begin{theorem}\label{thm:non-convex-wo-bg-rel}
\textbf{(Non-convex convergence of $\delta$-contraction operators)} Let Assumptions \ref{ass:smoothness}, \ref{ass:minimum}, \ref{ass:m-sigma2-bounded_noise}, and \ref{ass:bounded_similarity} hold.~Then the iterates, $\{x_t\}_{t\geq0}$ of Algorithm \ref{algo:ef-sgd} with a $\delta$-compressor and a constant step-size $\textstyle{\gamma \leq \min\{\frac{n}{2L(M(C+1)+n)}, \frac{1}{2L(2/\delta+M)\sqrt{C+1}}\}}$ follow \useshortskip
\begin{eqnarray*}
 \textstyle \frac{1}{T}\sum_{t=0}^{T-1} \mbE \norm{\nabla f(x_t)}^2 \leq \frac{8(f(x_0)-f^{\star})}{\gamma T} + \frac{4 \gamma L(M\zeta^2+\sigma^2)}{n} +\frac{8\gamma^2L^2}{\delta}\left(\left(\frac{2}{\delta}+M\right)\zeta^2+\sigma^2\right).
\end{eqnarray*}
\end{theorem}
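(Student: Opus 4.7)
The plan is to apply the perturbed-iterate (virtual sequence) analysis of EF-SGD, extending from the single-worker setup of Theorem~\ref{thm:ef-gen} to the distributed heterogeneous case. I introduce the virtual iterate $\tilde{x}_t \eqdef x_t - \bar{e}_t$ with $\bar{e}_t=\tfrac{1}{n}\sum_i e_{i,t}$; using the algorithm updates and $\bar{e}_{t+1}=\bar{e}_t+\gamma\bar g_t-\bar\Delta_t$, direct manipulation gives $\tilde{x}_{t+1}=\tilde{x}_t-\gamma\bar g_t$, so the virtual iterate evolves as vanilla SGD and the compressor disappears from the dynamics. Applying $L$-smoothness of $f$ at $\tilde x_t$, splitting the inner product via $-\langle a,b\rangle = -\tfrac12\|a\|^2 - \tfrac12\|b\|^2 + \tfrac12\|a-b\|^2$, and using $\|\nabla f(\tilde x_t) - \nabla f(x_t)\|^2 \le L^2 \|\bar e_t\|^2$ together with the worker-averaged second-moment bound $\mbE\|\bar g_t\|^2 \le (1+\tfrac{M(C+1)}{n})\|\nabla f(x_t)\|^2 + \tfrac{M\zeta^2+\sigma^2}{n}$ (which follows from independence of the worker noises and Assumptions~\ref{ass:m-sigma2-bounded_noise},~\ref{ass:bounded_similarity}) yields a one-step descent inequality for $\mbE f(\tilde x_t)$. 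The first step-size bound $\gamma\le\tfrac{n}{2L(M(C+1)+n)}$ absorbs half the quadratic SGD term, so telescoping from $\tilde x_0=x_0$ to $\tilde x_T$ gives $\tfrac{\gamma}{4}\sum_t\mbE\|\nabla f(x_t)\|^2 \le f(x_0)-f^\star + \tfrac{\gamma L^2}{2}\sum_t\mbE\|\bar e_t\|^2 + \tfrac{L\gamma^2(M\zeta^2+\sigma^2)T}{2n}$.

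The distinctive new ingredient, absent from Theorem~\ref{thm:non-convex-wo-bg} because absolute compressors already give a deterministic $\gamma^2\kappa^2$ error bound, is a recursive control of $\tfrac{1}{n}\sum_i\mbE\|e_{i,t}\|^2$ under contraction. Starting from $\mbE\|e_{i,t+1}\|^2 \le (1-\delta)\|p_{i,t}\|^2$ with $p_{i,t}=e_{i,t}+\gamma g_{i,t}$ and applying Young's inequality with parameter $\tfrac{\delta}{2(1-\delta)}$, I obtain the geometric recursion $\mbE\|e_{i,t+1}\|^2 \le (1-\tfrac{\delta}{2})\mbE\|e_{i,t}\|^2 + \tfrac{2\gamma^2}{\delta}\mbE\|g_{i,t}\|^2$. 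Unrolling and swapping the order of summation yields $\sum_t\tfrac{1}{n}\sum_i\mbE\|e_{i,t+1}\|^2 \le \tfrac{4\gamma^2}{\delta^2}\sum_t\tfrac{1}{n}\sum_i\mbE\|g_{i,t}\|^2$; bounding the right-hand side via $\mbE\|g_{i,t}\|^2 \le (M+1)\|\nabla f_i(x_t)\|^2+\sigma^2$ and then Assumption~\ref{ass:bounded_similarity} turns it into a term proportional to $\sum_t\|\nabla f(x_t)\|^2$ plus one proportional to $T(\zeta^2+\sigma^2)$. Jensen gives $\|\bar e_t\|^2\le\tfrac{1}{n}\sum_i\|e_{i,t}\|^2$, so the summed error in the descent inequality is controlled.

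Plugging the error bound back in, the second step-size condition $\gamma\le\tfrac{1}{2L(2/\delta+M)\sqrt{C+1}}$ is precisely what is needed to absorb the residual $\sum_t\|\nabla f(x_t)\|^2$ contribution produced by the error (whose coefficient scales like $\gamma^3 L^2 (M+2/\delta)^2 (C+1)/\delta$) into the remaining $\tfrac{\gamma}{4}$ budget; dividing by $\gamma T/8$ and collecting terms yields the stated bound. The main obstacle I expect is the constant-matching inside the contraction-driven geometric sum: to arrive at the stated $\tfrac{8\gamma^2 L^2}{\delta}((\tfrac{2}{\delta}+M)\zeta^2+\sigma^2)$ form, one must split $\mbE\|g_{i,t}\|^2$ into its bias $\|\nabla f_i\|^2$ and variance $\mbE\|\xi_{i,t}\|^2$ pieces and feed each through the geometric sum separately, so that the pure noise floor enters only with a factor $1/\delta$ rather than $1/\delta^2$, and balance the two step-size constraints so that neither the SGD-variance term nor the error-induced gradient term overwhelms the available descent. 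Everything else parallels the absolute-compressor proof; the qualitatively new feature is precisely the contraction-based error recursion, which is what injects the extra $1/\delta$ factors and makes the heterogeneity dependence ($\zeta^2/\delta^2$) strictly worse than for absolute compressors.
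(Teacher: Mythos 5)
Your proposal follows essentially the same route as the paper: the virtual iterate $\tilde{x}_t = x_t - \bar{e}_t$ with the descent lemma under the first step-size condition, the contraction-based error recursion with Young's parameter $\rho=\tfrac{\delta}{2(1-\delta)}$ unrolled into a geometric sum, and the second step-size condition absorbing the error-induced $\sum_t\mbE\norm{\nabla f(x_t)}^2$ contribution into the $\tfrac{\gamma}{4}$ budget, yielding the factor $8$. The one refinement you correctly anticipate is indeed how the paper proceeds: the zero-mean noise $\xi_{i,t}$ must be split off from $p_{i,t}$ \emph{before} applying Young's inequality (so that only $\norm{e_{i,t}+\gamma\nabla f_i(x_t)}^2$ picks up the $(1+\rho^{-1})\le 2/\delta$ factor while the variance enters with coefficient $(1-\delta)\gamma^2$), which is what keeps the $\sigma^2$ floor at $1/\delta$ rather than $1/\delta^2$ and makes the stated constants come out.
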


The above result implies, in distributed settings with heterogeneous data ($\zeta\neq0$), $\delta$-contraction operators have an $1/\delta^2$ dependence on $\delta$, as compared to an $1/\delta$ dependence in the homogeneous case ($\zeta=0$). In contrast, absolute compressors have the same $\kappa^2$ dependence on $\kappa$ in homogeneous and heterogeneous cases. Therefore, we conjecture
it is beneficial to use absolute compressors in settings such as federated learning \cite{mcmahan2017communication}, where data heterogeneity is widely encountered. 

\begin{remark}\label{rem:non-convex-big-oh}
With appropriate choices of step-size, both absolute compressors and $\delta$-contraction operators with EF-SGD achieve the same $\cO(1/\sqrt{nT})$ asymptotic rate of SGD. See Corollary \ref{cor:big_oh} in \S\ref{apdx:non_convex_convergence} for the full result.
\end{remark}

\begin{figure*}
\centering
\begin{subfigure}{0.32\textwidth}		
		\includegraphics[width=\textwidth]{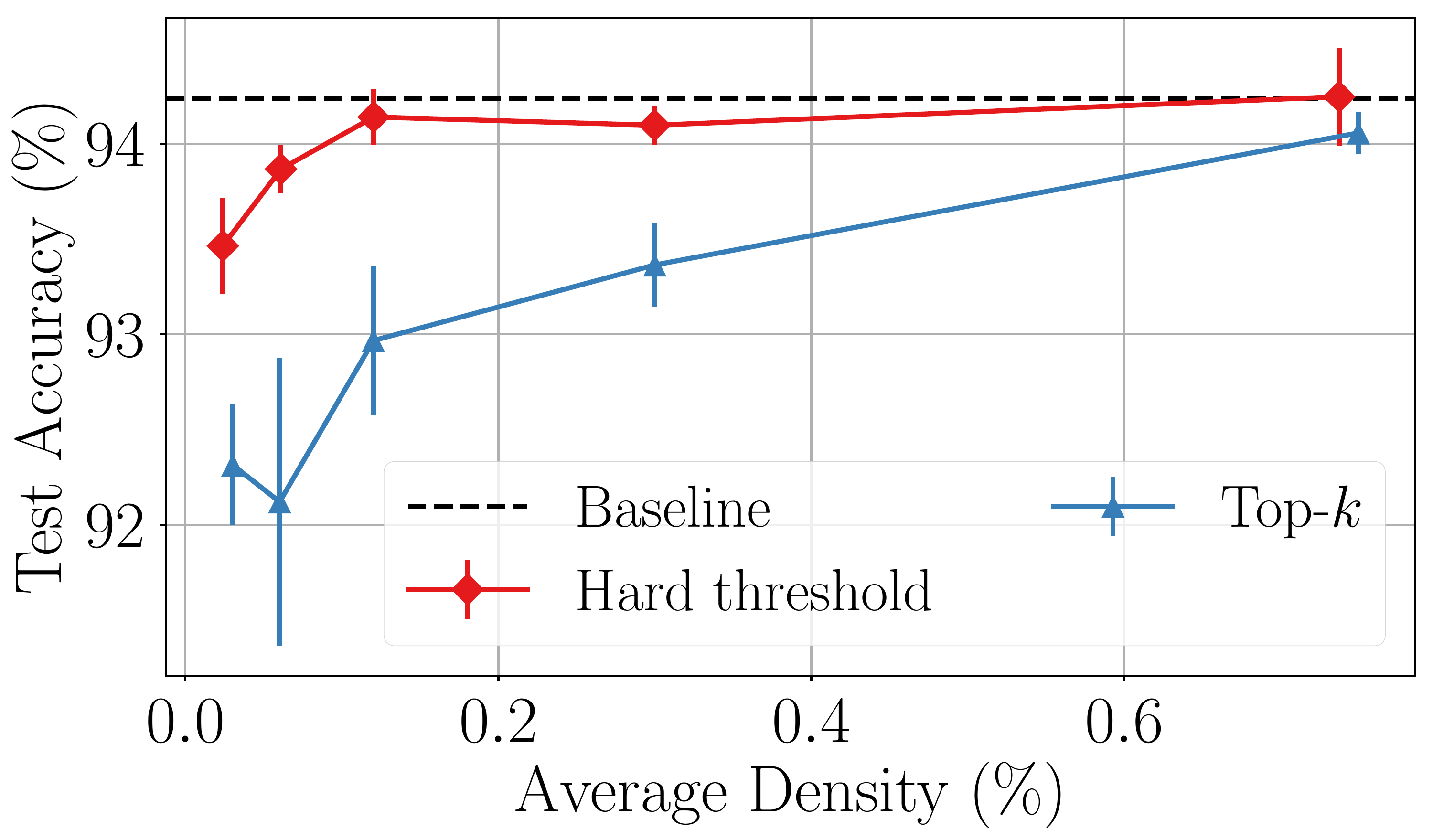}
		\caption{ResNet-18 on CIFAR-10}\label{fig:acc_vs_data_resnet}
	\end{subfigure}
\begin{subfigure}{0.32\textwidth}
		\includegraphics[width=\textwidth]{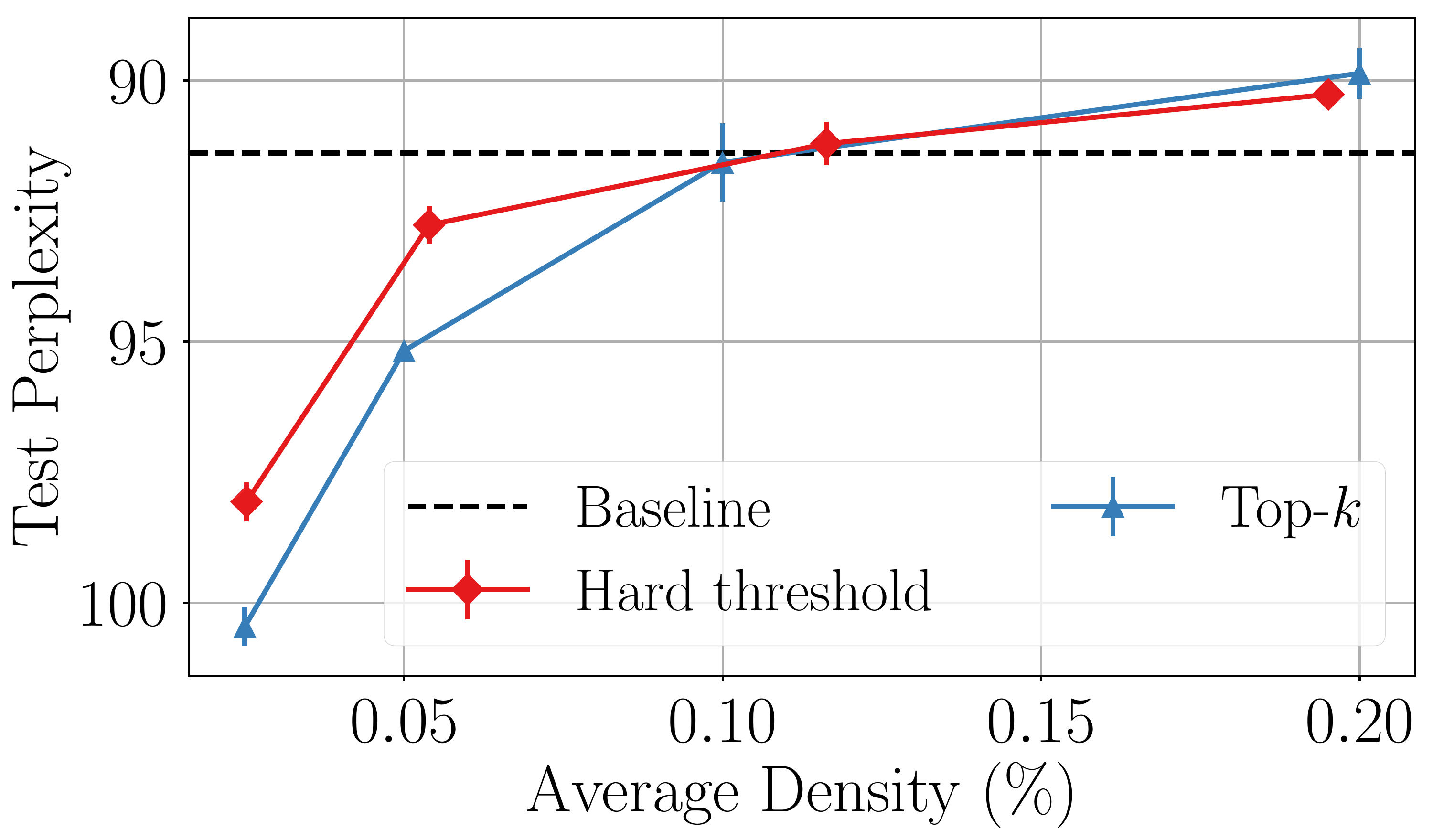}
		\caption{LSTM on Wikitext}
	\end{subfigure}
\begin{subfigure}{0.32\textwidth}
		\includegraphics[width=\textwidth]{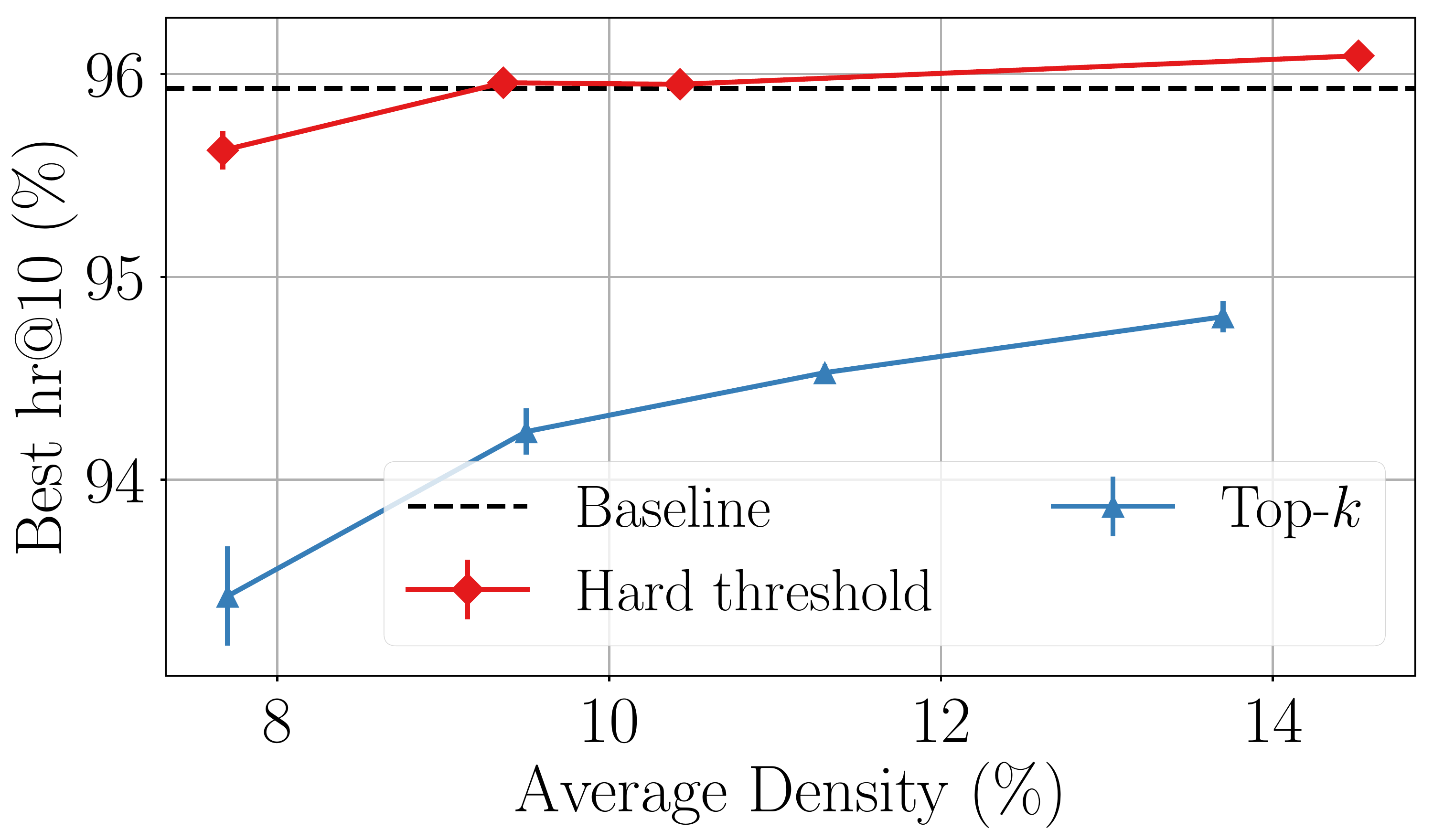}
		\caption{NCF on Movielens-20M}
	\end{subfigure}
\vspace{-1pt}
\caption{\small{\textbf{Test metric vs. Data Volume.} For 3 benchmarks, average test quality with std. dev. over 3 runs. The dashed black line denotes the no compression baseline.}}\label{fig:accuracy vs data-volume}
\end{figure*}

\begin{figure*}[!t]
\vspace{-1em}
\centering
\begin{subfigure}{0.32\linewidth}
		\includegraphics[width=\textwidth]{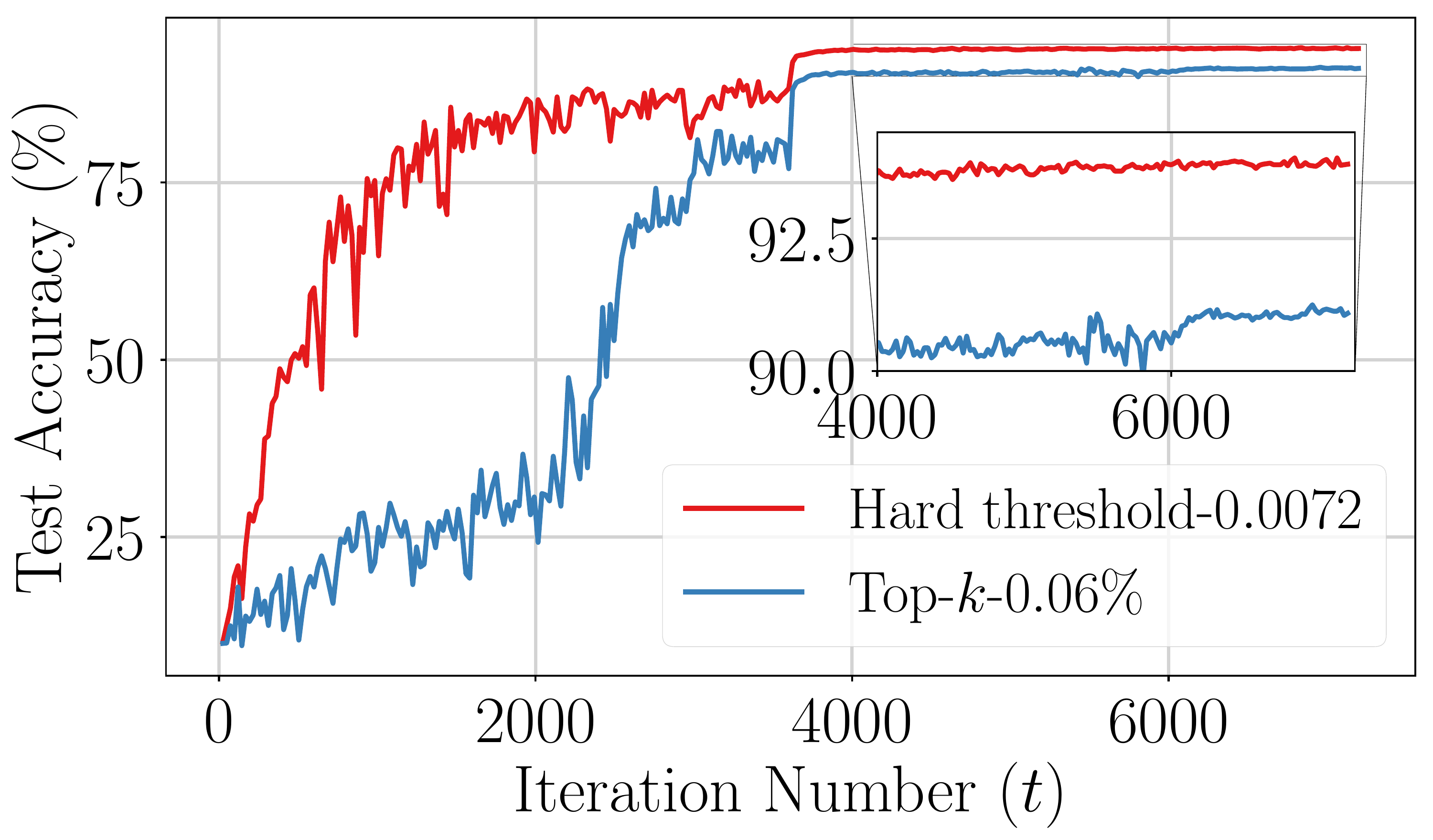}
				\caption{}\label{accuracy}
\end{subfigure}
\begin{subfigure}{0.32\linewidth}
		\includegraphics[width=\textwidth]{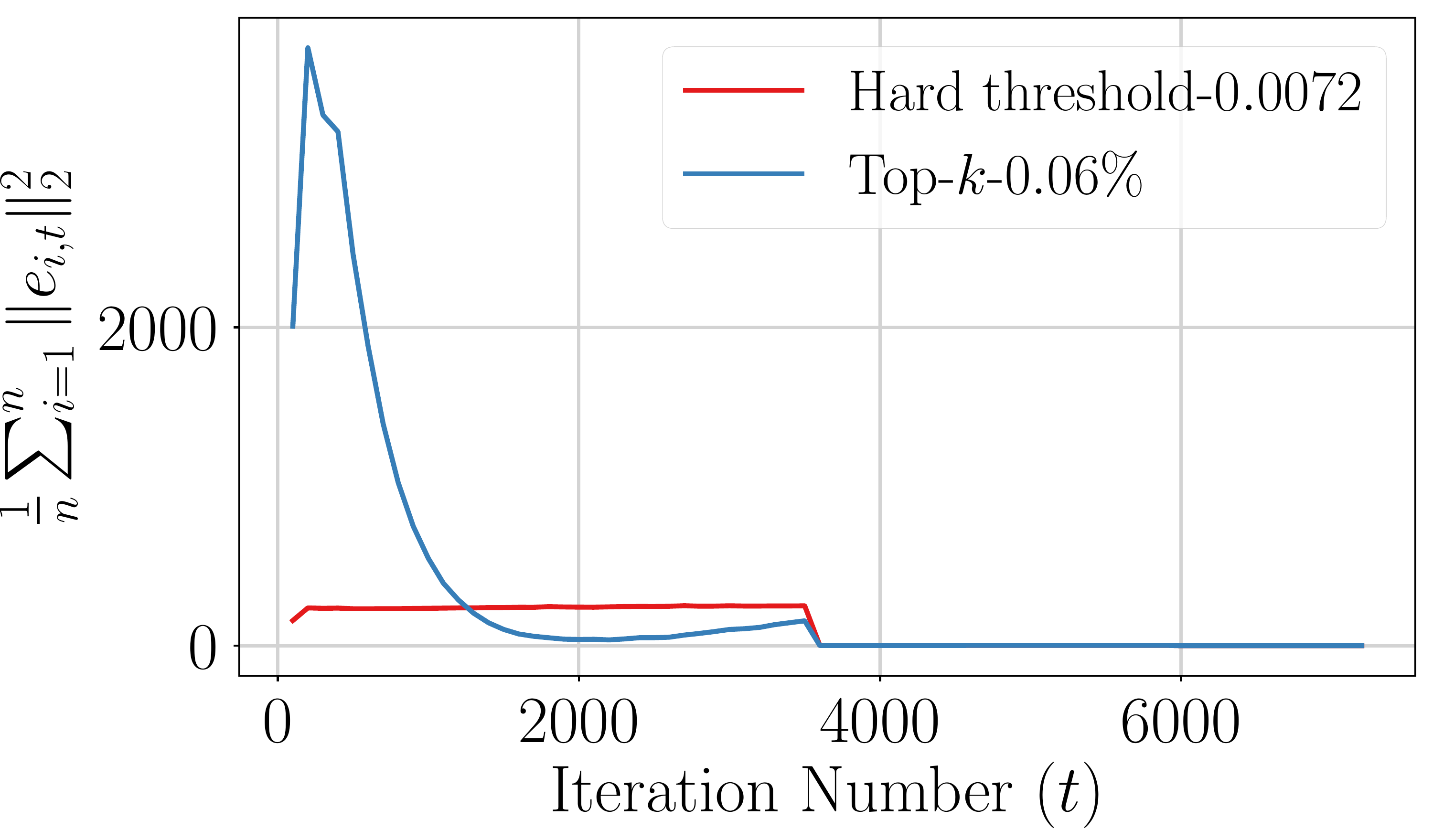}
			\caption{}\label{lr_scaled_mem}
\end{subfigure}
\begin{subfigure}{0.32\linewidth}
		\includegraphics[width=\textwidth]{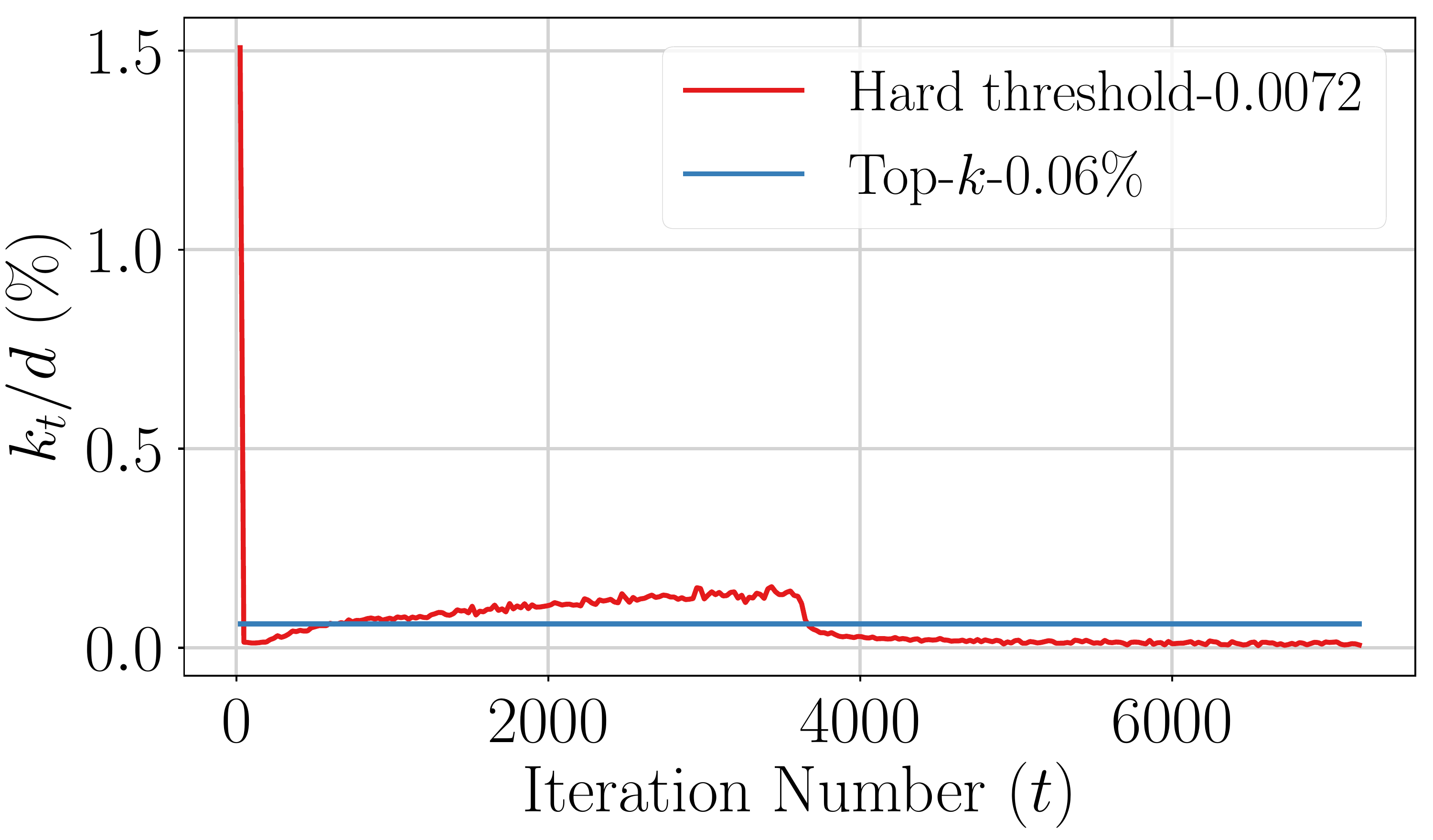}
		\caption{}\label{sparse}
\end{subfigure}
	\vspace{-5pt}
\caption{\small{\textbf{Convergence of Top-$k$ and Hard-threshold for ResNet-18 on CIFAR-10 at $\mathbf{0.06\%}$ average density:} ~(a)~Test-accuracy vs. Iterations,~(b)~Error-norm vs. Iterations, (c) Density ($k_t/d$) vs. Iterations. $k=0.06\%$ of $d$, and $\lambda$ = 0.0072. Hard-threshold has better convergence than Top-$k$ because of a smaller total-error.} }\label{fig:large_error_Topk_resnet}
\vspace{-1em}
\end{figure*}

\section{Experiments}\label{sec:experiments}

\smartparagraph{Experimental setup.}
We compare Top-$k$ and hard-threshold sparsifiers on image classification, language modelling, and recommendation tasks. We use different optimizers: vanilla SGD, SGD with Nesterov momentum, and ADAM \cite{adam}. All experiments were run on an 8-GPU cluster, using {\tt Allgather} as the communication primitive. We perform compression in the standard layer-wise fashion \cite{DBLP:conf/interspeech/SeideFDLY14, lin2018deep, layer-wise} and follow the EF strategy used in \cite{PowerSGD}.~For hyper-parameter configuration, comparison with entire-model compression, discussion on different EF approaches, experiments without EF, and experiments with logistic regression, we refer to Appendix~\ref{apdx:add_exps}.

\smartparagraph{Test metric vs. Data volume.} We tune the sparsification parameters for both sparsifiers such that they send similar total data volumes during training. We use \emph{average density}: $\frac{1}{T}\sum_{t=0}^{T-1}\frac{k_t}{d}$ as a measure of total data volume, where $k_t$ denotes the number of elements transmitted in iteration $t$.
Figure \ref{fig:accuracy vs data-volume} shows the average test quality across three repetitions with different initial random seeds. We observe that fixing the average density, \emph{hard-threshold consistently has better test performance} than Top-$k$. For ResNet-18 on CIFAR-10, we observe that hard-threshold at an average density of $0.12\%$ almost achieves the baseline accuracy and is better than Top-$k$ at $0.75\%$ density ($\sim 6\times$ more total data volume). For LSTM on Wikitext, at an average density of $0.025\%$, hard-threshold has $>2$ better perplexity than Top-$k$. For NCF on Movielens-20M, hard-threshold has $>1\%$ better Hit-Rate$@10$ at all considered average densities.

We now demonstrate that hard-threshold has faster convergence because of a smaller total-error in comparison to Top-$k$. In Figure \ref{fig:large_error_Topk_resnet}, we introspect a run with average density of $0.06\%$ from Figure \ref{fig:acc_vs_data_resnet}. In Figure \ref{fig:large_error_Topk_resnet}a, while hard-threshold converges to an accuracy of $93.9\%$, Top-$k$ achieves $91.1\%$ accuracy. At the same time, in Figure \ref{fig:large_error_Topk_resnet}b, we observe large error-accumulation in the initial $1,200$ iterations for Top-$k$. Consequently, hard-threshold has a significantly lower total-error than Top-$k$, and therefore has better convergence. This observation about large error accumulation for Top-$k$ is consistent across all our benchmarks (see \S\ref{sec:large_error_topk}).

\smartparagraph{Comparison against ACCORDION \cite{agarwal2020accordion}.}
We compare against the state-of-the-art adaptive sparsifier: ACCORDION \cite{agarwal2020accordion}. ACCORDION shifts between two user-defined $k$ values: $k_{\max}$ and $k_{\min}$, by using Top-$k_{\max}$ when the training is in a \emph{critical regime}, else using Top-$k_{\min}$. We compare against ACCORDION with threshold $\lambda=\frac{1}{2\sqrt{k_{\min}}}$. With this hard-threshold, we observe improved and more communication-efficient performance over ACCORDION (e.g., for CIFAR-10 dataset, up to $0.8 \%$ better accuracy with $4 \times$ fewer communicated bits). For these results see \S\ref{sec:accordion}.  

\smartparagraph{How to tune the hard-threshold?} 
Based on our results in \S\ref{sec:non_convex_convergence}, we suggest setting the threshold as $\lambda \sim \frac{1}{2\sqrt{k}}$ to achieve better convergence under similar total-data communication as Top-$k$ with parameter $k$ (see further discussion in \S\ref{sec:tuning_hard_threshold}). But it remains an open question how to tune the hard-threshold such that it achieves no-compression baseline performance with the least total-data transmission. We note that, as of now, this question remains unanswered for Top-$k$ as well.

\section{Conclusion}

We proposed a total-error perspective to compressed communication that captures the effect of compression during the entire training process. Under this, we showed that the hard-threshold sparsifier is more communication-efficient than the state-of-the-art Top-$k$ sparsifier. Our convex convergence result for absolute compressors, the class of compressors in which hard-threshold belongs, is the first to achieve the linear speed-up property of SGD by using the EF framework. As the EF framework is also applicable to Local SGD, we hope that this inspires more communication-efficient versions of Local SGD that adaptively determine when to communicate, 
rather than naively communicating in fixed intervals. 
Furthermore, similar to hard-threshold, we believe adaptive absolute compressor counterparts of quantization schemes and low-rank methods can also be developed.

\bibliographystyle{plain}
\bibliography{references.bib}

\clearpage

\newpage

\appendix

\tableofcontents

\section{Notations} In this paper, by $[d]$ we denote the set of $d$ natural numbers $\{1,2,\cdots, d\}$. We denote the $\ell_2$ norm of a vector $x\in\R^d$ by $\|x\|$, and the $\ell_1$ and $\ell_\infty$-norms are denoted by $\|x\|_1$ and $\|x\|_\infty$, respectively. By $\mathbf{0}$ we denote a vector of all 0s in $\R^d$. In the proofs, we use the notation $\mbE_t[\cdot]$ to denote expectation conditioned on the iterate, $x_t$, that is, $\mbE[\cdot|x_t]$.

\section{Convergence analysis}\label{sec:analysis}
In this section, we provide the proofs of convex and non-convex convergence results of the absolute compressors with EF, and compare them with that of the $\delta$-contraction operators, and vanilla SGD.
\subsection{Overview of results}
In \S\ref{sec:convergence_technical_results}, we provide the technical lemmas and inequalities necessary for the analyses. In \S\ref{apdx:non_convex_convergence} we provide the non-convex convergence results, and \S\ref{sec:abs-quasi} contains the convex convergence results.
\subsection{Technical results}\label{sec:convergence_technical_results}

\begin{lemma}
If $a,b\in\mathbb{R}^d$ then the Young's inequality is: For all $\rho>0$, we have
\begin{equation}\label{eq:youngs}
    \norm{a+b}^2 \leq (1+\rho)\norm{a}^2 + (1+\rho^{-1})\norm{b}^2.
\end{equation}
Alternatively,
\begin{equation}\label{eq:youngs2}
    2\dotprod{a,b} \leq \rho\norm{a}^2 + \rho^{-1}\norm{b}^2.
\end{equation}
\end{lemma}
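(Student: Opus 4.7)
The plan is to establish \eqref{eq:youngs2} first and then derive \eqref{eq:youngs} as an immediate consequence, since the two are essentially equivalent after expanding $\norm{a+b}^2$. Both forms are classical facts; no machinery beyond the non-negativity of a squared norm and the polarization identity is needed.

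For \eqref{eq:youngs2}, I would start from the trivially non-negative quantity $\norm{\sqrt{\rho}\,a - \rho^{-1/2} b}^2 \ge 0$, which is well-defined since $\rho>0$. Expanding the squared norm via the inner product structure on $\mathbb{R}^d$ gives
\begin{equation*}
    \rho \norm{a}^2 - 2\dotprod{a,b} + \rho^{-1}\norm{b}^2 \ge 0,
\end{equation*}
and rearranging this to put the cross term on the left yields \eqref{eq:youngs2} directly. (An equivalent route is to apply the scalar AM-GM inequality $2uv \le \rho u^2 + \rho^{-1} v^2$ coordinatewise or to the two non-negative scalars $\norm{a}$ and $\norm{b}$ after invoking Cauchy-Schwarz; I would pick whichever is cleaner in context.)

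For \eqref{eq:youngs}, I would simply expand
\begin{equation*}
    \norm{a+b}^2 = \norm{a}^2 + 2\dotprod{a,b} + \norm{b}^2,
\end{equation*}
then bound the cross term using \eqref{eq:youngs2} to obtain
\begin{equation*}
    \norm{a+b}^2 \le \norm{a}^2 + \rho\norm{a}^2 + \rho^{-1}\norm{b}^2 + \norm{b}^2 = (1+\rho)\norm{a}^2 + (1+\rho^{-1})\norm{b}^2,
\end{equation*}
which is exactly \eqref{eq:youngs}.

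There is no genuine obstacle here: the result is a one-line consequence of expanding a non-negative square, and both inequalities are tight (equality holds when $\sqrt{\rho}\,a = \rho^{-1/2} b$, i.e., $\rho\, a = b$). The only thing worth being careful about is the positivity of $\rho$, which is needed to take the square root and to interpret $\rho^{-1}$; this is already part of the hypothesis. I would present the proof in the order above so that \eqref{eq:youngs} visibly follows from \eqref{eq:youngs2} rather than proving them independently.
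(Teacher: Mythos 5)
Your proof is correct and complete; the paper states this lemma without proof (it is a classical fact), and your derivation --- expanding the non-negative square $\norm{\sqrt{\rho}\,a - \rho^{-1/2}b}^2 \ge 0$ to get \eqref{eq:youngs2} and then obtaining \eqref{eq:youngs} by expanding $\norm{a+b}^2$ and bounding the cross term --- is exactly the standard argument one would supply. No gaps; the equality condition $\rho a = b$ you note is also accurate.
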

\begin{lemma}\label{lem:akds}
For $a_i\in\mathbb{R}^d$ we have:
\begin{equation}\label{eq:y}
    \norm{\frac{1}{n}\sum_{i=1}^n a_i}^2 \leq \frac{1}{n} \sum_{i=1}^n \norm{a_i}^2.
\end{equation}
\end{lemma}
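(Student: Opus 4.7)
The plan is to prove this standard inequality via either Jensen's inequality applied to the convex function $x \mapsto \norm{x}^2$, or equivalently via the Cauchy--Schwarz inequality in $\R^n$. I would favor the Cauchy--Schwarz route since it is the most self-contained and does not require invoking convexity machinery.

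Concretely, the key step is to view the sum $\sum_{i=1}^n a_i$ as an inner product $\sum_{i=1}^n 1\cdot a_i$ in the appropriate sense, and then apply Cauchy--Schwarz coordinatewise. More precisely, I would expand
\[
    \norm{\textstyle\sum_{i=1}^n a_i}^2 = \sum_{j=1}^d \left(\sum_{i=1}^n a_i[j]\right)^2,
\]
and then apply the scalar Cauchy--Schwarz inequality $\left(\sum_{i=1}^n 1 \cdot a_i[j]\right)^2 \leq n \sum_{i=1}^n a_i[j]^2$ to each coordinate $j$. Summing over $j \in [d]$ yields $\norm{\sum_{i=1}^n a_i}^2 \leq n \sum_{i=1}^n \norm{a_i}^2$, and dividing through by $n^2$ gives the claim.

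Alternatively, one can give a one-line proof by noting that $\varphi(x) = \norm{x}^2$ is convex on $\R^d$ and applying Jensen's inequality to the uniform average $\frac{1}{n}\sum_i a_i$. A third option is the expansion $\norm{\sum_i a_i}^2 = \sum_{i,j}\langle a_i, a_j\rangle$ combined with $2\langle a_i, a_j\rangle \leq \norm{a_i}^2 + \norm{a_j}^2$ (an instance of Young's inequality \eqref{eq:youngs2} with $\rho=1$), which would also tie the proof nicely to the inequality stated just above in the excerpt.

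There is no real obstacle here; the only ``decision'' is stylistic, namely which of the three routes to present. Given that Young's inequality has just been introduced in \eqref{eq:youngs}--\eqref{eq:youngs2}, the cleanest choice for the paper's flow is probably the Jensen/Cauchy--Schwarz one-liner, with a brief remark that it also follows from iterating Young's inequality. The entire proof should fit in two or three lines.
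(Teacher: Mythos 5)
Your argument is correct: each of the three routes you sketch (coordinatewise Cauchy--Schwarz, Jensen applied to the convex map $x\mapsto\norm{x}^2$, or expanding $\norm{\sum_i a_i}^2=\sum_{i,j}\dotprod{a_i,a_j}$ and bounding each cross term via \eqref{eq:youngs2} with $\rho=1$) gives the claim, and the bookkeeping in your main route is right. The paper itself states this lemma without proof, treating it as a standard fact, so there is nothing to compare against; any of your two-to-three-line versions would serve.
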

\begin{lemma}\label{lem:order_bounds} 
\cite{stich2019error} Let $r_0\geq0$, $c\geq0$, $d>0$, $T>0$ and $0<\gamma\leq\frac{1}{d}$. Then, it holds
\begin{equation*}
    \frac{r_0}{\gamma T} + c\gamma \leq \frac{dr_0}{T} + \frac{2\sqrt{cr_0}}{\sqrt{T}}
\end{equation*}
\end{lemma}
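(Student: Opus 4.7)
The statement as written is a standard step-size tuning lemma; strictly speaking the displayed inequality cannot hold for every $\gamma \in (0,1/d]$ (as $\gamma\to 0^+$ the left side blows up), so the intended reading is that there exists a choice of $\gamma$ in this range achieving the bound. My plan is to establish exactly that existential version by solving a one-variable optimization and splitting into two cases.

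First, I would study $\phi(\gamma) := r_0/(\gamma T) + c\gamma$ on $(0,\infty)$. Assuming $c,r_0 > 0$, differentiating gives $\phi'(\gamma) = -r_0/(\gamma^2 T) + c$, so the unconstrained minimizer is $\gamma^\star = \sqrt{r_0/(cT)}$ with minimum value $\phi(\gamma^\star) = 2\sqrt{cr_0/T}$. The natural candidate step-size is then $\gamma := \min\{1/d,\, \sqrt{r_0/(cT)}\}$, which is feasible.

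Case 1: $\sqrt{r_0/(cT)} \le 1/d$. Then $\gamma = \gamma^\star$ is admissible and plugging in yields $\phi(\gamma) = 2\sqrt{cr_0/T}$, which is bounded above by $dr_0/T + 2\sqrt{cr_0/T}$ since $dr_0/T \ge 0$.

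Case 2: $\sqrt{r_0/(cT)} > 1/d$, i.e.\ $cT < d^2 r_0$. Then $\gamma = 1/d$ and $\phi(\gamma) = dr_0/T + c/d$. It remains to verify $c/d \le 2\sqrt{cr_0/T}$; squaring, this is equivalent to $cT \le 4 d^2 r_0$, which follows from the case hypothesis $cT < d^2 r_0$. Combining the two cases gives the claimed bound.

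Finally I would dispose of the degenerate cases: if $c = 0$, simply take $\gamma = 1/d$, so $\phi(\gamma) = dr_0/T$ and the bound is trivial; if $r_0 = 0$, the right-hand side is $0$, and the infimum of $c\gamma$ over $\gamma \in (0,1/d]$ is $0$, so the inequality is interpreted in the limit (equivalently, the statement is used only with $r_0 > 0$ in applications). The main obstacle is purely interpretational---clarifying that the lemma asserts the existence of an admissible $\gamma$---because once that is set, the arithmetic is the two-line case split above. No further nontrivial inequality is needed beyond squaring a nonnegative bound in Case 2.
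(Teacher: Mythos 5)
Your proposal is correct and follows essentially the same argument as the paper: a two-case split on whether the unconstrained minimizer $\sqrt{r_0/(cT)}$ is admissible, taking $\gamma=\sqrt{r_0/(cT)}$ in the first case and $\gamma=1/d$ in the second, where the case hypothesis $cT<d^2r_0$ controls the residual $c/d$ term. Your explicit handling of the existential reading and the degenerate cases $c=0$, $r_0=0$ is a minor (and welcome) tightening of the paper's presentation, but not a different route.
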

\begin{proof}
We consider two cases. If $\frac{r_0}{cT}\leq\frac{1}{d^2}$, then choosing the step-size $\gamma=\left(\frac{r_0}{cT}\right)^{1/2}$, we get
\begin{equation*}
    \frac{r_0}{\gamma T} + c\gamma \leq \frac{2\sqrt{cr_0}}{\sqrt{T}}.
\end{equation*}
Else, if $\frac{r_0}{cT}>\frac{1}{d^2}$, then choosing $\gamma=\frac{1}{d}$, we get
\begin{equation*}
     \frac{r_0}{\gamma T} + c\gamma \leq \frac{dr_0}{T} + \frac{c}{d}\leq\frac{dr_0}{T}+\frac{\sqrt{cr_0}}{\sqrt{T}}.
\end{equation*}
Combining both bounds, we get the result.
\end{proof}

\begin{lemma}\label{lem:order_bounds_cubic} 
Let $r_0\geq0$, $c\geq0$, $b\geq0$, $d>0$, $T>0$ and $0<\gamma\leq\frac{1}{d}$. Then, it holds
\begin{equation*}
    \frac{r_0}{\gamma T} + c\gamma + b\gamma^2 \leq \frac{dr_0}{T}+\frac{2\sqrt{cr_0}}{\sqrt{T}}+\frac{br_0}{cT}.
\end{equation*}
\end{lemma}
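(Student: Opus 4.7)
The plan is to follow the template of Lemma~\ref{lem:order_bounds} almost verbatim, since the statement just appends a cubic-in-$\gamma$ correction $b\gamma^2$ to an expression whose optimization over $\gamma\in(0,1/d]$ is already understood. As in that lemma, the claim is implicitly that \emph{there exists} a choice of $\gamma\in(0,1/d]$ achieving the bound on the right-hand side; so I will split into the same two cases based on whether the unconstrained optimizer of $\frac{r_0}{\gamma T}+c\gamma$ lies in $(0,1/d]$ or not, and in each case verify that the extra $b\gamma^2$ fits inside $\tfrac{br_0}{cT}$. The case $c=0$ makes $\tfrac{br_0}{cT}=+\infty$, so the inequality is vacuous; I will therefore assume $c>0$ in what follows.

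In the first case, $\frac{r_0}{cT}\leq \frac{1}{d^2}$, I would choose $\gamma=\sqrt{r_0/(cT)}\in(0,1/d]$, exactly as in Lemma~\ref{lem:order_bounds}. This choice already gives $\frac{r_0}{\gamma T}+c\gamma = 2\sqrt{cr_0/T}$, and the new term is handled immediately by the identity $b\gamma^2 = \frac{br_0}{cT}$, which is exactly the final summand on the right-hand side.

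In the second case, $\frac{r_0}{cT}>\frac{1}{d^2}$, I would choose $\gamma=1/d$. The argument in Lemma~\ref{lem:order_bounds} then yields $\frac{r_0}{\gamma T}+c\gamma\leq \frac{dr_0}{T}+\frac{\sqrt{cr_0}}{\sqrt{T}}$. For the extra term, the case hypothesis $\frac{1}{d^2}<\frac{r_0}{cT}$ gives $b\gamma^2=\frac{b}{d^2}\leq \frac{br_0}{cT}$, so adding this to the previous bound completes the case.

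There is no real obstacle here: combining the two cases yields an upper bound by $\frac{dr_0}{T}+\frac{2\sqrt{cr_0}}{\sqrt{T}}+\frac{br_0}{cT}$ as stated. The only subtlety, shared with Lemma~\ref{lem:order_bounds}, is the interpretation of the quantifier on $\gamma$: the right-hand side is achieved by a specific $\gamma$ in each case, not for every $\gamma\leq 1/d$. This is presumably the reading intended in the paper's convergence-rate corollaries, where the step-size is a tuning parameter chosen optimally.
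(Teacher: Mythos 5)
Your proof is correct and follows essentially the same argument as the paper: the identical two-case split on whether $\tfrac{r_0}{cT}\leq\tfrac{1}{d^2}$, with $\gamma=\sqrt{r_0/(cT)}$ or $\gamma=1/d$ respectively, and the same bounds $b\gamma^2=\tfrac{br_0}{cT}$ in the first case and $\tfrac{b}{d^2}\leq\tfrac{br_0}{cT}$ in the second. Your remarks on the $c=0$ degeneracy and the existential reading of the step-size are sensible clarifications of points the paper leaves implicit.
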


\begin{proof}
The proof follows similar to Lemma \ref{lem:order_bounds}. We consider two cases. If $\frac{r_0}{cT}\leq\frac{1}{d^2}$, then choosing the step-size $\gamma=\left(\frac{r_0}{cT}\right)^{1/2}$, we get
\begin{equation*}
    \frac{r_0}{\gamma T} + c\gamma + b\gamma^2\leq \frac{2\sqrt{cr_0}}{\sqrt{T}}+\frac{br_0}{cT}.
\end{equation*}
Else, if $\frac{r_0}{cT}>\frac{1}{d^2}$, then choosing $\gamma=\frac{1}{d}$, we get
\begin{equation*}
     \frac{r_0}{\gamma T} + c\gamma +b\gamma^2\leq \frac{dr_0}{T} + \frac{c}{d}+ \frac{b}{d^2}\leq\frac{dr_0}{T}+\frac{\sqrt{cr_0}}{\sqrt{T}}+\frac{br_0}{cT}.
\end{equation*}
Combining both bounds, we get the result.
\end{proof}

\begin{lemma}\label{lem:sc_order_bounds_dec_lr}
For every non-negative sequence $\{r_t\}_{t\geq0}$ and parameters $a>0, c\geq0, b\geq0$, $T\geq2$,$\phi\geq1$, decreasing setp-sizes $\{\gamma_t \eqdef \frac{2}{a(\phi + t)}\}_{t\geq0}$, and weights $\{w_t\eqdef(\phi+t)\}_{t\geq0}$, satisfy
\begin{eqnarray*}
\Psi_T \eqdef \frac{1}{W_T}\sum_{t=0}^{T}\left(\frac{w_t}{\gamma_t}(1-a\gamma_t)r_t - \frac{w_t}{\gamma_t}r_{t+1} + c\gamma_t w_t + b\gamma_t^2 w_t \right) \leq  \frac{4c}{a T}+\frac{a\phi^2r_0}{T^2}+\frac{16b\ln(T)}{a^2T^2},
\end{eqnarray*}

\end{lemma}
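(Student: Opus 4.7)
The plan is to substitute the explicit forms of $\gamma_t$ and $w_t$ into each summand and reorganize the $r_t$-terms into a telescoping sum. With $\gamma_t = \frac{2}{a(\phi+t)}$ and $w_t = \phi+t$, one directly computes $\frac{w_t}{\gamma_t} = \frac{a(\phi+t)^2}{2}$, $a\gamma_t = \frac{2}{\phi+t}$, and the algebraic identity $(\phi+t)(\phi+t-2) = (\phi+t-1)^2 - 1$, which yields $\frac{w_t}{\gamma_t}(1-a\gamma_t) = \frac{a((\phi+t-1)^2-1)}{2}$. Writing $s_t := (\phi+t-1)^2$ so that $s_{t+1} = (\phi+t)^2$, the $r_t$-terms in the sum take the form $\frac{a s_t}{2} r_t - \frac{a s_{t+1}}{2} r_{t+1} - \frac{a}{2} r_t$.

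The first two parts telescope across $t=0,\dots,T$, leaving $\frac{as_0}{2}r_0 - \frac{as_{T+1}}{2} r_{T+1}$, and the sum of the remaining $-\frac{a}{2}r_t$ terms is non-positive since $r_t\ge0$. Hence the $r_t$-contribution is bounded above by $\frac{a(\phi-1)^2}{2} r_0 \le \frac{a\phi^2}{2} r_0$. For the remaining two pieces, I will use that $c\gamma_t w_t = \frac{2c}{a}$ is constant (so it sums to $\frac{2c(T+1)}{a}$) and that $b\gamma_t^2 w_t = \frac{4b}{a^2(\phi+t)}$, whose sum is bounded by the standard harmonic estimate $\sum_{t=0}^T \frac{1}{\phi+t} \le \frac{1}{\phi} + \ln\!\left(\frac{\phi+T}{\phi}\right)$, which for $\phi\ge 1$ and $T\ge 2$ is at most $2\ln T$ up to harmless constants.

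Finally, $W_T = \sum_{t=0}^T (\phi+t) = \frac{(T+1)(T+2\phi)}{2} \ge \frac{T(T+1)}{2} \ge \frac{T^2}{2}$, so dividing by $W_T$ turns $\frac{a\phi^2 r_0}{2}$ into $\frac{a\phi^2 r_0}{T^2}$; the middle term $\frac{2c(T+1)}{a}$ becomes $\frac{4c}{a(T+2\phi)} \le \frac{4c}{aT}$; and the harmonic piece becomes at most $\frac{8b \cdot 2\ln T}{a^2 T^2} = \frac{16b\ln T}{a^2 T^2}$. Summing gives the claimed bound.

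The only genuine obstacle is matching the claimed constants exactly — in particular, one must verify that the slack in bounding $(\phi-1)^2$ by $\phi^2$, in $T(T+2\phi)\ge T^2$, and in $\sum 1/(\phi+t) \le 2\ln T$ all simultaneously hold under the mild hypothesis $T\ge 2$, $\phi\ge 1$. Aside from this bookkeeping with logarithmic and constant factors, the argument is purely mechanical once the telescoping identity $(\phi+t)(\phi+t-2) = (\phi+t-1)^2 - 1$ has been spotted, so I expect no conceptual difficulty.
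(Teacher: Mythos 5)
Your argument is the same as the paper's: the identity $(\phi+t)(\phi+t-2)=(\phi+t-1)^2-1$, telescoping of the $r_t$-terms, the constant value $c\gamma_t w_t=\tfrac{2c}{a}$, the harmonic bound for the $b$-term, and the lower bound on $W_T$. The one place where your bookkeeping, as written, actually breaks is the step you yourself flagged: the intermediate inequality $\sum_{t=0}^{T}\frac{1}{\phi+t}\le 2\ln T$ is false for small $T$ (e.g.\ $T=2$, $\phi=1$ gives $\tfrac{11}{6}\approx 1.83$ versus $2\ln 2\approx 1.39$, and it also fails at $T=3$), so combining it with the weakened bound $W_T\ge T^2/2$ does not deliver the $\tfrac{16b\ln T}{a^2T^2}$ term for $T\in\{2,3\}$. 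The fix is not to split the two estimates: keep the sharper $W_T\ge\frac{(T+1)(T+2)}{2}$ and the bound $\sum_{t=0}^{T}\frac{1}{\phi+t}\le \ln(T+1)+1$ together, arriving at $\frac{8b(\ln(T+1)+1)}{a^2(T+1)(T+2)}$, and then verify the single combined inequality $\frac{\ln(T+1)+1}{(T+1)(T+2)}\le\frac{2\ln T}{T^2}$ for $T\ge 2$, which is exactly what the paper does; the extra factor $(T+1)(T+2)/T^2\ge 3$ at $T=2$ absorbs the deficit in the logarithmic bound. Everything else in your write-up (the telescoping bound $\frac{a(\phi-1)^2}{2}r_0\le\frac{a\phi^2}{2}r_0$, the $c$-term, the division by $W_T$) is correct as stated.
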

where $W_T\eqdef\sum_{t=0}^Tw_t$.

\begin{proof}
This proof is motivated from Lemma 11 in \cite{stich2019error}. We observe
\begin{equation}\label{eq:laiss}
    \frac{w_t}{\gamma_t}(1-a\gamma_t)r_t = \frac{a}{2}(\phi+t)(\phi+t-2)r_t = \frac{a}{2}((\phi+t-1)^2-1)r_t \leq \frac{a}{2}(\phi+t-1)^2r_t.
\end{equation}
By plugging in the definition of $\gamma_t$ and $w_t$ in $\Psi_t$, we find
\begin{eqnarray*}
    \Psi_T &\overset{\eqref{eq:laiss}}{\leq}& \frac{1}{W_T}\sum_{t=0}^T  \left(\frac{a}{2}(\phi+t-1)^2r_t - \frac{a}{2}(\phi+t)^2r_{t+1}\right)+ \sum_{t=0}^T \frac{2c}{aW_T} + \sum_{t=0}^T\frac{4b}{a^2(\phi+t)W_T}\\
    &\leq& \frac{a(\phi-1)^2r_0}{2W_T}+\frac{2c(T+1)}{aW_T}+ \frac{4b}{a^2W_T}\sum_{t=0}^T\frac{1}{\phi+t}.
\end{eqnarray*}
By using $(\phi-1)^2\leq\phi^2$, $W_T=\sum_{t=0}^T(\phi+t)\geq\frac{(2\phi+T)(T+1)}{2}\geq\frac{(T+1)(T+2)}{2}$, and $\sum_{t=0}^T\frac{1}{\phi+t} \leq \sum_{t=0}^T\frac{1}{1+t}\leq \ln(T+1)+1$, we have
\begin{eqnarray*}
\Psi_T \leq \frac{a\phi^2r_0}{(T+1)(T+2)}+\frac{4c}{a(T+2)}+ \frac{8b(\ln(T+1)+1)}{a^2(T+1)(T+2)}.
\end{eqnarray*}
For $T\geq2$, we have $\frac{(\ln(T+1)+1)}{(T+1)(T+2)}\leq\frac{2\ln(T)}{T^2}$.~By using this, we get
\begin{eqnarray*}
\Psi_T \leq \frac{a\phi^2r_0}{T^2}+\frac{4c}{aT}+ \frac{16b\ln(T)}{a^2T^2}.
\end{eqnarray*}
Hence the result. 
\end{proof}

\begin{lemma}\label{lem:sc_order_bounds_const_lr}
For every non-negative sequence $\{r_t\}_{t\geq0}$ and parameters $d\geq a > 0$, $c\geq0$, $b\geq0$, $T\geq0$, with a bound on the setp-size $\gamma_t\leq\frac{1}{d}$, there exists a constant setp-size, $\{\gamma_t=\gamma\}_{t\geq0}$  and weights, $w_t \eqdef (1-a\gamma)^{-(t+1)}$, such that
\begin{eqnarray*}
\textstyle\Psi_T \eqdef \frac{1}{W_T}\sum_{t=0}^{T}\left(\frac{w_t}{\gamma_t}(1-a\gamma_t)r_t - \frac{w_t}{\gamma_t}r_{t+1} + c\gamma_t w_t + b \gamma_t^2 w_t\right) = \tilde{\cO}\left(dr_0\exp\left[-\frac{aT}{d}\right]+\frac{c}{aT}+\frac{b}{a^2T^2}\right).
\end{eqnarray*}
\end{lemma}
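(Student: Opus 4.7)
The plan is to exploit the fact that the weight choice $w_t := (1-a\gamma)^{-(t+1)}$ is precisely designed to make the first two terms in the summand telescope. Concretely, since $w_t(1-a\gamma) = (1-a\gamma)^{-t} = w_{t-1}$ (with the convention $w_{-1} = 1$), and using the constant step-size $\gamma_t = \gamma$, the summand $\tfrac{w_t}{\gamma}(1-a\gamma)r_t - \tfrac{w_t}{\gamma}r_{t+1}$ equals $\tfrac{w_{t-1}}{\gamma}r_t - \tfrac{w_t}{\gamma}r_{t+1}$. Summing over $t=0,\dots,T$ yields the telescoped quantity $\tfrac{r_0}{\gamma} - \tfrac{w_T}{\gamma}r_{T+1} \leq \tfrac{r_0}{\gamma}$ using $r_{T+1}\ge 0$. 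The remaining terms contribute $(c\gamma + b\gamma^2) W_T$, so dividing by $W_T$ gives
\begin{equation*}
\Psi_T \;\leq\; \frac{r_0}{\gamma W_T} + c\gamma + b\gamma^2.
\end{equation*}

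Next, I would compute $W_T$ explicitly as a geometric sum: $W_T = \sum_{t=0}^T (1-a\gamma)^{-(t+1)} = \tfrac{(1-a\gamma)^{-(T+1)} - 1}{a\gamma}$. Hence $\tfrac{1}{\gamma W_T} = \tfrac{a}{(1-a\gamma)^{-(T+1)} - 1}$. Using the elementary inequality $(1-x)^{-1} \ge e^{x}$ for $x \in [0,1)$ (valid since $a\gamma \leq a/d \leq 1$), I get $(1-a\gamma)^{-(T+1)} \geq e^{a\gamma(T+1)}$. Provided that $a\gamma T \geq \ln 2$ (which will hold for the choices below), this implies $1 - (1-a\gamma)^{T+1} \geq 1/2$, so after multiplying top and bottom by $(1-a\gamma)^{T+1}$,
\begin{equation*}
\frac{1}{\gamma W_T} \;=\; \frac{a(1-a\gamma)^{T+1}}{1 - (1-a\gamma)^{T+1}} \;\leq\; 2a\,e^{-a\gamma T}.
\end{equation*}
Thus $\Psi_T \leq 2ar_0\,e^{-a\gamma T} + c\gamma + b\gamma^2$, and it remains to optimize the step-size within the allowed range $\gamma \leq 1/d$.

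The step-size choice proceeds in two cases, standard for this type of bound. Define $\gamma^\star := \tfrac{1}{aT}\ln\bigl(\max\{2,\, a^2T^2 r_0/c\}\bigr)$. If $\gamma^\star \le 1/d$, I pick $\gamma = \gamma^\star$; then $2ar_0 e^{-a\gamma^\star T} \leq 2c/(aT)$, while $c\gamma^\star = \tilde O(c/(aT))$ and $b(\gamma^\star)^2 = \tilde O(b/(a^2T^2))$, so the bound is $\tilde O(c/(aT) + b/(a^2T^2))$. If instead $\gamma^\star > 1/d$, I pick $\gamma = 1/d$; then the exponential term becomes $2ar_0 e^{-aT/d} = O(dr_0 e^{-aT/d})$ (using $a \leq d$), while $c/d$ and $b/d^2$ are absorbed into $\tilde O(c/(aT))$ and $\tilde O(b/(a^2T^2))$ respectively, because the case hypothesis $\gamma^\star > 1/d$ implies $aT < d\ln(\cdot)$, hence $1/d < \ln(\cdot)/(aT)$. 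Combining both cases yields the claimed rate
\begin{equation*}
\Psi_T \;=\; \tilde O\!\left(dr_0\exp\!\Bigl[-\tfrac{aT}{d}\Bigr] + \tfrac{c}{aT} + \tfrac{b}{a^2T^2}\right).
\end{equation*}

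The main obstacle I expect is not the telescoping (which is immediate once one spots the definition of $w_t$), but rather handling the regime $a\gamma T < \ln 2$ cleanly so that the $e^{-a\gamma T}$ bound on $1/(\gamma W_T)$ holds uniformly, and verifying that in the small-$T$ branch the polynomial terms $c/d$ and $b/d^2$ really are dominated (up to logs) by the target $c/(aT)$ and $b/(a^2T^2)$. Managing the logarithmic factors inside the $\tilde O$ notation when $r_0/c$ or $T$ is very small or very large is the only delicate bookkeeping; the rest is routine algebra.
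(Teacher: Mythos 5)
Your overall strategy is exactly the paper's: telescope the first two terms using $w_t(1-a\gamma)=w_{t-1}$ to get $\Psi_T\le \frac{r_0}{\gamma W_T}+c\gamma+b\gamma^2$, then tune $\gamma$ over the same two cases with the same threshold $\gamma^\star=\frac{\ln(\max\{2,\,a^2r_0T^2/c\})}{aT}$. The telescoping, the case split, and the absorption of $c/d$ and $b/d^2$ into $\tilde\cO(c/(aT))$ and $\tilde\cO(b/(a^2T^2))$ in the second branch are all correct and match the paper.

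The one step that does not go through as written is your bound $\frac{1}{\gamma W_T}\le 2a\,e^{-a\gamma T}$, obtained from the exact geometric sum for $W_T$. As you yourself flag, it needs $(1-a\gamma)^{T+1}\le 1/2$, i.e.\ roughly $a\gamma T\ge \ln 2$. This holds in the first branch (where $a\gamma^\star T=\ln(\max\{2,\cdot\})\ge\ln 2$) but \emph{not} in the second branch, where $\gamma=1/d$ and the case hypothesis only gives $aT/d<\ln(\cdot)$; for instance at $T=0$ one has $\frac{1}{\gamma W_0}=\frac{1-a/d}{1/d}=d-a$, which can vastly exceed $2a$ when $d\gg a$. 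So your claim that "the exponential term becomes $2ar_0e^{-aT/d}$" in the second case is unsupported. The fix is precisely what the paper does: do not compute $W_T$ exactly, just drop all but the last summand, $W_T\ge w_T=(1-a\gamma)^{-(T+1)}\ge e^{a\gamma T}$, which gives the unconditional bound $\frac{r_0}{\gamma W_T}\le \frac{r_0}{\gamma}e^{-a\gamma T}$. The prefactor $1/\gamma$ (rather than $2a$) is exactly what produces the $dr_0$ in the target term $dr_0\exp[-aT/d]$ when $\gamma=1/d$, and in the first branch $\frac{r_0}{\gamma^\star}e^{-a\gamma^\star T}=\tilde\cO(c/(aT))$ still follows from $a r_0 T\,e^{-a\gamma^\star T}\le 2c/(aT)$. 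With that substitution your argument closes; everything else is fine.
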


\begin{proof}
This proof is motivated from Lemma 12 in \cite{stich2019error}.
Substituting the values for $\gamma_t$ and $w_t$, we get
\begin{eqnarray}\label{eq:kjdi}
    \Psi_T &=& \frac{1}{\gamma W_T}\sum_{t=0}^T (w_{t-1}r_t-w_tr_{t+1})+ \frac{c\gamma}{W_T}\sum_{t=0}^Tw_t + \frac{b\gamma^2}{W_T}\sum_{t=0}^Tw_t\nonumber\\
    &\leq&\frac{r_0}{\gamma W_T}+c\gamma + b\gamma^2\nonumber\\
    &\leq&\frac{r_0}{\gamma}\exp[-a\gamma T] + c\gamma + b\gamma^2,
\end{eqnarray}
where we use $W_T\geq w_T \geq (1-a\gamma)^{-T}\geq \exp[a\gamma T]$ in the last inequality. To tune $\gamma$, we consider following two cases:\\
\begin{inparaitem}
    \item If $\frac{1}{d}\geq\frac{\ln(\max\{2,a^2r_0T^2/c\})}{aT},$ then we choose $\gamma=\frac{\ln(\max\{2,a^2r_0T^2/c\})}{aT}$ and \eqref{eq:kjdi} becomes $\tilde{\cO}(\frac{c}{aT}+\frac{b}{a^2 T^2}),$
    as 
    $ar_0T\leq \frac{2c}{aT}$.\\
    \item If $\frac{1}{d}<\frac{\ln(\max\{2,a^2r_0T^2/c\})}{aT},$ then we choose $\gamma=\frac{1}{d}$ and \eqref{eq:kjdi} is $\tilde{\cO}(\frac{c}{aT}+\frac{b}{a^2 T^2}).$
\end{inparaitem}
Combining both bounds, we get the result.
\end{proof}

The recurrence relation in the next lemma is instrumental for perturbed iterate analysis of Algorithm \ref{algo:ef-sgd} used in both convex and non-convex cases. 
\begin{lemma}\label{lem:virtual_iterate}
Let $\bar{e}_t = \frac{1}{n}\sum_{i=1}^n e_{i,t}$, $\bar{g}_{t}=\frac{1}{n}\sum_{i=1}^n g_{i,t}$, and $\bar{p}_t=\frac{1}{n}\sum_{i=1}^n p_{i,t}$.
~Define the sequence of iterates $\{\Tilde{x}_{t}\}_{t \geq 0}$ as $\Tilde{x}_{ t}=x_{t}-\bar{e}_{t}$, with $\Tilde{x}_{0}=x_0$. Then $\{\Tilde{x}_{t}\}_{t \geq 0}$ satisfy the recurrence: $\Tilde{x}_{t+1}=\Tilde{x}_t - \gamma_t \bar{g}_t$. 
\end{lemma}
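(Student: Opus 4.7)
The plan is to prove the recurrence by direct substitution using the per-worker updates from Algorithm \ref{algo:ef-sgd} and then averaging across the $n$ workers. The key observation is that the exchanged update $\bar{\Delta}_t$ will cancel when we compare $x_{t+1}-\bar{e}_{t+1}$ with $x_t-\bar{e}_t$, so no properties of the compressor $\cC$ itself are needed; only the error-feedback bookkeeping matters.

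First I would average the per-worker update $p_{i,t} = e_{i,t} + \gamma_t g_{i,t}$ to obtain $\bar{p}_t = \bar{e}_t + \gamma_t \bar{g}_t$. Next, averaging the error update $e_{i,t+1} = p_{i,t} - \Delta_{i,t}$ yields $\bar{e}_{t+1} = \bar{p}_t - \bar{\Delta}_t = \bar{e}_t + \gamma_t \bar{g}_t - \bar{\Delta}_t$. Combined with the primal update $x_{t+1} = x_t - \bar{\Delta}_t$ from the algorithm, we have
\begin{equation*}
\tilde{x}_{t+1} = x_{t+1} - \bar{e}_{t+1} = (x_t - \bar{\Delta}_t) - (\bar{e}_t + \gamma_t \bar{g}_t - \bar{\Delta}_t) = (x_t - \bar{e}_t) - \gamma_t \bar{g}_t = \tilde{x}_t - \gamma_t \bar{g}_t,
\end{equation*}
which is exactly the desired recurrence. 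The initial condition $\tilde{x}_0 = x_0$ is consistent with $e_{i,0} = 0$ in the algorithm, which gives $\bar{e}_0 = 0$.

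There is no real obstacle here: the lemma is a bookkeeping identity (sometimes called a ``perturbed iterate'' or ``virtual iterate'' reformulation) that rewrites the algorithm as if it performed exact SGD on a shifted iterate. The only thing to double-check is that the rescaling $\Delta_{i,t} = \gamma_t \cC(p_{i,t}/\gamma_t)$ does not interfere with the cancellation — and indeed it does not, because $\Delta_{i,t}$ enters both $e_{i,t+1}$ and $x_{t+1}$ with opposite signs, so its precise value is immaterial for the identity. This virtual-iterate identity will then be the workhorse for the convex and non-convex convergence proofs that follow, since bounding $\mbE\|\tilde{x}_t - x_t\|^2 = \mbE\|\bar{e}_t\|^2$ reduces the analysis to a standard SGD-style argument on $\tilde{x}_t$ plus a compression-error term controlled via Remark~\ref{rem:bounded_memory}.
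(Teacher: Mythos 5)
Your proof is correct and is essentially the paper's own argument: both rely on the cancellation of $\bar{\Delta}_t$ between $x_{t+1}=x_t-\bar{\Delta}_t$ and $\bar{e}_{t+1}=\bar{p}_t-\bar{\Delta}_t$, so that $x_{t+1}-\bar{e}_{t+1}=x_t-\bar{e}_t-\gamma_t\bar{g}_t$. You simply spell out the intermediate averaging steps that the paper's one-line chain of equalities leaves implicit, and your remark that the rescaled compression $\gamma_t\cC(p_{i,t}/\gamma_t)$ is immaterial to the identity is accurate.
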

\begin{proof}
We have\begin{eqnarray*}\label{eq:virtual_iterate}
    \Tilde{x}_{t+1}=x_{t+1} - \bar{e}_{t+1}
    = x_t - (\bar{e}_{t}+\gamma_t \bar{g}_t)
    =\Tilde{x}_t - \gamma_t \bar{g}_t.
\end{eqnarray*}
Hence the result. 
\end{proof}
\subsection{Non-convex convergence analysis}\label{apdx:non_convex_convergence}
In this section, we provide the non-convex convergence analyses. Lemma \ref{lem:descent_lemma} provides a one-step descent recurrence which leads to Theorem \ref{thm:ef-gen} and a key result for proving convergence. Based on this, in  \S\ref{sec:non_convex_abs_comp}, \S\ref{sec:non-convex-delta}, \S\ref{sec:non-convex-uncomp-sgd} we discuss the convergence of absolute compressors, $\delta$-contraction operators, and uncompressed SGD, respectively. In  \S\ref{sec:big_oh} we provide the convergence result for absolute compressors and $\delta$-contraction operators for an appropriate choice of step-size. The following lemma bounds the quantity $\mbE_t \norm{\frac{1}{n}\sum_{i=1}^n g_{i,t}}^2.$
\begin{lemma}
We have
\begin{equation}\label{eq:acbk}
    \mbE_t \norm{\frac{1}{n}\sum_{i=1}^n g_{i,t}}^2 \leq (1+\frac{M(C+1)}{n}) \norm{\nabla f(x_t)}^2 + \frac{M\zeta^2+\sigma^2}{n}.
\end{equation}
\end{lemma}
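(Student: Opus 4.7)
The plan is to decompose the mean stochastic gradient into its deterministic part and noise, use independence of the per-worker noise to kill the cross term, then apply Assumptions~\ref{ass:m-sigma2-bounded_noise} and~\ref{ass:bounded_similarity} in sequence.

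First I would write $g_{i,t} = \nabla f_i(x_t) + \xi_{i,t}$ so that $\frac{1}{n}\sum_{i=1}^n g_{i,t} = \nabla f(x_t) + \frac{1}{n}\sum_{i=1}^n \xi_{i,t}$. Taking $\mbE_t[\cdot] = \mbE[\cdot \mid x_t]$ and expanding the square gives
\begin{equation*}
\mbE_t\Bigl\|\tfrac{1}{n}\textstyle\sum_{i} g_{i,t}\Bigr\|^2 = \|\nabla f(x_t)\|^2 + \mbE_t\Bigl\|\tfrac{1}{n}\textstyle\sum_{i}\xi_{i,t}\Bigr\|^2,
\end{equation*}
where the cross term vanishes because $\mbE_t[\xi_{i,t}] = \mathbf{0}$. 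Assuming the standard independence of the stochastic noise across workers (each worker samples independently from its own distribution $\cD_i$), the variance term splits as $\mbE_t\|\frac{1}{n}\sum_i \xi_{i,t}\|^2 = \frac{1}{n^2}\sum_i \mbE_t\|\xi_{i,t}\|^2$.

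Next I would apply Assumption~\ref{ass:m-sigma2-bounded_noise} term-wise to get $\frac{1}{n^2}\sum_i \mbE_t\|\xi_{i,t}\|^2 \le \frac{M}{n^2}\sum_i \|\nabla f_i(x_t)\|^2 + \frac{\sigma^2}{n}$. The main step is then to express $\frac{1}{n}\sum_i \|\nabla f_i(x_t)\|^2$ in terms of $\|\nabla f(x_t)\|^2$ via the bias--variance decomposition
\begin{equation*}
\tfrac{1}{n}\textstyle\sum_{i} \|\nabla f_i(x_t)\|^2 = \tfrac{1}{n}\textstyle\sum_{i} \|\nabla f_i(x_t) - \nabla f(x_t)\|^2 + \|\nabla f(x_t)\|^2,
\end{equation*}
which holds since $\nabla f(x_t)$ is exactly the mean of the $\nabla f_i(x_t)$. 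Applying Assumption~\ref{ass:bounded_similarity} to the first sum yields $\frac{1}{n}\sum_i \|\nabla f_i(x_t)\|^2 \le (C+1)\|\nabla f(x_t)\|^2 + \zeta^2$.

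Putting the three pieces together gives $\mbE_t\|\frac{1}{n}\sum_i \xi_{i,t}\|^2 \le \frac{M(C+1)}{n}\|\nabla f(x_t)\|^2 + \frac{M\zeta^2 + \sigma^2}{n}$, and adding $\|\nabla f(x_t)\|^2$ gives the claimed bound. The only step I would flag as non-routine is the tightness of the bias--variance identity: the naive triangle-inequality approach $\|\nabla f_i\|^2 \le 2\|\nabla f_i - \nabla f\|^2 + 2\|\nabla f\|^2$ would blow up the leading constant to $\frac{2M(C+1)}{n}+1$, so using the exact identity (rather than Young's inequality) is what preserves the clean coefficient $1 + \frac{M(C+1)}{n}$ in the statement.
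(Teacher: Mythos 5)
Your proposal is correct and follows essentially the same route as the paper's proof: the same decomposition into $\nabla f(x_t)$ plus averaged noise, the same variance splitting, Assumption~\ref{ass:m-sigma2-bounded_noise} applied term-wise, and the same exact bias--variance identity before invoking Assumption~\ref{ass:bounded_similarity}. Your remark about preserving the clean coefficient by using the exact identity rather than Young's inequality matches what the paper does (and you are slightly more careful than the paper in noting that splitting the variance of the averaged noise needs cross-worker uncorrelatedness, not just zero mean).
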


\begin{proof}
We have
\begin{eqnarray*}
 \mbE_t \norm{\frac{1}{n}\sum_{i=1}^n g_{i,t}}^2 &=& \mbE_t\norm{\frac{1}{n}\sum_{i=1}^n (\nabla f_i(x_t)+ \xi_{i,t})}^2\\
&\overset{\mbE[\xi_{i,t}|x_{t}]=0}{=}& \norm{\nabla f(x_t)}^2 + \mbE_t\norm{ \frac{1}{n}\sum_{i=1}^n\xi_{i,t}}^2\\
&\overset{\mbE[\xi_{i,t}|x_{t}]=0}{=}& \norm{\nabla f(x_t)}^2 + \frac{1}{n^2}\sum_{i=1}^n\mbE_t\norm{ \xi_{i,t}}^2\\
&\overset{{\rm By\;Assumption\;} \ref{ass:m-sigma2-bounded_noise}}{\leq}& \norm{\nabla f(x_t)}^2 + \frac{1}{n^2}\sum_{i=1}^n (M\norm{\nabla f_i(x_{t})}^2+\sigma^2)\\
&=& \norm{\nabla f(x_t)}^2 + \frac{M}{n^2}\sum_{i=1}^n \norm{\nabla f_i(x_{t})-\nabla f(x_t)}^2 + \frac{M\norm{\nabla f(x_t)}^2}{n} +\frac{\sigma^2}{n}\\
&\overset{{\rm By\;Assumption\;} \ref{ass:bounded_similarity}}{\leq}& (1+\frac{M}{n}) \norm{\nabla f(x_t)}^2 + \frac{M}{n}(C\norm{\nabla f(x_t)}^2 + \zeta^2) + \frac{\sigma^2}{n}.
\end{eqnarray*}
By rearranging the terms we get the result. 
\end{proof}
The following non-convex descent lemma is the key result used to establish convergence of both absolute compressors and $\delta$-contraction operators. 
\begin{lemma}\label{lem:descent_lemma}
\textbf{(Non-convex descent lemma)} Let Assumptions \ref{ass:smoothness}, \ref{ass:m-sigma2-bounded_noise}, and \ref{ass:bounded_similarity} hold.~If $\{x_t\}_{t\geq0}$ denote the iterates of Algorithm \ref{algo:ef-sgd} for a constant setp-size, $\gamma \leq \cfrac{n}{2L(M(C+1)+n)}$ then
\begin{equation}\label{eq:descent_lemma}
\mbE[f(\tilde{x}_{t+1})]] \leq \mbE[f(\tilde{x}_t)] - \frac{\gamma}{4}\mbE\norm{\nabla f(x_t)}^2 + \frac{\gamma^2L(M\zeta^2+\sigma^2)}{2n} + \frac{\gamma L^2}{2n}\sum_{i=1}^n\mbE\norm{e_{i,t}}^2.
\end{equation}
\end{lemma}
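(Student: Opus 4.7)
The plan is to carry out a standard perturbed-iterate descent argument, using the virtual iterate sequence $\{\tilde{x}_t\}$ from Lemma \ref{lem:virtual_iterate} and then converting the bias between $\tilde{x}_t$ and $x_t$ into an error term.

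First I would apply $L$-smoothness (Assumption \ref{ass:smoothness} aggregated over workers) to $f(\tilde{x}_{t+1})$, using the key recursion $\tilde{x}_{t+1} = \tilde{x}_t - \gamma \bar{g}_t$ given by Lemma \ref{lem:virtual_iterate}. This yields
\begin{equation*}
f(\tilde{x}_{t+1}) \le f(\tilde{x}_t) - \gamma \dotprod{\nabla f(\tilde{x}_t), \bar{g}_t} + \tfrac{\gamma^2 L}{2}\norm{\bar{g}_t}^2.
\end{equation*}
Taking conditional expectation $\mbE_t[\cdot]$ uses $\mbE_t[\bar{g}_t] = \nabla f(x_t)$ (since each $g_{i,t}$ is an unbiased estimator of $\nabla f_i(x_t)$) to convert the inner product into $-\gamma\dotprod{\nabla f(\tilde{x}_t), \nabla f(x_t)}$, and I will bound $\mbE_t\norm{\bar{g}_t}^2$ directly by the already-proved inequality \eqref{eq:acbk}.

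Next I would handle the bias between $\nabla f(\tilde{x}_t)$ and $\nabla f(x_t)$ by splitting
\begin{equation*}
-\dotprod{\nabla f(\tilde{x}_t), \nabla f(x_t)} = -\norm{\nabla f(x_t)}^2 - \dotprod{\nabla f(\tilde{x}_t) - \nabla f(x_t), \nabla f(x_t)},
\end{equation*}
and then applying Young's inequality \eqref{eq:youngs2} with $\rho = 1$ to the second term, followed by $L$-smoothness of $f$ to get $\norm{\nabla f(\tilde{x}_t) - \nabla f(x_t)}^2 \le L^2 \norm{\tilde{x}_t - x_t}^2 = L^2 \norm{\bar{e}_t}^2$. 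Finally the averaged-error inequality $\norm{\bar{e}_t}^2 \le \tfrac{1}{n}\sum_{i=1}^n \norm{e_{i,t}}^2$ from Lemma \ref{lem:akds} converts this to the desired form $\tfrac{\gamma L^2}{2n}\sum_i \norm{e_{i,t}}^2$.

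Assembling the three bounds gives a coefficient of $-\tfrac{\gamma}{2}\bigl[1 - \gamma L(1 + \tfrac{M(C+1)}{n})\bigr]$ in front of $\norm{\nabla f(x_t)}^2$; the step-size assumption $\gamma \le \tfrac{n}{2L(M(C+1)+n)}$ guarantees the bracketed factor is at least $\tfrac{1}{2}$, yielding the claimed $-\tfrac{\gamma}{4}\norm{\nabla f(x_t)}^2$ and absorbing the remaining $\gamma^2$ contribution into the noise term $\tfrac{\gamma^2 L(M\zeta^2+\sigma^2)}{2n}$. Taking total expectation finishes the proof. The only subtle step is the Young's-inequality splitting calibrated precisely so that both the step-size budget and the final coefficient $\tfrac{1}{4}$ on the gradient norm come out; everything else is substitution.
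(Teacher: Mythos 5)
Your proposal is correct and matches the paper's own proof essentially step for step: smoothness on the virtual iterate recursion, the bound \eqref{eq:acbk} on $\mbE_t\norm{\bar{g}_t}^2$, the inner-product split with Young's inequality \eqref{eq:youngs2} at $\rho=1$, $L$-smoothness to convert $\norm{\nabla f(\tilde{x}_t)-\nabla f(x_t)}^2$ into $L^2\norm{\bar{e}_t}^2$, and Lemma \ref{lem:akds} plus the step-size bound to land on the $-\tfrac{\gamma}{4}$ coefficient. No gaps.
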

\begin{proof}
By using the $L$-smoothness of $f$ and taking expectation we have
\begin{eqnarray*}
    \mbE_t[f(\tilde{x}_{t+1})] &\leq& f(\tilde{x}_t) - \dotprod{\nabla f(\tilde{x}_t), \mbE_t[\tilde{x}_{t+1}-\tilde{x}_t]} + \frac{L}{2}\mbE_t\norm{\tilde{x}_{t+1}-\tilde{x}_t}^2\\
     &=& f(\tilde{x}_t) - \gamma\dotprod{\nabla f(\tilde{x}_t), \nabla f(x_t)} + \frac{\gamma^2L}{2} \mbE_t\norm{\frac{1}{n}\sum_{i=1}^n g_{i,t}}^2\\
    &\overset{\eqref{eq:acbk}}{\leq}& f(\tilde{x}_t) - \gamma\dotprod{\nabla f(\tilde{x}_t), \nabla f(x_t)}\\&& + \frac{\gamma^2L}{2} \left((1+\frac{M(C+1)}{n}) \norm{\nabla f(x_t)}^2 + \frac{M\zeta^2}{n} + \frac{\sigma^2}{n}\right)\\
    &\leq& f(\tilde{x}_t) - \gamma \norm{\nabla f(x_t)}^2 + \gamma \dotprod{\nabla f(x_t)-\nabla f(\tilde{x}_t), \nabla f(x_t)}\\&\quad&+\frac{\gamma^2L(M(C+1)+n)}{2n}\norm{\nabla f(x_t)}^2+ \frac{\gamma^2L(M\zeta^2+\sigma^2)}{2n}\\
    &\overset{\eqref{eq:youngs2}}{\leq}& f(\tilde{x}_t) - (\gamma-\frac{\gamma}{2}-\frac{\gamma^2L(M(C+1)+n)}{2n})\norm{\nabla f(x_t)}^2 + \\&&\frac{\gamma \norm{\nabla f(x_t)-\nabla f(\tilde{x}_t)}^2}{2} +\frac{\gamma^2L(M\zeta^2+\sigma^2)}{2n}\\
    &\overset{\substack{{\rm By\;}L-{\rm smoothness}\\ {\rm and\;}\gamma \leq \frac{n}{2L(M(C+1)+n)}}}{\leq}& f(\tilde{x}_t) - \frac{\gamma\norm{\nabla f(x_t)}^2}{4} + \frac{\gamma L^2\norm{x_t-\tilde{x_t}}^2}{2} +\frac{\gamma^2L(M\zeta^2+\sigma^2)}{2n}\\ 
     &=& f(\tilde{x}_t) - \frac{\gamma\norm{\nabla f(x_t)}^2}{4} + \frac{\gamma L^2\norm{\bar{e}_t}^2}{2} +\frac{\gamma^2L(M\zeta^2+\sigma^2)}{2n}\\
     &\overset{\eqref{eq:y}}{\leq}& f(\tilde{x}_t) - \frac{\gamma\norm{\nabla f(x_t)}^2}{4} + \frac{\gamma L^2\frac{1}{n}\sum_{i=1}^n\norm{e_{i,t}}^2}{2} +\frac{\gamma^2L(M\zeta^2+\sigma^2)}{2n}.
\end{eqnarray*}
Taking total expectation yields the lemma.
\end{proof}

\begin{remark}
Rearranging the terms in Lemma \ref{lem:descent_lemma}, performing telescopic sum, and noting that $\zeta=0$ for $n=1$, we get the result in Theorem \ref{thm:ef-gen}.
\end{remark}
\subsubsection{Absolute compressors}\label{sec:non_convex_abs_comp}
\begin{theorem*}\ref{thm:non-convex-wo-bg}
\textbf{(Non-convex convergence of absolute compressors)} Let Assumptions \ref{ass:smoothness}, \ref{ass:minimum}, \ref{ass:m-sigma2-bounded_noise}, and \ref{ass:bounded_similarity} hold.~Then the iterates, $\{x_t\}_{t\geq0}$ of Algorithm \ref{algo:ef-sgd} with an absolute compressor, $\cC$ and a constant step-size, $\gamma \leq \frac{n}{2L(M(C+1)+n)}$, follow
\begin{equation*}
   \frac{1}{T}\sum_{t=0}^{T-1} \mbE \norm{\nabla f(x_t)}^2 \leq \frac{4(f(x_0)-f^{\star})}{\gamma T} +\frac{2\gamma L(M\zeta^2+\sigma^2)}{n} + 2\gamma^2 L^2\kappa^2.
\end{equation*}
\end{theorem*}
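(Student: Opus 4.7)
The plan is to combine the one-step descent inequality from Lemma \ref{lem:descent_lemma} with the absolute error bound from Remark \ref{rem:bounded_memory} and then telescope. The step-size restriction $\gamma \le n/(2L(M(C+1)+n))$ in the theorem exactly matches the hypothesis of Lemma \ref{lem:descent_lemma}, so that lemma applies verbatim at every iteration $t \in \{0,\ldots,T-1\}$, giving a bound on $\mbE[f(\tilde{x}_{t+1})] - \mbE[f(\tilde{x}_t)]$ of the form $-\frac{\gamma}{4}\mbE\|\nabla f(x_t)\|^2 + $ (a noise term) + (an error term involving $\frac{1}{n}\sum_i \mbE\|e_{i,t}\|^2$).

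The second step is to control the error term using the defining property of absolute compressors. Remark \ref{rem:bounded_memory} gives the worker-wise, step-wise bound $\mbE\|e_{i,t+1}\|^2 \le \gamma^2\kappa^2$ (since the step-size is constant), and $e_{i,0}=0$ by initialization in Algorithm \ref{algo:ef-sgd}. Consequently $\frac{1}{n}\sum_{i=1}^n \mbE\|e_{i,t}\|^2 \le \gamma^2\kappa^2$ uniformly in $t$. This uniform-in-$x$ control of the error is the key structural advantage of absolute compressors over $\delta$-contraction operators: no auxiliary bounded-gradient assumption is required to tame the error term.

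The third step is to rearrange Lemma \ref{lem:descent_lemma} to put $\frac{\gamma}{4}\mbE\|\nabla f(x_t)\|^2$ on the left, sum for $t=0,\ldots,T-1$, and use a telescoping argument on $\mbE[f(\tilde{x}_t)]$. Since $\tilde{x}_0 = x_0$ (by Lemma \ref{lem:virtual_iterate}) and $\mbE[f(\tilde{x}_T)] \ge f^\star$ by Assumption \ref{ass:minimum}, the telescoping produces a $f(x_0) - f^\star$ term, while the noise and compression terms accumulate linearly in $T$, contributing $\tfrac{\gamma^2 L(M\zeta^2+\sigma^2)T}{2n}$ and $\tfrac{\gamma^3 L^2 \kappa^2 T}{2}$ respectively. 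Dividing through by $\gamma T/4$ then yields the three-term bound in the statement.

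Honestly, there is no hard obstacle here: Lemma \ref{lem:descent_lemma} and Remark \ref{rem:bounded_memory} already do the heavy lifting. The only place that requires mild care is verifying that $\mbE[f(\tilde{x}_T)] \ge f^\star$, which follows from Assumption \ref{ass:minimum} applied pointwise and then taking expectation, and confirming that the absolute bound $\gamma^2\kappa^2$ on $\mbE\|e_{i,t}\|^2$ does not pick up extra $\|\nabla f(x_t)\|^2$ factors (which would couple back into the gradient-norm sum on the left and could break the argument or force a stricter step-size). Precisely this clean decoupling is why the bound is independent of $C$ and $\zeta$ in the compression term, in contrast to Theorem \ref{thm:non-convex-wo-bg-rel}.
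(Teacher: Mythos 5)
Your proposal is correct and follows exactly the paper's own route: apply Lemma \ref{lem:descent_lemma} at each step, bound the error term by $\gamma^2\kappa^2$ via Remark \ref{rem:bounded_memory} (with $e_{i,0}=0$ handling the initial step), telescope, and divide by $\gamma T/4$. The constants you track also match the stated bound, so there is nothing to add.
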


\begin{proof}
By using Lemma \ref{lem:descent_lemma}, we have
\begin{eqnarray*}
    \mbE[f(\tilde{x}_{t+1})]
     &\leq& \mbE [f(\tilde{x}_t)] - \frac{\gamma\mbE\norm{\nabla f(x_t)}^2}{4} + \frac{\gamma L^2\frac{1}{n}\sum_{i=1}^n\mbE\norm{e_{i,t}}^2}{2} +\frac{\gamma^2L(M\zeta^2+\sigma^2)}{2n}\\
    &\overset{\text{Remark}\; \ref{rem:bounded_memory}}{\leq}& \mbE [f(\tilde{x}_t)] - \frac{\gamma\mbE\norm{\nabla f(x_t)}^2}{4} + \frac{\gamma^3 L^2\kappa^2}{2} +\frac{\gamma^2L(M\zeta^2+\sigma^2)}{2n}.
\end{eqnarray*}
By taking summation over the iterates,  we get
\begin{eqnarray*}
   \frac{1}{T}\sum_{t=0}^{T-1} \mbE \norm{\nabla f(x_t)}^2 &\leq& \frac{4\sum_{t=0}^{T-1} (\mbE [f(\Tilde{x}_t)]-\mbE [f(\Tilde{x}_{t+1})])}{\gamma T} +\frac{2\gamma L(M\zeta^2+\sigma^2)}{n} + 2\gamma^2 L^2\kappa^2 \\
    &\leq& \frac{4(f(x_0)-f^{\star})}{\gamma T} +\frac{2\gamma L(M\zeta^2+\sigma^2)}{n} + 2\gamma^2 L^2\kappa^2.
\end{eqnarray*}
Hence the result. 
\end{proof}
\subsubsection{$\delta$-contraction operators}\label{sec:non-convex-delta}
We now provide an error-bound for $\delta$-contraction operators, which is an extension of the single node case in \cite{stich2019error}.

\begin{lemma}
Define $e_{i,t}$ as in Algorithm \ref{algo:ef-sgd}. Then by using a $\delta$-compressor, $\cC$ with a constant step-size,  $\gamma\leq\frac{1}{2L(2/\delta+M)\sqrt{C+1}}$ we have
\begin{equation}\label{eq:error_bound}
     \sum_{t=0}^{T} \left[\frac{1}{n} \sum_{i=1}^n \mbE \norm{e_{i,t}}^2 \right] \leq \frac{1}{4L^2}\sum_{t=0}^T\mbE \norm{\nabla f(x_t)}^2 + \frac{2\gamma^2(T+1)}{\delta}\left(\left(\frac{2}{\delta}+M\right)\zeta^2+\sigma^2\right).
\end{equation}
\end{lemma}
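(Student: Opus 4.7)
My plan is to derive a one-step recursion for $\mbE\|e_{i,t+1}\|^2$, unroll it, sum the resulting geometric series in $(1-\delta/2)$, and finally verify that the prescribed step-size makes the coefficient of $\mbE\|\nabla f(x_s)\|^2$ at most $1/(4L^2)$.

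First, since for $\delta$-contraction operators $\gamma_t \cC(p_{i,t}/\gamma_t) = \cC(p_{i,t})$ (as noted after Algorithm~\ref{algo:ef-sgd}), the error update simplifies to $e_{i,t+1} = p_{i,t} - \cC(p_{i,t})$, so $\mbE_{\cC}[\|e_{i,t+1}\|^2 \mid p_{i,t}] \leq (1-\delta)\|p_{i,t}\|^2$ by Definition~\ref{defn:rel_comp}. The key observation, essential for obtaining the stated $(2/\delta+M)\zeta^2 + \sigma^2$ dependence rather than a looser $1/\delta^2$ multiplier on $\sigma^2$, is to condition on the past \emph{before} applying Young's inequality. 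Writing $p_{i,t} = e_{i,t} + \gamma \nabla f_i(x_t) + \gamma \xi_{i,t}$ and exploiting $\mbE_t[\xi_{i,t}] = 0$, the cross-terms vanish and
\begin{equation*}
\mbE_t\|p_{i,t}\|^2 = \|e_{i,t} + \gamma \nabla f_i(x_t)\|^2 + \gamma^2 \mbE_t\|\xi_{i,t}\|^2.
\end{equation*}
Only then do I apply Young's inequality~\eqref{eq:youngs} to the deterministic first term with $\rho = \delta/(2(1-\delta))$, chosen so that $(1-\delta)(1+\rho) = 1 - \delta/2$ exactly and $(1-\delta)(1+1/\rho) \leq 2/\delta$. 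Combined with Assumption~\ref{ass:m-sigma2-bounded_noise}, this yields
\begin{equation*}
\mbE\|e_{i,t+1}\|^2 \leq \left(1-\tfrac{\delta}{2}\right)\mbE\|e_{i,t}\|^2 + \left(\tfrac{2}{\delta} + M\right)\gamma^2\,\mbE\|\nabla f_i(x_t)\|^2 + \gamma^2 \sigma^2.
\end{equation*}

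Averaging over $i \in [n]$ and applying Assumption~\ref{ass:bounded_similarity} in the form $\tfrac{1}{n}\sum_i\|\nabla f_i(x_t)\|^2 \leq (C+1)\|\nabla f(x_t)\|^2 + \zeta^2$ gives a recursion for $\bar E_t := \tfrac{1}{n}\sum_i \mbE\|e_{i,t}\|^2$ with $\bar E_0 = 0$. Unrolling to $\bar E_{t+1} \leq \sum_{s=0}^{t}(1-\delta/2)^{t-s} A_s$ (where $A_s$ collects the gradient, $\zeta^2$, and $\sigma^2$ source terms), summing $\bar E_t$ over $t \in \{0, \ldots, T\}$, swapping the order of summation, and using $\sum_{k \geq 0}(1-\delta/2)^k \leq 2/\delta$ yields
\begin{equation*}
\sum_{t=0}^T \bar E_t \leq \tfrac{2(2/\delta + M)(C+1)\gamma^2}{\delta}\sum_{s=0}^{T}\mbE\|\nabla f(x_s)\|^2 + \tfrac{2\gamma^2(T+1)}{\delta}\left[\left(\tfrac{2}{\delta}+M\right)\zeta^2 + \sigma^2\right].
\end{equation*}
The step-size condition $\gamma \leq [2L(2/\delta + M)\sqrt{C+1}]^{-1}$ then implies $\tfrac{2(2/\delta+M)(C+1)\gamma^2}{\delta} \leq \tfrac{1}{2L^2(2 + M\delta)} \leq \tfrac{1}{4L^2}$, finishing the lemma.

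The main obstacle is the conditioning subtlety highlighted above. The naive alternative, of first bounding $\mbE\|g_{i,t}\|^2 \leq (1+M)\|\nabla f_i(x_t)\|^2 + \sigma^2$ and then applying Young's to $\|e_{i,t} + \gamma g_{i,t}\|^2$ as a single block, forces the $(1+1/\rho) \leq 2/\delta$ Young's factor to multiply $\sigma^2$, leading to a $\tfrac{4\gamma^2}{\delta^2}\sigma^2$ term after unrolling instead of the tight $\tfrac{2\gamma^2}{\delta}\sigma^2$. Peeling off $\gamma \xi_{i,t}$ via its unbiasedness \emph{before} invoking Young's keeps the variance contribution additive, so its amplification is limited to the single $2/\delta$ factor from unrolling the error recursion.
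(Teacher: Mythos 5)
Your proof is correct and follows essentially the same route as the paper's: apply the $\delta$-contraction bound, peel off the zero-mean noise via conditioning before invoking Young's inequality with $\rho=\delta/(2(1-\delta))$, bound the noise by Assumption~\ref{ass:m-sigma2-bounded_noise} and the gradient dissimilarity by Assumption~\ref{ass:bounded_similarity}, unroll the resulting $(1-\delta/2)$-contraction, sum the geometric series to pick up the $2/\delta$ factor, and verify the step-size condition. The "conditioning subtlety" you highlight is exactly the ordering the paper uses, and your final step-size computation matches the paper's constant of $1/(4L^2)$.
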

\begin{proof}
We have
\begin{eqnarray*}
\frac{1}{n} \sum_{i=1}^n \mbE_{t} \norm{e_{i,t+1}}^2 &=& \frac{1}{n} \sum_{i=1}^n \mbE_{\xi_{i,t}}\left[\mbE_{\cC}\norm{e_{i,t}+\gamma g_{i,t}-\gamma\cC(\frac{e_{i,t}}{\gamma}+ g_{i,t})}^2|x_{t}\right]\\
&\overset{{\rm By\;}\eqref{eq:quantization}}{\leq}& \frac{1}{n} \sum_{i=1}^n\gamma^2(1-\delta)\mbE_{\xi_{i,t}}\left[\norm{\frac{e_{i,t}}{\gamma}+ g_{i,t}}^2|x_{t}\right]\\
&\overset{\mbE[\xi_{i,t}|x_{t}]=0}{=}& \frac{(1-\delta)}{n} \sum_{i=1}^n\norm{e_{i,t}+\gamma\nabla f_i(x_{t})}^2+\frac{(1-\delta)}{n} \sum_{i=1}^n\gamma^2\mbE_{\xi_{i,t}}\left[\norm{\xi_{i,t}}^2|x_{t}\right]\\
&\overset{\text{Assumption\;}\ref{ass:m-sigma2-bounded_noise}}{\leq}& \frac{(1-\delta)}{n} \sum_{i=1}^n\norm{e_{i,t}+\gamma\nabla f_i(x_{t})}^2 + \frac{(1-\delta)\gamma^2}{n}\sum_{i=1}^n \left(M\norm{\nabla f_i(x_{t})}^2+\sigma^2\right)\\
&\overset{\eqref{eq:youngs}}{\leq}& \frac{(1-\delta)(1+\rho)}{n} \sum_{i=1}^n\norm{e_{i,t}}^2 + \frac{(1-\delta)(1+\rho^{-1}+M)\gamma^2}{n}\sum_{i=1}^n\norm{\nabla f_i(x_{t})}^2 \\ &&+ (1-\delta)\gamma^2\sigma^2\\
&\overset{\text{Assumption\;}\ref{ass:bounded_similarity}}{\leq}& \frac{(1-\delta)(1+\rho)}{n} \sum_{i=1}^n\norm{e_{i,t}}^2 + \left((1-\delta)(1+\rho^{-1}+M)\gamma^2(C+1)\right)\norm{\nabla f(x_{t})}^2 \\
&& + \left((1-\delta)(1+\rho^{-1}+M)\gamma^2\zeta^2\right)+(1-\delta)\gamma^2\sigma^2\\
&\leq& \frac{(1-\delta)(1+\rho)}{n} \sum_{i=1}^n\norm{e_{i,t}}^2 \\
&&+ \gamma^2 \left((1+\rho^{-1}+M)(C+1)\norm{\nabla f(x_{t})}^2+(1+\rho^{-1}+M)\zeta^2 + \sigma^2\right).
\end{eqnarray*}
By unrolling the recurrence, taking total expectation, and picking $\rho=\frac{\delta}{2(1-\delta)}$, such that $(1+\rho^{-1})=\frac{2-\delta}{\delta}\leq \frac{2}{\delta}$ and $(1-\delta)(1+\rho)\leq(1-\frac{\delta}{2})$ we find
\begin{eqnarray*}
\frac{1}{n} \sum_{i=1}^n \mbE \norm{e_{i,t+1}}^2 &\leq& \gamma^2\sum_{i=0}^{t} [(1-\delta)(1+\rho)]^{t-i}\left(\left(\frac{2}{\delta}+M\right)(C+1)\mbE \norm{\nabla f(x_i)}^2 + \left(\frac{2}{\delta}+M\right)\zeta^2+\sigma^2\right)\\
&\leq& \gamma^2 \sum_{i=0}^{t} (1-\frac{\delta}{2})^{t-i}\left(\left(\frac{2}{\delta}+M\right)(C+1)\mbE \norm{\nabla f(x_i)}^2 + \left(\frac{2}{\delta}+M\right)\zeta^2+\sigma^2\right).
\end{eqnarray*}
Hence, we have
\begin{eqnarray*}
\sum_{t=0}^{T} \left[\frac{1}{n} \sum_{i=1}^n \mbE \norm{e_{i,t}}^2 \right] &=& \gamma^2\sum_{t=0}^{T}\sum_{i=0}^{t-1} (1-\frac{\delta}{2})^{t-1-i}\left(\left(\frac{2}{\delta}+M\right)(C+1)\mbE \norm{\nabla f(x_i)}^2 + \left(\frac{2}{\delta}+M\right)\zeta^2+\sigma^2\right)\\
&\leq& \gamma^2\sum_{t=0}^{T-1}\sum_{j=0}^{T-t-1} (1-\frac{\delta}{2})^{j}\left(\left(\frac{2}{\delta}+M\right)(C+1)\mbE \norm{\nabla f(x_t)}^2 + \left(\frac{2}{\delta}+M\right)\zeta^2+\sigma^2\right)\\
&\leq& \gamma^2 \sum_{t=0}^{T-1}\left(\left(\frac{2}{\delta}+M\right)(C+1)\mbE \norm{\nabla f(x_t)}^2 + \left(\frac{2}{\delta}+M\right)\zeta^2+\sigma^2\right)  \sum_{j=0}^{\infty}(1-\frac{\delta}{2})^{j}\\
&=& \gamma^2\sum_{t=0}^{T-1}\left(\frac{2}{\delta}\right)\left(\left(\frac{2}{\delta}+M\right)(C+1)\mbE \norm{\nabla f(x_t)}^2 + \left(\frac{2}{\delta}+M\right)\zeta^2+\sigma^2\right) \\
&\leq& \gamma^2\sum_{t=0}^{T}\left(\frac{2}{\delta}\right)\left(\left(\frac{2}{\delta}+M\right)(C+1)\mbE \norm{\nabla f(x_t)}^2 + \left(\frac{2}{\delta}+M\right)\zeta^2+\sigma^2\right) \\
&=& \sum_{t=0}^{T}\left(\gamma^2\left(\frac{2}{\delta}\right)\left(\frac{2}{\delta}+M\right)(C+1)\mbE \norm{\nabla f(x_t)}^2\right) + \sum_{t=0}^{T}\frac{2\gamma^2}{\delta}\left(\left(\frac{2}{\delta}+M\right)\zeta^2+\sigma^2\right).
\end{eqnarray*}
Choosing $\gamma \leq \frac{1}{2L(2/\delta+M)\sqrt{C+1}}$, we get $\gamma^2\left(\frac{2}{\delta}\right)\left(\frac{2}{\delta}+M\right)\leq \frac{1}{4L^2(C+1)}$.~Combining all together we get
\begin{eqnarray*}
    \sum_{t=0}^{T} \left[\frac{1}{n} \sum_{i=1}^n \mbE \norm{e_{i,t}}^2 \right] \leq \frac{1}{4L^2}\sum_{t=0}^T\mbE \norm{\nabla f(x_t)}^2 + \frac{2\gamma^2(T+1)}{\delta}\left(\left(\frac{2}{\delta}+M\right)\zeta^2+\sigma^2\right).
\end{eqnarray*}
Hence the result. 
\end{proof}

By using the previous bound, we now provide the non-convex convergence result for $\delta$-contraction operators.
\begin{theorem*}\ref{thm:non-convex-wo-bg-rel}
\textbf{(Non-convex convergence of $\delta$-contraction operators)} Let Assumptions \ref{ass:smoothness}, \ref{ass:minimum}, \ref{ass:m-sigma2-bounded_noise}, and \ref{ass:bounded_similarity} hold.~Then the iterates, $\{x_t\}_{t\geq0}$ of Algorithm \ref{algo:ef-sgd} with a $\delta$-compressor and a constant step-size $\textstyle{\gamma \leq \min\{\frac{n}{2L(M(C+1)+n)}, \frac{1}{2L(2/\delta+M)\sqrt{C+1}}\}}$ follow \useshortskip
\begin{eqnarray*}
\frac{1}{T}\sum_{t=0}^{T-1} \mbE \norm{\nabla f(x_t)}^2 \leq  \frac{8(f(x_0)-f^{\star})}{\gamma T} + \frac{4 \gamma L(M\zeta^2+\sigma^2)}{n}+\frac{8\gamma^2L^2}{\delta}\left(\left(\frac{2}{\delta}+M\right)\zeta^2+\sigma^2\right).
\end{eqnarray*}
\end{theorem*}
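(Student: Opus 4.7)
The plan is to combine the non-convex descent recurrence from Lemma \ref{lem:descent_lemma} with the error-accumulation bound \eqref{eq:error_bound} derived earlier for $\delta$-contraction operators. Both tools are in place, so the proof amounts to a careful telescoping argument followed by absorbing one copy of the gradient-norm sum from the right-hand side into the left.

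First, I would apply Lemma \ref{lem:descent_lemma} at each iteration $t\in\{0,\ldots,T-1\}$, which requires $\gamma\le \frac{n}{2L(M(C+1)+n)}$. Summing telescopically (using $\tilde x_0=x_0$ and $\mbE[f(\tilde x_T)]\ge f^\star$) would give
\[
\frac{\gamma}{4}\sum_{t=0}^{T-1}\mbE\norm{\nabla f(x_t)}^2 \le (f(x_0)-f^\star) + \frac{T\gamma^2 L(M\zeta^2+\sigma^2)}{2n} + \frac{\gamma L^2}{2}\sum_{t=0}^{T-1}\frac{1}{n}\sum_{i=1}^n\mbE\norm{e_{i,t}}^2 .
\]
This is structurally identical to the absolute-compressor proof of Theorem \ref{thm:non-convex-wo-bg}; the only difference is that we can no longer bound each $\mbE\|e_{i,t}\|^2$ by a constant.

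Next, I would invoke the cumulative error bound \eqref{eq:error_bound}, which holds under the additional step-size restriction $\gamma\le \frac{1}{2L(2/\delta+M)\sqrt{C+1}}$ (this is where the minimum in the step-size hypothesis enters). Substituting \eqref{eq:error_bound} into the previous inequality produces a term $\frac{\gamma L^2}{2}\cdot\frac{1}{4L^2}\sum_t \mbE\norm{\nabla f(x_t)}^2 = \frac{\gamma}{8}\sum_t \mbE\norm{\nabla f(x_t)}^2$ on the right-hand side, plus a residual $\frac{\gamma^3 T L^2}{\delta}\bigl((2/\delta+M)\zeta^2+\sigma^2\bigr)$. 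Absorbing the gradient-norm piece into the left-hand side leaves coefficient $\frac{\gamma}{4}-\frac{\gamma}{8}=\frac{\gamma}{8}$, and dividing through by $\frac{\gamma T}{8}$ yields the claimed bound, with the factor $8$ in front of $(f(x_0)-f^\star)/(\gamma T)$ arising precisely from this absorption.

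The main obstacle is the coupling between the step-size constraint and the absorption step: the factor $\frac{1}{4L^2}$ in \eqref{eq:error_bound} is calibrated exactly so that after multiplying by $\frac{\gamma L^2}{2}$ one gets $\frac{\gamma}{8}<\frac{\gamma}{4}$, leaving a strictly positive coefficient on the LHS. This is why the second step-size bound in the hypothesis takes the form $\gamma\le \frac{1}{2L(2/\delta+M)\sqrt{C+1}}$ -- any weaker bound would fail to deliver the $\frac{1}{4L^2}$ factor in \eqref{eq:error_bound}, and any stronger bound would be wasteful. Everything else is a routine telescoping/rearrangement.
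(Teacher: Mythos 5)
Your proposal is correct and follows essentially the same route as the paper's proof: telescope the descent recurrence of Lemma \ref{lem:descent_lemma}, substitute the cumulative error bound \eqref{eq:error_bound}, absorb the resulting $\frac{\gamma}{8}\sum_t\mbE\norm{\nabla f(x_t)}^2$ into the left-hand side, and divide by $\frac{\gamma T}{8}$. The provenance you give for the factor $8$ and for the second step-size constraint matches the paper exactly.
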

\begin{proof}
Summing over the iterates $t=0$ to $t=T-1$ in  \eqref{eq:descent_lemma} of Lemma \ref{lem:descent_lemma}, we have
\begin{eqnarray*}
\mbE[f(\tilde{x}_{T})]
&\leq& f(x_0) - \frac{\sum_{t=0}^{T-1}\gamma\mbE\norm{\nabla f(x_t)}^2}{4} + \frac{\gamma L^2\sum_{t=0}^{T-1}\frac{1}{n}\sum_{i=1}^n\mbE\norm{e_{i,t}}^2}{2} +\sum_{t=0}^{T-1}\frac{\gamma^2L(M\zeta^2+\sigma^2)}{2n}\\
&\overset{\eqref{eq:error_bound}}{\leq}& f(x_0) - (\frac{\gamma}{4}-\frac{\gamma}{8})\sum_{t=0}^{T-1}\mbE \norm{\nabla f(x_t)}^2 +\frac{\gamma^3L^2T}{\delta}\left(\left(\frac{2}{\delta}+M\right)\zeta^2+\sigma^2\right)+\frac{\gamma^2TL(M\zeta^2+\sigma^2)}{2n}.
\end{eqnarray*}
Rearranging, we get
\begin{eqnarray*}
\frac{1}{T}\sum_{t=0}^{T-1} \mbE \norm{\nabla f(x_t)}^2 &\leq& \frac{8(f(x_0)-\mbE [f(\tilde{x}_t)])}{\gamma T} + \frac{4 \gamma L(M\zeta^2+\sigma^2)}{n}+\frac{8\gamma^2L^2}{\delta}\left(\left(\frac{2}{\delta}+M\right)\zeta^2+\sigma^2\right)\\
&\leq& \frac{8(f(x_0)-f^{\star})}{\gamma T} + \frac{4 \gamma L(M\zeta^2+\sigma^2)}{n}+\frac{8\gamma^2L^2}{\delta}\left(\left(\frac{2}{\delta}+M\right)\zeta^2+\sigma^2\right).
\end{eqnarray*}
Hence the result. 
\end{proof}

\subsubsection{Uncompressed SGD}\label{sec:non-convex-uncomp-sgd}
We provide the convergence result of no-compression SGD (Algorithm \ref{algo:ef-sgd} with an identity compressor, i.e., $\cC(x)=x$ for all $x\in\R^d$). 
\begin{theorem}\label{thm:non-convex-wo-bg-sgd}
\textbf{(Non-convex convergence of SGD)} Let Assumptions \ref{ass:smoothness}, \ref{ass:minimum}, \ref{ass:m-sigma2-bounded_noise}, and \ref{ass:bounded_similarity} hold.~Then the iterates, $\{x_t\}_{t\geq0}$ of Algorithm \ref{algo:ef-sgd} by using an identity compressor ($\cC(x)=x$, for all $x\in\R^d$) with a constant step-size, $\gamma \leq \frac{n}{L(M(C+1)+n)}$ follow
\begin{equation*}
   \frac{1}{T}\sum_{t=0}^{T-1} \mbE \norm{\nabla f(x_t)}^2 \leq \frac{2(f(x_0)-f^{\star})}{\gamma T} +\frac{\gamma L(M\zeta^2+\sigma^2)}{n}.
\end{equation*}
\end{theorem}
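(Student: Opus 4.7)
The result is the specialization of the descent argument in Lemma \ref{lem:descent_lemma} to the case $\cC(x)=x$, where the compression error vanishes identically. The plan is to redo the $L$-smoothness step directly (rather than quote Lemma \ref{lem:descent_lemma} as a black box) because the absence of error allows a tighter step-size constraint: there is no need to split a cross term with Young's inequality, so we gain a factor of two in the admissible $\gamma$ and in the final coefficient.

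First, since $e_{i,t}=0$ for every $i,t$ when $\cC$ is the identity, we have $\tilde{x}_t=x_t$ and the update simplifies to $x_{t+1}=x_t-\gamma\cdot\frac{1}{n}\sum_i g_{i,t}$. Applying $L$-smoothness to $f$ and taking conditional expectation gives
\begin{equation*}
\mbE_t[f(x_{t+1})]\le f(x_t)-\gamma\dotprod{\nabla f(x_t),\mbE_t\left[\tfrac{1}{n}\sum_i g_{i,t}\right]}+\tfrac{\gamma^2 L}{2}\mbE_t\norm{\tfrac{1}{n}\sum_i g_{i,t}}^2.
\end{equation*}
Since $\mbE_t[g_{i,t}]=\nabla f_i(x_t)$, the inner product equals $-\gamma\norm{\nabla f(x_t)}^2$ (no cross term to split via \eqref{eq:youngs2}). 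For the second moment, I will invoke \eqref{eq:acbk} directly to bound it by $(1+\tfrac{M(C+1)}{n})\norm{\nabla f(x_t)}^2+\tfrac{M\zeta^2+\sigma^2}{n}$.

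Collecting terms yields
\begin{equation*}
\mbE_t[f(x_{t+1})]\le f(x_t)-\gamma\Bigl(1-\tfrac{\gamma L(M(C+1)+n)}{2n}\Bigr)\norm{\nabla f(x_t)}^2+\tfrac{\gamma^2 L(M\zeta^2+\sigma^2)}{2n}.
\end{equation*}
The step-size hypothesis $\gamma\le\tfrac{n}{L(M(C+1)+n)}$ forces the bracketed factor to be at least $\tfrac{1}{2}$, so the leading coefficient is at most $-\tfrac{\gamma}{2}$. This is exactly where the improved constant (relative to Lemma \ref{lem:descent_lemma}'s $-\tfrac{\gamma}{4}$) comes from: no Young splitting is needed because there is no error-induced perturbation $x_t-\tilde{x}_t$ to control.

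Finally, taking total expectation, rearranging, telescoping from $t=0$ to $T-1$, and using Assumption \ref{ass:minimum} to bound the telescoped terms by $f(x_0)-f^\star$, we obtain
\begin{equation*}
\frac{1}{T}\sum_{t=0}^{T-1}\mbE\norm{\nabla f(x_t)}^2\le\frac{2(f(x_0)-f^\star)}{\gamma T}+\frac{\gamma L(M\zeta^2+\sigma^2)}{n},
\end{equation*}
which matches the claim. There is no real obstacle here; the only subtlety worth stating explicitly is that the admissible step size is twice that of Theorems \ref{thm:non-convex-wo-bg} and \ref{thm:non-convex-wo-bg-rel}, reflecting the absence of a compression-error penalty term in the descent recursion.
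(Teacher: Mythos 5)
Your proposal is correct and follows essentially the same route as the paper's proof: a direct $L$-smoothness descent step on $f(x_{t+1})$, the second-moment bound \eqref{eq:acbk}, absorption of the $\norm{\nabla f(x_t)}^2$ coefficient using $\gamma \leq \frac{n}{L(M(C+1)+n)}$, and a telescoping sum. Your added commentary on why the constants improve relative to Lemma \ref{lem:descent_lemma} (no Young splitting of a cross term, no error-norm term) is accurate but does not change the argument.
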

\begin{proof}
We use the $L$-smoothness of $f$ to find
\begin{eqnarray*}
    \mbE_t[f(x_{t+1})] &\leq& f(x_t) - \dotprod{\nabla f(x_t), \mbE_t[x_{t+1}-x_t]} + \frac{L}{2}\mbE_t\norm{x_{t+1}-x_t}^2\\
    &=& f(x_t) - \gamma\dotprod{\nabla f(x_t), \mbE_t[\bar{g}_t])} + \frac{\gamma^2L}{2}\mbE_t\norm{\bar{g}_t}^2\\
     &=& f(x_t) - \gamma\norm{ \nabla f(x_t)}^2 + \frac{\gamma^2L}{2} \mbE_t\norm{\frac{1}{n}\sum_{i=1}^n g_{i,t}}^2\\
    &\overset{\eqref{eq:acbk}}{\leq}& f(x_t) - \gamma\norm{ \nabla f(x_t)}^2 + \frac{\gamma^2L}{2} \left((1+\frac{M(C+1)}{n}) \norm{\nabla f(x_t)}^2 + \frac{M\zeta^2}{n} + \frac{\sigma^2}{n}\right)\\
    &=& f(x_t) - \gamma\left(1-\frac{\gamma L(M(C+1)+n)}{2n}\right)\norm{ \nabla f(x_t)}^2  +\frac{\gamma^2L(M\zeta^2+\sigma^2)}{2n}\\
    &\overset{\gamma \leq \frac{n}{L(M(C+1)+n)}}{\leq}& f(x_t) - \frac{\gamma}{2}\norm{ \nabla f(x_t)}^2  +\frac{\gamma^2L(M\zeta^2+\sigma^2)}{2n}. 
\end{eqnarray*}
By summing over the iterates and taking total expectation, we get
\begin{equation*}
   \frac{1}{T}\sum_{t=0}^{T-1} \mbE \norm{\nabla f(x_t)}^2 \leq \frac{2(f(x_0)-f^{\star})}{\gamma T} +\frac{\gamma L(M\zeta^2+\sigma^2)}{n}.
\end{equation*}
Hence the result. 
\end{proof}

\subsubsection{Final convergence result}\label{sec:big_oh}
From Remark \ref{rem:non-convex-big-oh}, the following corollary describes the $\cO(1/\sqrt{nT})$ convergence with an appropriate step-size for absolute compressors and $\delta$-contraction operators. 
\begin{corollary}\label{cor:big_oh}
Let Assumptions \ref{ass:smoothness}, \ref{ass:minimum}, \ref{ass:m-sigma2-bounded_noise}, and \ref{ass:bounded_similarity} hold and let $\{x_t\}_{t\geq0}$ denote the iterates of algorithm \ref{algo:ef-sgd}. Then, if \\
\begin{inparaitem}
    \item $\cC$ is an absolute compressor, we have
    \begin{eqnarray*}
     \frac{1}{T}\sum_{t=0}^{T-1} \mbE \norm{\nabla f(x_t)}^2 = \cO\left(\frac{\sqrt{L(M\zeta^2+\sigma^2)(f(x_0)-f^{\star})}}{\sqrt{nT}}+\frac{L((\frac{M}{n}(C+1)+1)+\frac{n\kappa^2}{M\zeta^2+\sigma^2})(f(x_0)-f^{\star})}{T}\right).
    \end{eqnarray*}
    \item $\cC$ is a $\delta$-contraction operator, we have
    \begin{eqnarray*}
       \textstyle \frac{1}{T}\sum_{t=0}^{T-1} \mbE \norm{\nabla f(x_t)}^2= \cO\left(\frac{\sqrt{\left(L(M\zeta^2+\sigma^2)\right)(f(x_0)-f^{\star})}}{\sqrt{nT}}+\frac{L\left(\max\{\frac{M}{n}(C+1)+1), (\frac{1}{\delta}+M)\sqrt{C+1}\}+\frac{n\left((1+M\delta)\zeta^2+\delta\sigma^2\right)}{\delta^2(M\zeta^2+\sigma^2)}\right)(f(x_0)-f^{\star})}{T}\right).
    \end{eqnarray*}
    \item $\cC$ is the identity compressor, we have
    \begin{equation*}
        \frac{1}{T}\sum_{t=0}^{T-1} \mbE \norm{\nabla f(x_t)}^2 = \cO\left(\frac{\sqrt{L(M\zeta^2+\sigma^2)(f(x_0)-f^{\star})}}{\sqrt{nT}}+\frac{L(\frac{M}{n}(C+1)+1)(f(x_0)-f^{\star})}{T}\right).
    \end{equation*}
\end{inparaitem}
\end{corollary}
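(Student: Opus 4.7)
The plan is to derive the stated $\mathcal{O}$-rates by plugging each of the three per-theorem bounds directly into the step-size tuning lemmas \ref{lem:order_bounds} and \ref{lem:order_bounds_cubic} from \S\ref{sec:convergence_technical_results}. Each of Theorems \ref{thm:non-convex-wo-bg}, \ref{thm:non-convex-wo-bg-rel}, and \ref{thm:non-convex-wo-bg-sgd} already yields an explicit upper bound on $\frac{1}{T}\sum_{t=0}^{T-1}\mathbb{E}\|\nabla f(x_t)\|^2$ of the form $\frac{A}{\gamma T}+B\gamma$, optionally plus $D\gamma^2$ in the compressed cases, valid for every $\gamma\leq 1/d$ with an explicit ceiling $d$. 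The tuning lemmas are constructed precisely to minimize such expressions over $\gamma$ in the admissible interval, so the corollary reduces to identifying $r_0, c, b, d$ in each case and substituting.

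For Theorem \ref{thm:non-convex-wo-bg} I would take $r_0=4(f(x_0)-f^\star)$, $c=2L(M\zeta^2+\sigma^2)/n$, $b=2L^2\kappa^2$, and $d=2L(M(C+1)+n)/n$ and apply Lemma \ref{lem:order_bounds_cubic}. The three resulting summands $\frac{dr_0}{T}$, $\frac{\sqrt{cr_0}}{\sqrt{T}}$, $\frac{br_0}{cT}$ expand into the claimed $\mathcal{O}(1/\sqrt{nT})$ noise term (the middle one, after pulling $n$ inside the square root), the $L(M(C+1)/n+1)(f(x_0)-f^\star)/T$ contribution from $dr_0/T$, and the $nL\kappa^2(f(x_0)-f^\star)/((M\zeta^2+\sigma^2)T)$ contribution from $br_0/(cT)$; the latter two combine into the $1/T$ coefficient of the first bullet. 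The $\delta$-contraction case, Theorem \ref{thm:non-convex-wo-bg-rel}, is structurally identical with $b=8L^2((\tfrac{2}{\delta}+M)\zeta^2+\sigma^2)/\delta$ and $d$ taken as the larger of the two ceilings in the theorem, which is exactly why the $\max$ appears in the second bullet; simplifying $br_0/c$ produces the $\frac{n((1+M\delta)\zeta^2+\delta\sigma^2)}{\delta^2(M\zeta^2+\sigma^2)}$ factor up to constants absorbed by $\mathcal{O}$. For the identity compressor the $\gamma^2$ term is absent, so Lemma \ref{lem:order_bounds} applies directly with $r_0=2(f(x_0)-f^\star)$, $c=L(M\zeta^2+\sigma^2)/n$, $d=L(M(C+1)+n)/n$.

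No conceptual obstacle remains; the only item warranting attention is checking that the step-size chosen by the tuning lemma lies within the admissible interval imposed by the corresponding theorem, which is automatic since Lemmas \ref{lem:order_bounds} and \ref{lem:order_bounds_cubic} explicitly enforce $\gamma\leq 1/d$ via their two-regime case analysis ($r_0/(cT)\leq 1/d^2$ versus the opposite). The remaining work is bookkeeping, most delicately in the $\delta$-contraction bullet, where one must confirm that expanding $\tfrac{1}{\delta}((\tfrac{2}{\delta}+M)\zeta^2+\sigma^2)=\tfrac{2\zeta^2+M\delta\zeta^2+\delta\sigma^2}{\delta^2}$ matches the stated $(1+M\delta)\zeta^2+\delta\sigma^2$ numerator up to an absolute factor of $2$ that is absorbed into the $\mathcal{O}$.
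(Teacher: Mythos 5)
Your proposal is correct and follows exactly the paper's own (one-line) proof: invoke Lemma \ref{lem:order_bounds_cubic} on Theorems \ref{thm:non-convex-wo-bg} and \ref{thm:non-convex-wo-bg-rel} and Lemma \ref{lem:order_bounds} on Theorem \ref{thm:non-convex-wo-bg-sgd}, with the constant identifications you list. Your explicit bookkeeping (including the factor-of-$2$ discrepancy in the $\delta$-contraction numerator being absorbed into the $\cO$) is consistent with the stated corollary and adds detail the paper omits.
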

\begin{proof}
Invoking Lemma \ref{lem:order_bounds_cubic} in Theorem \ref{thm:non-convex-wo-bg} and Theorem \ref{thm:non-convex-wo-bg-rel}, and Lemma \ref{lem:order_bounds} in Theorem \ref{thm:non-convex-wo-bg-sgd} we get the results.
\end{proof}

While compression does not affect the slower decaying $\cO(1/\sqrt{nT})$ term for both absolute compressors and $\delta$-contraction operators, we observe $\delta$-contraction operators have $1/\delta^2$ dependence in the $\cO(1/T)$ term when $\zeta\neq0$ (heterogeneous data). Therefore, in this setting, the Top-$k$ sparsifier has $d^2/k^2$ in the numerator of $\cO(1/T)$ term. On the other hard, hard-threshold has $d\lambda^2$ in the numerator of $\cO(1/T)$ term even when $\zeta\neq0$, and thus has a significantly better dependence on $d$. 

\subsection{Convex convergence analysis}\label{sec:abs-quasi}
In this Section, we provide convergence results for {\em distributed compressed SGD} with {\em absolute compressors} and an {\em EF} where the loss function on each worker $f_i$ is $\mu$-strongly convex with $\mu\geq0$ (see Assumption \ref{ass:mu-strong-convex}). Our analysis is inspired by the proof techniques in \cite{stich2019error} which analyzes an EF SGD with $\delta$-contraction operators in the single node ($n=1$) case. \cite{beznosikov2020biased} extended this analysis to the distributed ($n>1$) case for $\delta$-contraction operators.

The main highlight of our result is---while gradient compression affects the leading slower-decaying term in the distributed convergence of $\delta$-contraction operators, in case of absolute-compressors, under the same set-up, gradient compression only affects the faster-decaying terms. We start with the following key result by Nesterov \cite{nesterov2018lectures} for convex and smooth functions.
\begin{lemma}
Let $f_i$ follow Assumptions \ref{ass:smoothness} and Assumption \ref{ass:mu-strong-convex} with $\mu\geq0$, then
\begin{equation}\label{eq:L_bregman_div}
    \norm{\nabla f_i(y)-\nabla f_i(x)}^2 \leq {2L}(f_i(y)-f_i(x)-\dotprod{\nabla f_i(x),y-x}), \qquad \forall x,y \in \R^d.
\end{equation}
\end{lemma}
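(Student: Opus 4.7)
The plan is to apply the standard ``descent from smoothness applied at the minimizer'' trick to a shifted auxiliary function. Fix an arbitrary $x \in \R^d$ and define
\begin{equation*}
\phi(y) \eqdef f_i(y) - f_i(x) - \dotprod{\nabla f_i(x), y - x}.
\end{equation*}
I would first observe that $\phi$ inherits the two structural properties of $f_i$: Assumption \ref{ass:mu-strong-convex} with $\mu\geq0$ gives (at least) convexity of $f_i$, and subtracting an affine function preserves convexity, so $\phi$ is convex. Similarly, $\phi$ is $L$-smooth since $\nabla\phi(y) = \nabla f_i(y) - \nabla f_i(x)$ and the gradient of an affine function is constant, so $\|\nabla\phi(y)-\nabla\phi(y')\| = \|\nabla f_i(y) - \nabla f_i(y')\| \leq L\|y-y'\|$.

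Next, I would note that $\phi$ attains a global minimum at $y=x$: indeed $\nabla\phi(x) = 0$, and since $\phi$ is convex every stationary point is a global minimum. Moreover $\phi(x) = 0$, so $\phi(z) \geq 0$ for all $z \in \R^d$.

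Now I would apply the $L$-smoothness upper bound (which follows from Assumption \ref{ass:smoothness} applied to $\phi$) at the test point $z = y - \tfrac{1}{L}\nabla\phi(y)$, which is the one-step gradient-descent update from $y$ with step $1/L$:
\begin{equation*}
\phi(z) \;\leq\; \phi(y) + \dotprod{\nabla\phi(y), z-y} + \frac{L}{2}\norm{z-y}^2 \;=\; \phi(y) - \frac{1}{2L}\norm{\nabla\phi(y)}^2.
\end{equation*}
Combined with $\phi(z) \geq 0$ this rearranges to $\tfrac{1}{2L}\norm{\nabla\phi(y)}^2 \leq \phi(y)$. Substituting back the definitions of $\phi$ and $\nabla\phi$ gives exactly \eqref{eq:L_bregman_div}.

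There is no real obstacle beyond careful bookkeeping: the only subtlety is to confirm that Assumption \ref{ass:mu-strong-convex} with the allowed value $\mu = 0$ still supplies plain convexity (which is needed for ``stationary $\Rightarrow$ global minimum''), and that $L$-smoothness of $f_i$ transfers verbatim to $\phi$. Both are immediate from the definitions, so the argument is essentially a two-line consequence of the descent lemma applied at the minimizer of the Bregman function.
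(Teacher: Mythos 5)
Your proof is correct: it is the canonical argument from Nesterov's book, which is exactly the source the paper cites for this lemma without reproducing a proof. The only cosmetic remark is that Assumption \ref{ass:smoothness} is stated as the quadratic upper bound rather than as gradient-Lipschitzness, but the quadratic upper bound transfers verbatim from $f_i$ to your auxiliary function $\phi$ (the affine terms cancel), which is all your displayed inequality actually uses.
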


We start with the strongly-convex decent lemma from \cite{beznosikov2020biased}.
\begin{lemma}\label{lem:scd} \textbf{(Strongly convex descent lemma)} (Lemma 21 in \cite{beznosikov2020biased}) 
Let Assumptions \ref{ass:smoothness}, \ref{ass:minimum}, \ref{ass:m-sigma2-bounded_noise}, and \ref{ass:mu-strong-convex} hold.~Denote $D\eqdef\frac{1}{n}\sum_{i=1}^n\norm{\nabla f_i(x^{\star})}^2$. If $\gamma_t\leq\frac{1}{4L(1+{2M}/n)}$, for all $t \geq 0$, then the iterates, $\{\tilde{x}_t\}_{t\geq0}$ of Algorithm \ref{algo:ef-sgd} follow
\begin{eqnarray*}
\mbE_t\norm{\tilde{x}_{t+1}-x^{\star}}^2 \leq (1-\frac{\mu\gamma_t}{2})\norm{\tilde{x}_t-x^{\star}}^2 - \frac{\gamma_t}{2}[f(x_t)-f^{\star}] + 3L\gamma_t  \norm{x_t-\tilde{x}_t}^2 + (\gamma_t^2)\frac{\sigma^2+2MD}{n}.
\end{eqnarray*}
\end{lemma}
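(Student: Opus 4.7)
The plan is to carry out a standard perturbed-iterate analysis against the virtual iterate $\tilde{x}_t$ from Lemma \ref{lem:virtual_iterate}, which satisfies the clean recursion $\tilde{x}_{t+1} = \tilde{x}_t - \gamma_t \bar{g}_t$. First I would expand
\begin{equation*}
\mbE_t\|\tilde{x}_{t+1}-x^\star\|^2 = \|\tilde{x}_t-x^\star\|^2 - 2\gamma_t \dotprod{\nabla f(x_t),\tilde{x}_t-x^\star} + \gamma_t^2\mbE_t\|\bar{g}_t\|^2,
\end{equation*}
using $\mbE_t[g_{i,t}]=\nabla f_i(x_t)$. Then I would split $\tilde{x}_t-x^\star = (x_t-x^\star) + (\tilde{x}_t-x_t)$ in the cross term: the $x_t-x^\star$ piece gets handled by $\mu$-strong convexity (Assumption \ref{ass:mu-strong-convex}) to give $-2\gamma_t(f(x_t)-f^\star) - \gamma_t\mu\|x_t-x^\star\|^2$, while the perturbation piece gets controlled by Young's inequality \eqref{eq:youngs2} with parameter $\rho = 1/(2L)$, producing $\frac{\gamma_t}{2L}\|\nabla f(x_t)\|^2 + 2L\gamma_t\|x_t-\tilde{x}_t\|^2$.

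Next I would invoke the smoothness-at-optimum inequality coming from \eqref{eq:L_bregman_div} with $\nabla f(x^\star)=0$, namely $\|\nabla f(x_t)\|^2\le 2L(f(x_t)-f^\star)$, which collapses the Young's gradient term to exactly $\gamma_t(f(x_t)-f^\star)$. The variance term requires a similar argument per-worker: writing $\mbE_t\|\bar g_t\|^2 = \|\nabla f(x_t)\|^2 + \frac{1}{n^2}\sum_i \mbE_t\|\xi_{i,t}\|^2$, applying Assumption \ref{ass:m-sigma2-bounded_noise}, then using $\|\nabla f_i(x_t)\|^2 \le 2\|\nabla f_i(x_t)-\nabla f_i(x^\star)\|^2 + 2\|\nabla f_i(x^\star)\|^2$ together with \eqref{eq:L_bregman_div} per coordinate $i$ (so the cross terms $\dotprod{\nabla f_i(x^\star), x_t-x^\star}$ average to zero by $\nabla f(x^\star)=0$), produces $\frac{1}{n}\sum_i\|\nabla f_i(x_t)\|^2\le 4L(f(x_t)-f^\star) + 2D$. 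Combined, $\gamma_t^2\mbE_t\|\bar g_t\|^2\le 2L\gamma_t^2(1+\tfrac{2M}{n})(f(x_t)-f^\star) + \gamma_t^2\tfrac{\sigma^2+2MD}{n}$, and the step-size constraint $\gamma_t\le \frac{1}{4L(1+2M/n)}$ turns the first summand into at most $\frac{\gamma_t}{2}(f(x_t)-f^\star)$.

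At this point the coefficients on $(f(x_t)-f^\star)$ total $-2\gamma_t+\gamma_t+\frac{\gamma_t}{2} = -\frac{\gamma_t}{2}$, matching the lemma. The final bookkeeping step is to turn $-\gamma_t\mu\|x_t-x^\star\|^2$ into the desired contraction on $\|\tilde{x}_t-x^\star\|^2$: using the reverse-Young bound $\|x_t-x^\star\|^2 \ge \tfrac{1}{2}\|\tilde{x}_t-x^\star\|^2 - \|x_t-\tilde{x}_t\|^2$, I pick up $(1-\tfrac{\mu\gamma_t}{2})$ in front of $\|\tilde{x}_t-x^\star\|^2$ and add $\gamma_t\mu\|x_t-\tilde{x}_t\|^2$ to the perturbation. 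Combining the two perturbation contributions $(2L+\mu)\gamma_t\|x_t-\tilde{x}_t\|^2$ and using $\mu\le L$ bounds the coefficient by $3L\gamma_t$, yielding the stated inequality with the constant $\frac{\sigma^2+2MD}{n}$ on the noise term. The main obstacle is aligning the Young's parameter $\rho$ with the step-size threshold so that the three sources of $(f(x_t)-f^\star)$ — strong convexity, Young's, and the variance bound — collapse to exactly $-\gamma_t/2$; the slightly finicky piece underneath is the heterogeneous-data bound that absorbs $\frac{1}{n}\sum_i\|\nabla f_i(x_t)\|^2$ into $f(x_t)-f^\star$ and $D$ without introducing gradient-similarity constants.
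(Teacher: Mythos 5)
Your proposal is correct and follows essentially the same route as the paper's proof: the same perturbed-iterate expansion, the same split of the cross term into an $x_t-x^\star$ part (handled by strong convexity plus the reverse-Young bound) and a perturbation part (Young's with $\rho=1/(2L)$ followed by $\|\nabla f(x_t)\|^2\le 2L(f(x_t)-f^\star)$), and the same per-worker variance bound yielding $2L(1+2M/n)(f(x_t)-f^\star)+(2MD+\sigma^2)/n$. The only detail you make explicit that the paper leaves implicit is bounding $(2L+\mu)\gamma_t$ by $3L\gamma_t$ via $\mu\le L$.
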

\begin{proof}
We have
\begin{eqnarray*}
\norm{\tilde{x}_{t+1}-x^{\star}}^2 &\overset{\text{Lemma \ref{lem:virtual_iterate}}}{=}& \norm{\tilde{x}_{t}-x^{\star}}^2 - 2 \gamma_t\dotprod{\bar{g}_t, \tilde{x}_t-x^{\star}} + \gamma_t^2 \norm{\bar{g}_t}^2\\
&=& \norm{\tilde{x}_t-x^{\star}}^2 - 2 \gamma_t\dotprod{\bar{g}_t, x_t-x^{\star}}+\gamma_t^2 \norm{\bar{g}_t}^2 + 2\gamma_t\dotprod{\bar{g}_t, x_t-\tilde{x}_t}.
\end{eqnarray*}
Therefore,
\begin{eqnarray}
\notag
\mbE_t \norm{\tilde{x}_{t+1}-x^{\star}}^2  &=& \norm{\tilde{x}_{t}-x^{\star}}^2 - 2 \gamma_t \dotprod{\mbE_t[ \bar{g}_t], x_t - x_\star} + \gamma_t^2 \mbE_t\norm{\bar{g}_t}^2 + 2\gamma_t\dotprod{\mbE_t[\bar{g}_t], x_t-\tilde{x}_t}\\
\label{eq:dlone}
&=& \norm{\tilde{x}_{t}-x^{\star}}^2 - 2 \gamma_t \dotprod{\nabla f(x_t), x_t - x_\star} + \gamma_t^2 \mbE_t\norm{\bar{g}_t}^2 + 2\gamma_t\dotprod{\nabla f(x_t), x_t-\tilde{x}_t}.\nonumber\\
\end{eqnarray} 
First, we bound $2\dotprod{\nabla f(x_t), x_t - \tilde{x}_t}$. We use Young's inequality \eqref{eq:youngs2} with $\rho=\frac{1}{2L}$ and get
\begin{eqnarray}
\notag 2\dotprod{\nabla f(x_t), x_t - \tilde{x}_t} &\leq& \frac{1}{2L}\norm{\nabla f(x_t)}^2 + 2L\norm{x_t - \tilde{x}_t}^2\\
\label{eq:dlthree} &\overset{\eqref{eq:L_bregman_div}, \nabla f(x^{\star})=0}{\leq}& f(x_t)-f(x^{\star}) + 2L\norm{x_t - \tilde{x}_t}^2.
\end{eqnarray}
Next, we bound $-2\dotprod{\nabla f(x_t), x_t - x^{\star}}$. We use the $\mu$-strong convexity of $f$ to find
\begin{equation}\label{eq:lsas}
   -2\dotprod{\nabla f(x_t), x_t-x^{\star}} \leq 2(f(x^{\star})-f(x_t))-\mu\norm{x_t-x^{\star}}^2.
\end{equation}
However, since we want to work with $\norm{\tilde{x}_t-x^{\star}}^2$ instead of $\norm{x_t-x^{\star}}^2$, we get rid of $\norm{x_t-x^{\star}}^2$ using \eqref{eq:youngs} with $\rho=1$ as
\begin{equation*}
     \norm{x_t-x^{\star}}^2 \geq  \frac{1}{2}\norm{\tilde{x}_t-x^{\star}}^2-\norm{x_t-\tilde{x}_t}^2.
\end{equation*}
Substituting this in Equation \eqref{eq:lsas}, we get 
\begin{equation}\label{eq:dlfour}
    -2\dotprod{\nabla f(x_t), x_t-x^{\star}} \leq 2(f(x^{\star})-f(x_t))-\frac{\mu}{2}\norm{\tilde{x}_t-x^{\star}}^2+ \mu \norm{x_t-\tilde{x}_t}^2.
\end{equation}
Finally, we bound $\mbE_t \norm{\bar{g}_t}^2$ as
\begin{eqnarray}
\notag
\mbE_t \norm{\frac{1}{n}\sum_{i=1}^n g_{i,t}}^2  &=& \mbE\left[\norm{\frac{1}{n}\sum_{i=1}^n (\nabla f_i(x_t)+ \xi_{i,t})}^2| x_t\right]\\
\notag&=& \mbE \left[\norm{ \nabla f(x_t)+ \frac{1}{n}\sum_{i=1}^n\xi_{i,t}}^2| x_t\right]\\
\notag&\overset{\mbE[\xi_{i,t}|x_{t}]=0}{=}& \norm{\nabla f(x_t)}^2 + \mbE\left[\norm{ \frac{1}{n}\sum_{i=1}^n\xi_{i,t}}^2| x_t\right]\\
\notag&\overset{\mbE[\xi_{i,t}|x_{t}]=0}{=}& \norm{\nabla f(x_t)}^2 + \frac{1}{n^2}\sum_{i=1}^n\mbE\left[\norm{ \xi_{i,t}}^2| x_t\right]\\
\notag&\overset{\text{Assumption}\;\ref{ass:m-sigma2-bounded_noise}}{\leq}& \norm{\nabla f(x_t)}^2 + \frac{1}{n^2}\sum_{i=1}^n (M\norm{\nabla f_i(x_{t})}^2+\sigma^2)\\
\notag&=& \norm{\nabla f(x_t)}^2+ \frac{M}{n^2}\sum_{i=1}^n\norm{\nabla f_i(x_{t})-\nabla f_i(x^{\star})+ \nabla f_i(x^{\star})}^2+\frac{\sigma^2}{n}\\
\notag&\leq& \norm{\nabla f(x_t)}^2+ \frac{2M}{n^2}\sum_{i=1}^n\left(\norm{\nabla f_i(x_{t})-\nabla f_i(x^{\star})}^2+ \norm{\nabla f_i(x^{\star})}^2\right)\nonumber\\&&+\frac{\sigma^2}{n}\nonumber\\
\notag&\overset{\eqref{eq:L_bregman_div}, D=\frac{1}{n}\sum_{i=1}^n\norm{\nabla f_i (x^{\star})}^2}{\leq}& \norm{\nabla f(x_t)}^2+ \frac{2M}{n^2}\sum_{i=1}^n2L[f_i(x_t)-f_i(x^{\star})-\dotprod{\nabla f_i(x^{\star}), x_t-x^{\star}}] \\&\quad&+\frac{2MD}{n}+\frac{\sigma^2}{n}\\
\notag&\overset{\nabla f(x^{\star})=0}{=}& \norm{\nabla f(x_t)- \nabla f(x^{\star})}^2+ \frac{4LM}{n}(f(x_t)-f(x^{\star}))+\frac{2MD+\sigma^2}{n}\\
\label{eq:dltwo}&\overset{\eqref{eq:L_bregman_div}, \nabla f(x^{\star})=0}{\leq}& 2L\left(1+\frac{2M}{n}\right)(f(x_t)-f(x^{\star}))+ \frac{2MD+\sigma^2}{n}.
\end{eqnarray}
We now substitute \eqref{eq:dlthree}, \eqref{eq:dlfour}, and \eqref{eq:dltwo} in \eqref{eq:dlone} to get 
\begin{eqnarray*}
\mbE_t \norm{\tilde{x}_{t+1}-x^{\star}}^2 &=& \norm{\tilde{x}_{t}-x^{\star}}^2 - 2 \gamma_t \dotprod{\nabla f(x_t), x_t - x_\star} + \gamma_t^2 \mbE_t\norm{\bar{g}_t}^2 + 2\gamma_t\dotprod{\nabla f(x_t), x_t-\tilde{x}_t} \\
&\leq& \left(1-\frac{\mu\gamma_t}{2}\right)\norm{\tilde{x}_{t}-x^{\star}}^2  - \gamma_t \left(1-\gamma_t\cdot2L\left(1+\frac{2M}{n}\right)\right) (f(x_t)-f(x^{\star})) \\ &&+ \gamma_t(2L+\mu)\norm{x_t-\tilde{x}_t}^2 + \gamma_t^2\frac{2MD+\sigma^2}{n}.
\end{eqnarray*}
Choosing $\gamma_t\leq\frac{1}{4L(1+{2M}/n)}$ gives the desired result.
\end{proof}

Next, we give the convex convergence result of {\em distributed EF SGD} with {\em absolute compressors}.

\subsubsection{Absolute compressors}\label{sec:convex-abs-comp}
The next theorem combines the results of Theorems \ref{thm:sc_abs} and \ref{thm:weakly-convex} from the main paper. We present them as a single theorem (Theorem \ref{thm:sc_abs_whole}) to keep the structure of the proofs simple.
\begin{theorem}\label{thm:sc_abs_whole}
Let Assumptions \ref{ass:smoothness}, \ref{ass:minimum}, \ref{ass:m-sigma2-bounded_noise}, and \ref{ass:mu-strong-convex} hold.~Denote $D\eqdef\frac{1}{n}\sum_{i=1}^n\norm{\nabla f_i(x^{\star})}^2$, and $R_0=\norm{x_0-x^{\star}}^2$.~Then the iterates, $\{x_t\}_{t\geq0}$ of Algorithm \ref{algo:ef-sgd} with an absolute compressor, $\cC_{\upsilon}$ have the following convergence rates if Assumption \ref{ass:mu-strong-convex} is satisfied with the following choices of the parameters:

\begin{inparaenum}[i)]
    \item \textbf{(Theorem \ref{thm:sc_abs})} If $\mu>0$, a constant setp-size $\{\gamma_t=\gamma\}_{t\geq0}$, with $\gamma\leq\frac{1}{4L(1+{2M}/n)}$ is chosen as in Lemma \ref{lem:sc_order_bounds_const_lr} and weights $\{w_t=(1-\mu\gamma/2)^{-(t+1)}\}_{t\geq0}$ then
    \begin{equation*}
        \mbE[f(\bar{x}_T)]-f^{\star} = \tilde{\cO}\left(L(1+M/n)R_0\exp\left[-\frac{\mu T}{8L(1+2M/n)}\right]+\frac{\sigma^2+MD}{\mu nT}+ \frac{L\kappa^2}{\mu^2T^2}\right).
    \end{equation*}

    \item \textbf{(Theorem \ref{thm:weakly-convex})} If $\mu=0$, a constant setp-size $\{\gamma_t=\gamma\}_{t\geq0}$, with $\gamma\leq\frac{1}{4L(1+{2M}/n)}$ is chosen as in Lemma \ref{lem:order_bounds_cubic} and weights $\{w_t=1\}_{t\geq0}$ then
    \begin{equation*}
         \mbE[f(\bar{x}_T)]-f^{\star} = \cO\left(\frac{\sqrt{(\sigma^2+MD)R_0}}{\sqrt{nT}}+\frac{\left(\frac{nL\kappa^2}{\sigma^2+MD}+L(1+M/n)\right)R_0}{T}\right).
    \end{equation*}

    \item If $\mu>0$, setp-sizes $\{\gamma_t=\frac{4}{\mu(\phi+t)}\}_{t\geq0}$, and weights $\{w_t=\phi+t\}_{t\geq0}$, respectively with $\phi=\frac{16L}{\mu}(1+\frac{2M}{n})$ then
    \begin{equation*}
        \mbE[f(\bar{x}_T)]-f^{\star} = \cO\left(\frac{\sigma^2+MD}{\mu nT}+ \frac{L^2(1+M/n)^2R_0+L\mu\kappa^2\ln(T)}{\mu^2T^2}\right).
    \end{equation*}

\end{inparaenum}
In the above, $\bar{x}_T=\frac{1}{W_T}\sum_{t=0}^Tw_tx_t$, and $W_T=\sum_{t=0}^Tw_t$.
\end{theorem}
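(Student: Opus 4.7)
}

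The plan is to reduce all three cases to a single one-step recurrence and then invoke the appropriate telescoping lemma from \S\ref{sec:convergence_technical_results}. The starting point is the strongly convex descent lemma \ref{lem:scd}, which (writing $r_t \eqdef \mbE\norm{\tilde{x}_t - x^\star}^2$) gives
\begin{equation*}
r_{t+1} \le \bigl(1-\tfrac{\mu\gamma_t}{2}\bigr) r_t - \tfrac{\gamma_t}{2}\bigl(\mbE f(x_t) - f^\star\bigr) + 3L\gamma_t\,\mbE\norm{x_t - \tilde{x}_t}^2 + \gamma_t^2\cdot \tfrac{\sigma^2 + 2MD}{n}.
\end{equation*}
Since $x_t - \tilde{x}_t = \bar{e}_t$, Lemma \ref{lem:akds} together with the absolute-compressor error bound from Remark \ref{rem:bounded_memory} yields $\mbE\norm{x_t-\tilde{x}_t}^2 \le \gamma_{t-1}^2\kappa^2$ (and $0$ at $t=0$). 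This is the key structural point: for absolute compressors the error is controlled unconditionally by $\gamma^2\kappa^2$, producing a \emph{cubic-in-$\gamma$} contribution, whereas $\delta$-contraction operators only admit a linear-in-$\gamma^2$ bound tied to $1/\delta$, which is why the convergence rates differ.

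For the constant step-size cases, plugging the error bound in and rearranging produces
\begin{equation*}
\mbE f(x_t) - f^\star \le \tfrac{2}{\gamma}\bigl[(1-a\gamma)r_t - r_{t+1}\bigr] + 2c\gamma + 2b\gamma^2
\end{equation*}
with $a = \mu/2$, $c = (\sigma^2 + 2MD)/n$, $b = 3L\kappa^2$. For case (i), I would multiply by $w_t = (1-a\gamma)^{-(t+1)}$, average, and apply Lemma \ref{lem:sc_order_bounds_const_lr} with $d = 4L(1+2M/n)$; Jensen's inequality against the convex $f$ then converts $\frac{1}{W_T}\sum w_t \mbE f(x_t)$ into $\mbE f(\bar{x}_T)$, yielding the stated $\tilde{\cO}$ bound. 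For case (ii), $\mu=0$ forces $w_t=1$, so I would simply telescope the recurrence (since $1-a\gamma = 1$), divide by $T$, and apply Lemma \ref{lem:order_bounds_cubic} (which is tailor-made for the $r_0/(\gamma T) + c\gamma + b\gamma^2$ shape arising here); Jensen then delivers the claimed $\cO\bigl(\sqrt{(\sigma^2+MD)R_0/(nT)} + \cdots/T\bigr)$ rate.

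For case (iii) with decreasing step-sizes $\gamma_t = 4/(\mu(\phi+t))$, the same recurrence applies with $a=\mu/2$, and weights $w_t = \phi+t$ match the hypotheses of Lemma \ref{lem:sc_order_bounds_dec_lr}. The minor subtlety is the asymmetry between $\gamma_t$ and $\gamma_{t-1}$ in the cubic term; since $\gamma_{t-1}/\gamma_t = (\phi+t)/(\phi+t-1) \le 2$ whenever $\phi \ge 1$, we get $3L\gamma_t\gamma_{t-1}^2\kappa^2 \le 12L\gamma_t^3\kappa^2$, which fits the $b\gamma_t^2 w_t$ slot of the lemma (after absorbing one factor of $\gamma_t$ into the logarithmic budget). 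Lemma \ref{lem:sc_order_bounds_dec_lr} then yields the $\cO\bigl((\sigma^2+MD)/(\mu nT) + \ldots/T^2 + L\kappa^2\ln T /(\mu T^2)\bigr)$ bound, and Jensen concludes as before.

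The main obstacle is bookkeeping for the cubic $3L\gamma^3\kappa^2$ term: standard EF analyses (e.g.\ for $\delta$-contraction operators) only encounter a quadratic-in-$\gamma$ noise term, so one must carefully verify that Lemma \ref{lem:sc_order_bounds_const_lr} / Lemma \ref{lem:order_bounds_cubic} can accommodate both the $c\gamma$ and the $b\gamma^2$ contributions simultaneously, and extract the optimal step-size from the three-term balance $r_0/(\gamma T) + c\gamma + b\gamma^2$. Once this is handled, the rest is routine reductions via Jensen's inequality and constant tracking.
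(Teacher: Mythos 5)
Your proposal is correct and follows essentially the same route as the paper: the strongly convex descent lemma (Lemma \ref{lem:scd}) combined with the absolute-compressor error bound of Remark \ref{rem:bounded_memory}, rearranged into the $\frac{1}{\gamma_t}(1-a\gamma_t)r_t - \frac{1}{\gamma_t}r_{t+1} + c\gamma_t + b\gamma_t^2$ template and dispatched to Lemmas \ref{lem:sc_order_bounds_const_lr}, \ref{lem:order_bounds_cubic}, and \ref{lem:sc_order_bounds_dec_lr} for the three cases, with Jensen's inequality converting the weighted average into $\mbE[f(\bar{x}_T)]$. Your explicit handling of the $\gamma_{t-1}$ versus $\gamma_t$ index in the cubic error term for the decreasing step-size case is in fact slightly more careful than the paper, which silently writes $3L\gamma_t^3\kappa^2$.
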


\begin{proof}
By using Lemma \ref{lem:virtual_iterate} in Lemma \ref{lem:scd}, and taking total-expectation over all the previous iterates, we have 
\begin{eqnarray*}
\mbE\norm{\tilde{x}_{t+1}-x^{\star}}^2 &\leq& (1-\frac{\mu\gamma_t}{2})\mbE\norm{\tilde{x}_t-x^{\star}}^2 - \frac{\gamma_t}{2}\mbE[f(x_t)-f^{\star}] + 3L\gamma_t \mbE \norm{\bar{e}_t}^2 + \gamma_t^2(\frac{\sigma^2+2MD}{n})\\
 &\overset{\eqref{eq:y}}{\leq}& (1-\frac{\mu\gamma_t}{2})\mbE\norm{\tilde{x}_t-x^{\star}}^2 - \frac{\gamma_t}{2}\mbE[f(x_t)-f^{\star}] + 3L\gamma_t \sum_{i=1}^n\frac{1}{n}\mbE \norm{e_{i,t}}^2\\&& + \gamma_t^2(\frac{\sigma^2+2MD}{n})\\
 &\overset{\text{Remark}\;\ref{rem:bounded_memory}}{\leq}& (1-\frac{\mu\gamma_t}{2})\mbE\norm{\tilde{x}_t-x^{\star}}^2 - \frac{\gamma_t}{2}\mbE[f(x_t)-f^{\star}] + 3L\gamma_t^3\kappa^2 + \gamma_t^2(\frac{\sigma^2+2MD}{n}).
\end{eqnarray*}
Rearranging, we get
\begin{equation}\label{eq:scdl}
\mbE[f(x_t)]-f^{\star} \leq \frac{2}{\gamma_t}(1-\frac{\mu\gamma_t}{2})\mbE\norm{\tilde{x}_t-x^{\star}}^2 - \frac{2}{\gamma_t}\mbE\norm{\tilde{x}_{t+1}-x^{\star}}^2+ \gamma_t\frac{2\sigma^2+4MD}{n} + 6L\gamma_t^2\kappa^2.
\end{equation}
With $r_t=2\mbE\norm{\tilde{x}_t-x^{\star}}^2$, $a=\frac{\mu}{2}$, $c=\frac{2\sigma^2+4MD}{n}$, $b=6L\kappa^2$, we can see the RHS as $\frac{1}{\gamma_t}(1-a\gamma_t)r_t - \frac{1}{\gamma_t}r_{t+1} + c\gamma_t + b\gamma_t^2$. Thus, we use Lemma \ref{lem:sc_order_bounds_const_lr} and Lemma \ref{lem:sc_order_bounds_dec_lr} to get the first and the third result respectively. Note that to get the LHS, we use the convexity of $f$ as $\frac{1}{W_T}\sum_{t=0}^Tw_tf(x_t)\geq f(\bar{x}_T)$. Finally, to get the second result, we substitute $\mu=0$ in Equation \eqref{eq:scdl} and perform telescopic sum to get
\begin{equation*}
    \frac{\sum_{t=0}^T\mbE[f(x_t)]}{T+1}-f^{\star} \leq  \frac{2\norm{x_0-x^{\star}}^2}{\gamma(T+1)}+ \frac{2\sigma^2+4MD}{n}\gamma + 6L\kappa^2\gamma^2.
\end{equation*}
We now use Lemma \ref{lem:order_bounds_cubic} and convexity of $f$ to arrive at the desired result.

\end{proof}

\subsection{Comparison against unbiased compressors}\label{sec:unbiased-comp}
Till now, we have discussed the convergence of compressed SGD using EF. However, unbiased relative compressors which satisfy \myNum{i} $\mbE_{\cC}[\cC(x)]=x$; and \myNum{ii} $\mbE_{\cC}\norm{\cC(x)-x}^2 \leq \Omega\norm{x}^2$ do not require EF. We compare the convergence of such unbiased compressors and absolute compressors with EF. With the notations above,
\cite{horvath2021a} provide the following convergence result for unbiased compressors in the strongly convex case:
\begin{equation*}
   \textstyle \mbE[f(\bar{x}_T)]-f^{\star} + \mu \mbE[\norm{x_T-x^*}^2] \leq 64 \Omega_n L (1+M/n)R_0 \exp\left[-\frac{\mu T}{4\Omega_n L(1+M/n)}\right] + 36\frac{(\Omega_n-1)D+\Omega\sigma^2/n}{\mu T},
\end{equation*}
where $\Omega_n=\frac{\Omega-1}{n}+1.$
Comparing with Theorem \ref{thm:sc_abs}, we find unbiased compressors have compression affecting the slower-decaying $\frac{1}{T}$ term. Although, we note that their convergence is in both the iterates and functional values, whereas ours is only in functional values.

\section{Addendum to numerical experiments}\label{apdx:add_exps}

\smartparagraph{Overview.} In this section, we provide: 
\begin{enumerate}[i)]
\item The experimental settings and implementation details of our DNN experiments (\S\ref{sec:experiment_details}).
\item Further discussion on the large error-accumulation of Top-$k$ and its effect on total-error (\S\ref{sec:large_error_topk}).
\item Logistic regression experiments (\S\ref{sec:logistic_regression}).
\item Comparison against the state-of-the-art adaptive sparsifier ACCORDION \cite{agarwal2020accordion}. (\S\ref{sec:accordion})
\item Experiment with Entire-model Top-$k$ (\S\ref{apdx:fullmodel}).
\item Experiments without EF, and discussion on different forms of EF (\S\ref{apdx:memory}).
\end{enumerate}

\subsection{Experimental settings and implementation details}\label{sec:experiment_details}  
We implement the sparsifiers in {\tt PyTorch}. For each method, a gradient reducer class is defined, which invokes the appropriate compression function and then perform the aggregation among the workers. Tables \ref{tab:benchmarks}, \ref{tab:resnet18}, \ref{tab:lstm}, and \ref{tab:ncf} provide the experimental details for each of the tasks. We used the default hyper-parameters provided in the mentioned repositories for each task.

\begin{table*}
\centering
\caption{Summary of the benchmarks used}
\label{tab:benchmarks}
\scalebox{0.9}{
\begin{tabular}{ccccc}
\hline
Model & Task & Dataset & No. of Parameters & Optimizer  \\
\hline
ResNet-18 \cite{resnet} & Image classification & CIFAR-10~\cite{cifar10}     & 11,173,962  & SGD+Nesterov momentum  \\
LSTM \cite{lstm}  & Language modelling   & Wikitext-2 \cite{wikitext2}       & 28,949,319  & Vanilla SGD  \\
NCF \cite{ncf}   & Recommendation       & Movielens-20M  & 31,832,577   & ADAM \cite{adam}\\
 \hline
\end{tabular}
}
\end{table*}

\begin{table*}[!htbp]
\centering
\caption{Image classification task}\label{tab:resnet18}
\vspace{3pt}
\scalebox{1.0}{
\begin{tabular}{cc}
\hline
Dataset & CIFAR-10 \\
Architecture & ResNet-18 \\
Repository &PowerSGD \cite{PowerSGD} \\
&\footnotesize{See \url{https://github.com/epfml/powersgd}} \\
License &MIT \\
\hline
Number of workers & 8 \\
Global Batch-size & 256 $\times$ 8 \\
\hline 
Optimizer & SGD with Nesterov Momentum \\
Momentum & 0.9 \\
Post warmup LR & $0.1$ $\times$ $16$ \\
LR-decay & $/10$ at epoch 150 and 250 \\ 
LR-warmup & Linearly within 5 epochs, starting from $0.1$ \\
Number of Epochs &300 \\
Weight decay &$10^{-4}$ \\
\hline
Repetitions & 3, with different seeds \\
\hline
Hard-threshold: $\lambda$ values & $\{1.2\times10^{-2}, 7.2\times10^{-3}, 5\times10^{-3}, 3\times10^{-3}, 1.8\times10^{-3}\}$ \\
Top-$k$: $k$ values & $\{0.03\%, 0.06\%, 0.12\%, 0.3\%, 0.75\%\}$ \\
\hline
\end{tabular}
}
\end{table*}

\begin{table*}[!htbp]
\centering
\caption{Language modelling task}
\label{tab:lstm}
\vspace{3pt}
\scalebox{1.0}{
\begin{tabular}{cc}
\hline
Dataset & WikiText2 \\
Architecture & LSTM \\ 
Repository &PowerSGD \cite{PowerSGD} \\
&\footnotesize{See \url{https://github.com/epfml/powersgd}} \\
License & MIT \\
\hline
Number of workers & 8 \\
Global Batch-size & 128 $\times$ 8 \\
\hline 
Optimizer & vanilla SGD \\
Post warmup LR & $1.25$ $\times$ $16$ \\
LR-decay & $/10$ at epoch 60 and 80 \\
LR-warmup & Linearly within 5 epochs, starting from $1.25$ \\
Number of Epochs & 90 \\
Weight decay &$0$ \\
\hline
Repetitions & 3, with different seeds \\
\hline
Hard-threshold: $\lambda$ values & $\{4.5\times10^{-3},2.75\times10^{-3},1.6\times10^{-3},1.12\times10^{-3}\}$ \\
Top-$k$: $k$ values & $\{0.025\%, 0.05\%, 0.1\%, 0.2\%\}$ \\
\hline
\end{tabular}
}
\end{table*}

\begin{table*}
\centering
\caption{Recommendation task}\label{tab:ncf}
\vspace{3pt}
\scalebox{1}{
\begin{tabular}{cc}
\hline
Dataset & Movielens-20M \\
Architecture & NCF \\
Repository & NVIDIA Deep Learning Examples \\
&\footnotesize{See \url{https://github.com/NVIDIA/DeepLearningExamples}} \\
\hline
Number of workers & 8 \\
Global Batch-size & $2^{20}$ \\
\hline 
Optimizer & ADAM \\
ADAM $\beta_1$ & 0.25 \\
ADAM $\beta_2$  & 0.5 \\
ADAM LR & $4.5\times10^{-3}$ \\
Number of Epochs & 30 \\
Weight decay &$0$ \\
Dropout &$0.5$ \\
\hline
Repetitions & 3, with different seeds \\
\hline
Hard-threshold: $\lambda$ values & $\{2\times10^{-6},1.3\times10^{-6}, 1\times10^{-6}, 4\times10^{-7}\}$ \\
Top-$k$: $k$ values & $\{7.7\%, 9.5\%, 11.3\%, 13.7\%\}$ \\
License & Open Source \\
\hline
\end{tabular}
}
\vspace{7pt}
\end{table*}

\subsection{Top-$k$ suffers from large error accumulation}\label{sec:large_error_topk}
In Figure \ref{fig:topk_error_accumulation_a}, we show the cascading effect (mentioned in \S\ref{sec:discussion}) for the experiment in Figure \ref{fig:topk_error_accumulation}. We observe that the error norm profile in Figure\ref{fig:topk_error_accumulation_a} c closely follows the error compensated gradient norm profile in Figure\ref{fig:topk_error_accumulation_a} b.  

In Figure \ref{fig:large_error_Topk_wikitext2} and Figure \ref{fig:large_error_Topk_ncf-ml20m}, we show that hard-threshold has a better convergence because of a smaller total-error in LSTM-WikiText2 and NCF-Ml-20m benchmarks. We note that we use the ADAM optimizer on the NCF-Ml-20m benchmark, and therefore our total-error insight is not theoretically justified in this case. Nevertheless, our experiment empirically confirms that the total-error perspective is useful for optimizers beyond vanilla SGD and momentum SGD.

\begin{figure*}[!htbp]
\centering
\begin{subfigure}{0.32\textwidth}
		\centering
		\includegraphics[width=\textwidth]{Figures/experiments/logistic-regression/sgd_gisette_num_of_workers_20_func_vals_bits.pdf}
				\caption{}
	\end{subfigure}
\begin{subfigure}{0.32\textwidth}
		\centering
		\includegraphics[width=\textwidth]{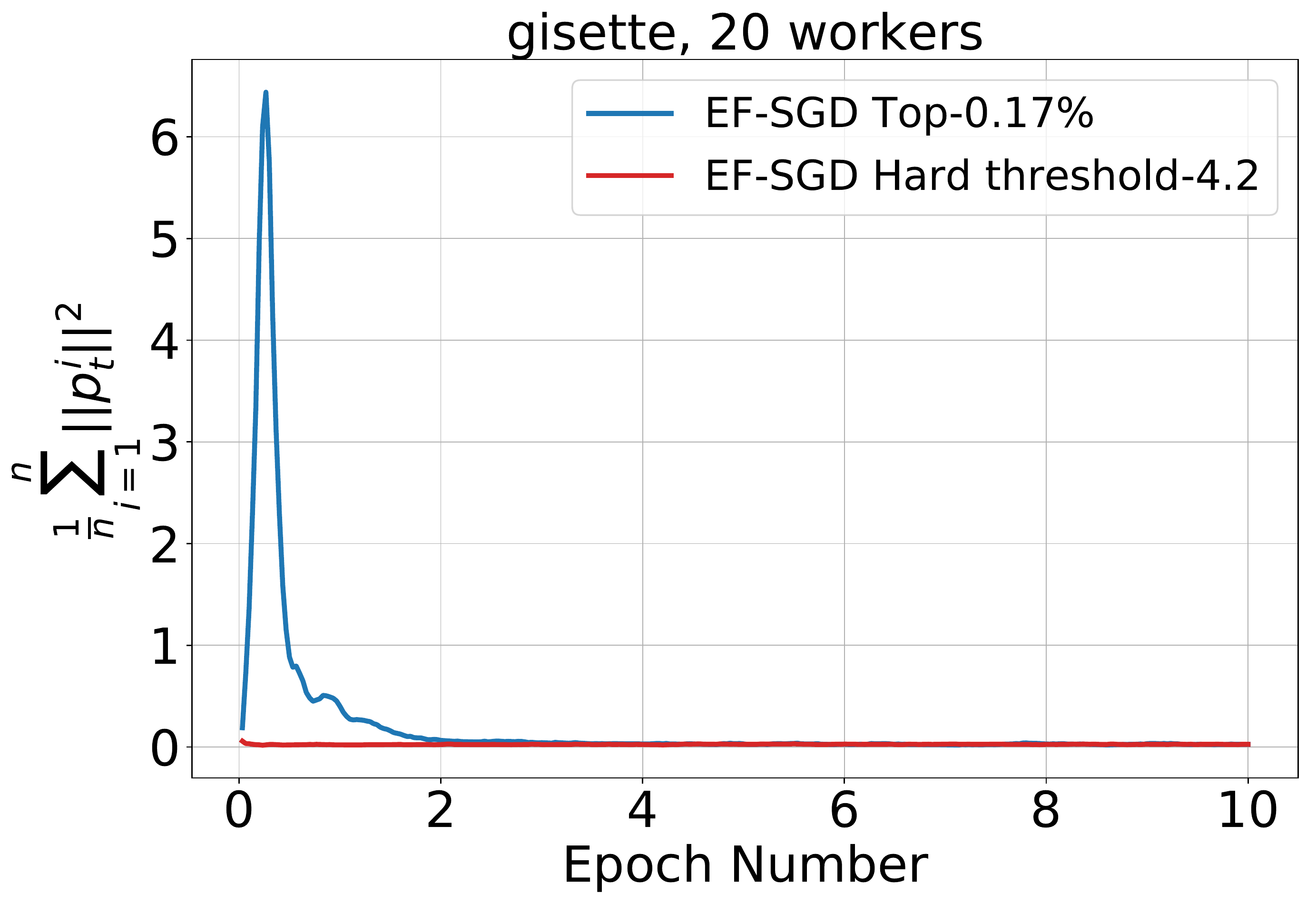}
		\caption{}
	\end{subfigure}
\begin{subfigure}{0.32\textwidth}
        \centering
		\includegraphics[width=\textwidth]{Figures/experiments/logistic-regression/sgd_gisette_num_of_workers_20_avg_error_norms_data_passes.pdf}
		\caption{}
	\end{subfigure}
\caption{
\small{Convergence of Top-$k$ and Hard-threshold for a logistic regression model on \texttt{gisette} LIBSVM dataset with 20 workers: (a) Functional suboptimality vs. bits communicated; (b) Error-compensated gradient norm vs. Epoch; (c) Error-norm vs. iterations. Top-$k$ has large error-accumulation due to the cascading-effect.}}\label{fig:topk_error_accumulation_a}
\end{figure*}

\begin{figure*}[!htbp]
\vspace{-1em}
\centering
\begin{subfigure}{0.32\linewidth}
		\includegraphics[width=\textwidth]{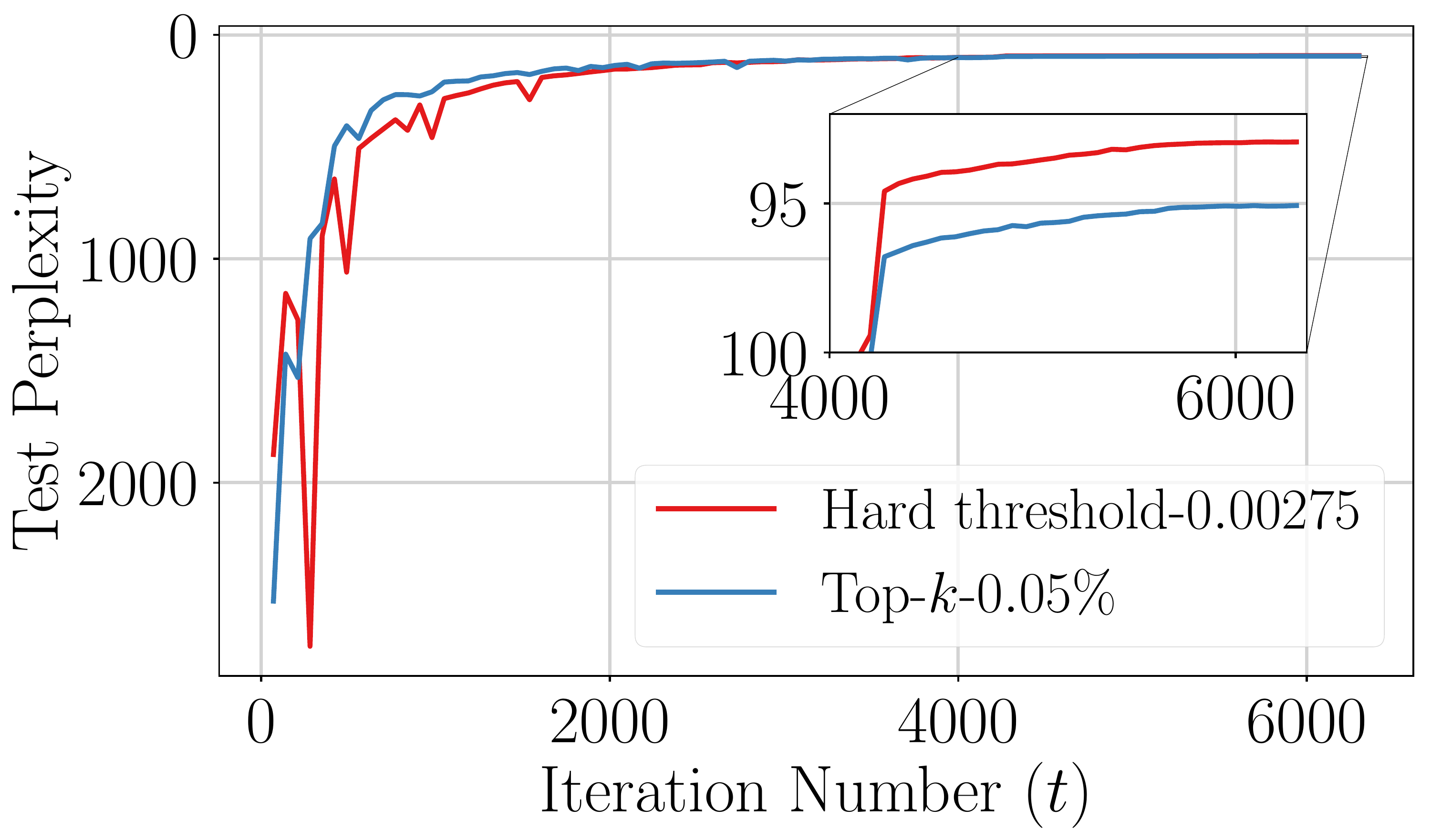}
				\caption{}\label{accuracy_wikitext2}
\end{subfigure}
\begin{subfigure}{0.32\linewidth}
		\includegraphics[width=\textwidth]{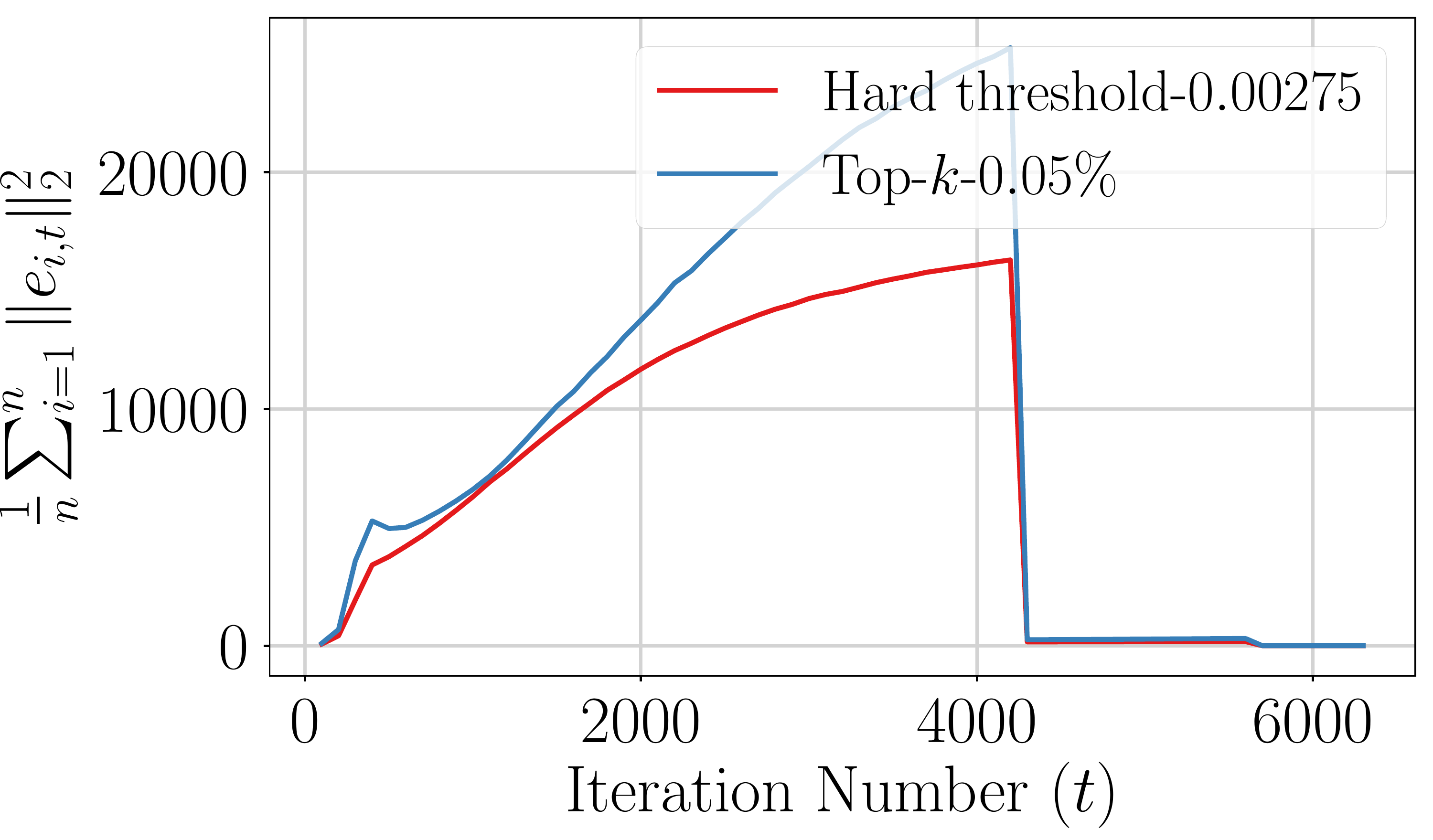}
			\caption{}\label{lr_scaled_mem_wikitext2}
\end{subfigure}
\begin{subfigure}{0.32\linewidth}
		\includegraphics[width=\textwidth]{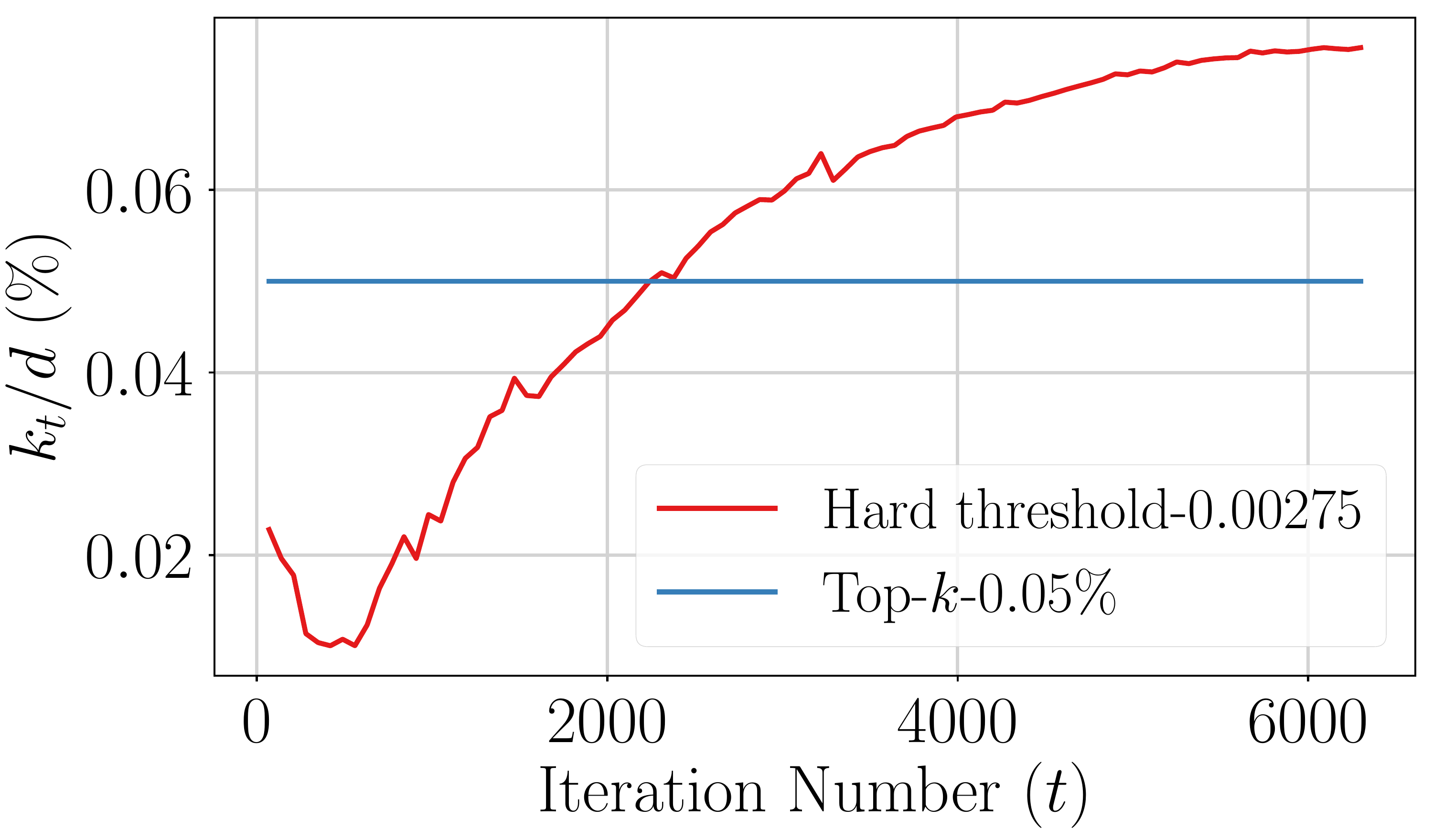}
		\caption{}\label{sparse_wikitext2}
\end{subfigure}
\caption{\small{\textbf{Convergence of Top-$k$ and Hard-threshold for an LSTM on WikiText2 at $\mathbf{0.05\%}$ average density:} ~(a)~Test-perplexity vs. Iterations,~(b)~Error-norm vs. Iterations, (c) Density ($k_t/d$) vs. Iterations. $k=0.05\%$ of $d$, and $\lambda$ = 0.0072. Hard-threshold has better convergence than Top-$k$ because of a smaller total-error.} }\label{fig:large_error_Topk_wikitext2}
\end{figure*}

\begin{figure*}[!htbp]
\vspace{-1em}
\centering
\begin{subfigure}{0.32\linewidth}
		\includegraphics[width=\textwidth]{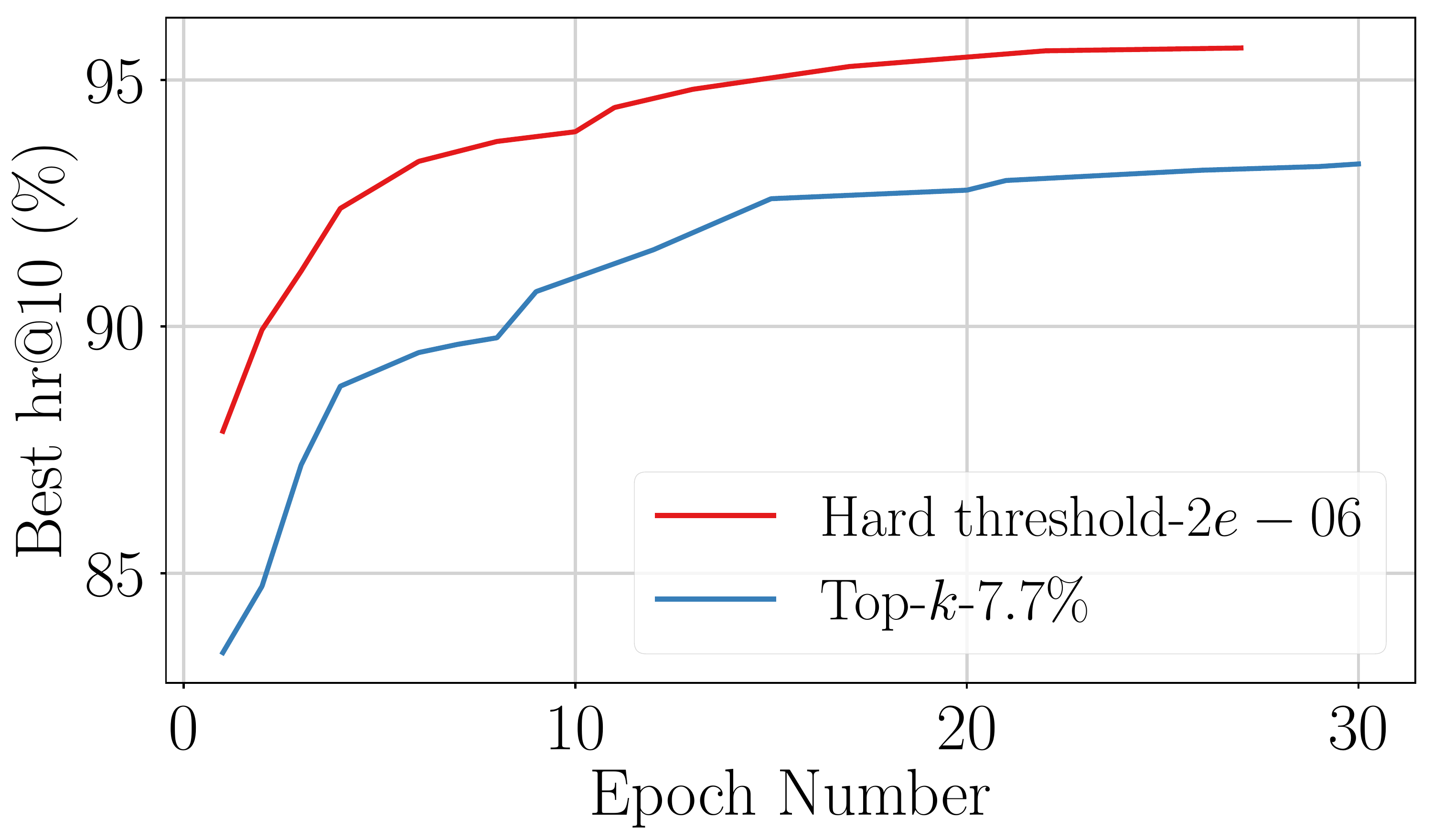}
				\caption{}\label{accuracy_ncf-ml20m}
\end{subfigure}
\begin{subfigure}{0.32\linewidth}
		\includegraphics[width=\textwidth]{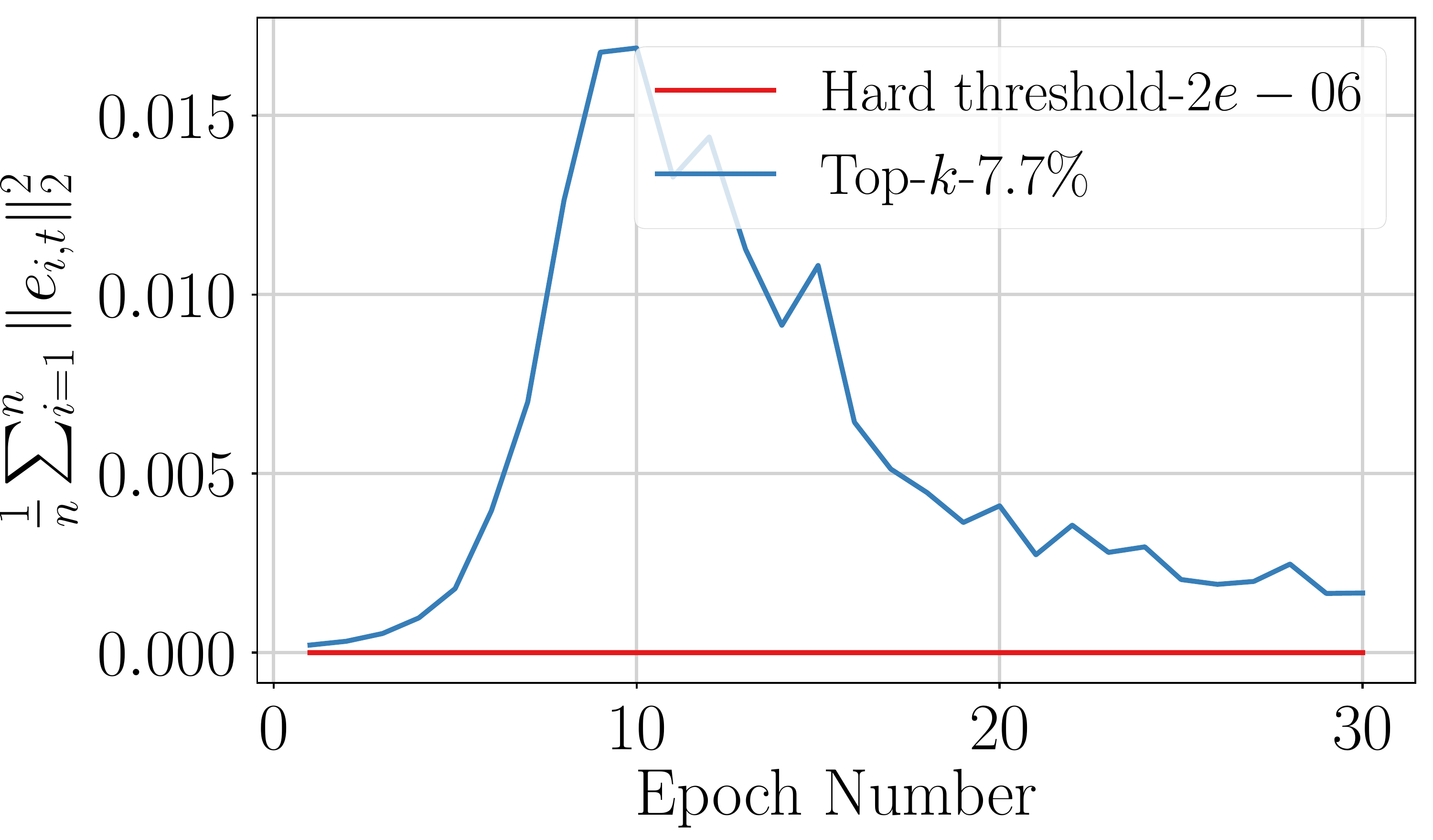}
			\caption{}\label{lr_scaled_mem_ncf-ml20m}
\end{subfigure}
\begin{subfigure}{0.32\linewidth}
		\includegraphics[width=\textwidth]{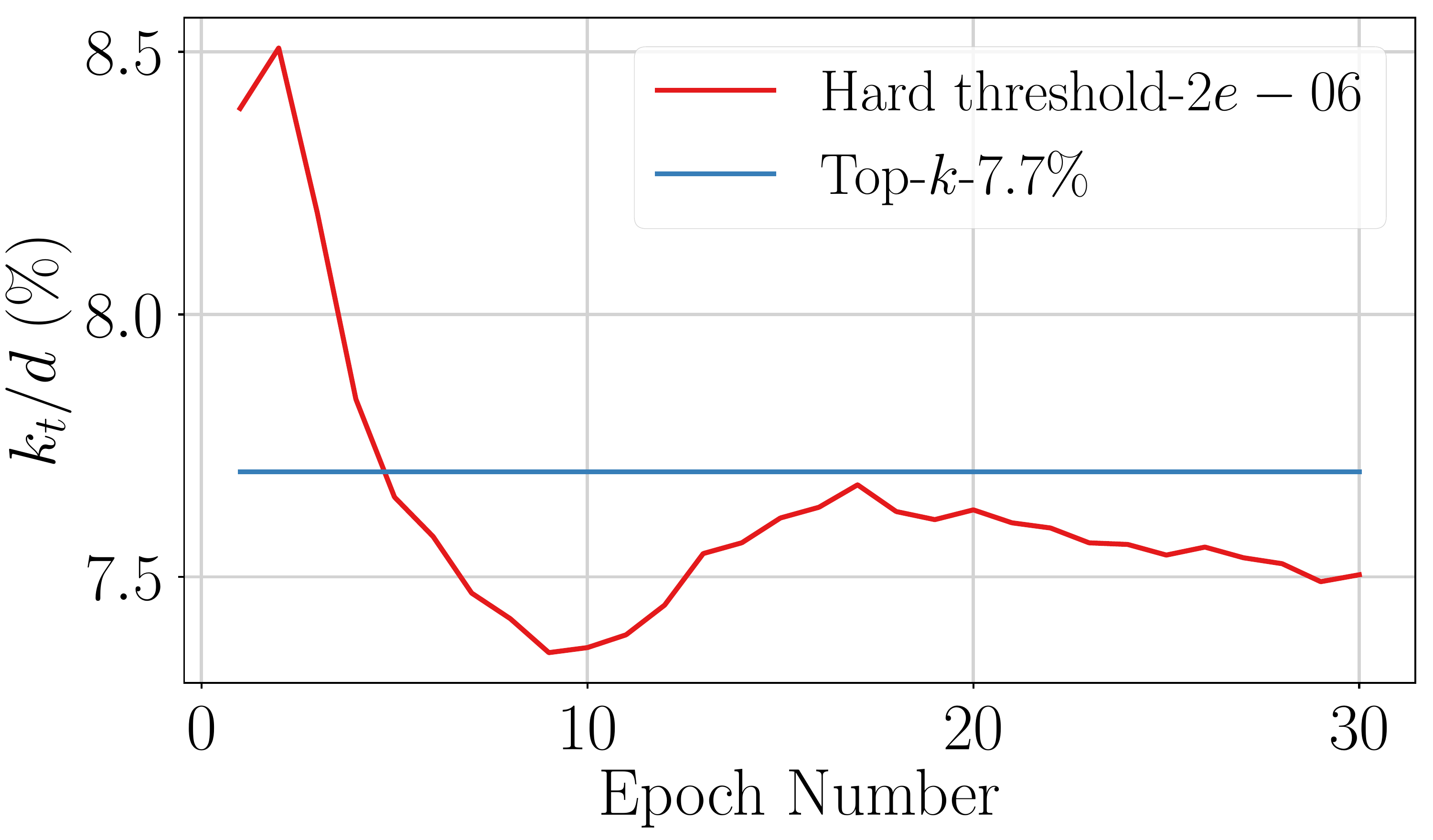}
		\caption{}\label{sparse_ncf-ml20m}
\end{subfigure}
\caption{\small{\textbf{Convergence of Top-$k$ and Hard-threshold for NCF on ML-20m at $\mathbf{7.7\%}$ average density:} ~(a)~Best Hit-rate@10 vs. Epochs,~(b)~Error-norm vs. Epochs, (c) Density ($k_t/d$) vs. Epochs. $k=0.06\%$ of $d$, and $\lambda$ = 0.0072. Hard-threshold has better convergence than Top-$k$ because of a smaller total-error.} }\label{fig:large_error_Topk_ncf-ml20m}
\vspace{-1em}
\end{figure*}

\subsection{Logistic regression experiments}\label{sec:logistic_regression}
For the convex experiments, we consider the following $\ell_2$ regularized logistic regression experiment considered in \cite{gorbunov2020linearly}\footnote{Open source code: \url{https://github.com/eduardgorbunov/ef_sigma_k}}: 

\begin{equation}\label{eq:log_reg}
\min_{x\in\mathbb{R}^d}f(x)=\frac{1}{N}\sum_{i=1}^N\log(1+\exp(-y_i\mathbf{A}[i,:]x))+\frac{\mu}{2}\|x\|^2, \quad \text{where }  \mathbf{A}\in\mathbb{R}^{N\times d}, y\in\mathbb{R}^N.
\end{equation}
The function, $f(x)$ in \eqref{eq:log_reg} is $\mu$-strongly convex and $L$-smooth with $L=\mu+\frac{\lambda_{\max}(\mathbf{A}^T\mathbf{A})}{4N}$. As in \cite{gorbunov2020linearly}, we use the setp-size $\gamma=1/L$, and $\mu=10^{-4}\frac{\lambda_{\max}(\mathbf{A}^T\mathbf{A})}{4N}$. We use standard LIBSVM datasets \cite{libsvm}, and split the dataset into number of worker partitions. For distributed EF-SGD, we use a local batch size of 1 at each node, where the new batch is chosen uniformly at random at each step.

\smartparagraph{Tuning the hard-threshold:} Our goal is to make $f(x_T)-f(x_\star)\leq\epsilon$, for a given precision, $\epsilon>0$. We set $\lambda$ such that $d\gamma^2\lambda^2=\epsilon$, i.e., $\lambda=\frac{\sqrt{\epsilon}}{d\sqrt{\gamma}}$.

\textit{Justification:} Remark \ref{rem:bounded_memory} states that by using a hard-threshold $\lambda>0$, the noise due to compression is $d\gamma^2\lambda^2$. Due to this compression noise, we expect (although we did not prove) that $x_T$ will oscillate in a $d\gamma^2\lambda^2$ neighborhood of the optimum, $x_\star$, i.e. $\norm{x_T-x_\star}^2\leq d\gamma^2\lambda^2$. Furthermore, by $L$-smoothness, we have 
\begin{equation*}
    f(x_T)-f(x_\star) \leq \frac{L}{2}\norm{x_T-x_\star}^2. 
\end{equation*}
Therefore, if we want to converge to a $\epsilon$-close functional-suboptimality value, $f(x_T)-f(x_\star)$, then ensuring $d\gamma^2\lambda^2\leq\epsilon$ guarantees $\norm{x_T-x_\star}^2\leq\epsilon$, and implies, $f(x_T)-f(x_\star)\leq\frac{L}{2}\epsilon$. The above is an upper bound, and we observe in our experiments by using $\lambda=\frac{\sqrt{\epsilon}}{d\sqrt{\gamma}}$, gives $f(x_t)-f(x_\star)\leq\epsilon$.

\subsubsection{Extreme sparsification}
In Figure \ref{fig:madelon}, we perform extreme sparsification to train a logistic regression model on the \texttt{madelon} LIBSVM dataset. We compare Top-$k$ with $k=1$, and hard-threshold with $\lambda=14881$ set via  $d\gamma^2\lambda^2=1.25 \times 10^{-4}$, so that they both communicate \emph{same data volume}. In Figure \ref{fig:madelon}b, we see that Hard-threshold sparsifier does not communicate any elements in many iterations. Despite this, hard-threshold has faster convergence than Top-$k$ in Figure \ref{fig:madelon} a. Figure \ref{fig:madelon} c demonstrates that this is because hard-threshold has a smaller total-error than Top-$k$. 

\begin{figure*}[!htbp]
\centering
\begin{subfigure}{0.32\textwidth}
		\centering
		\includegraphics[width=\textwidth]{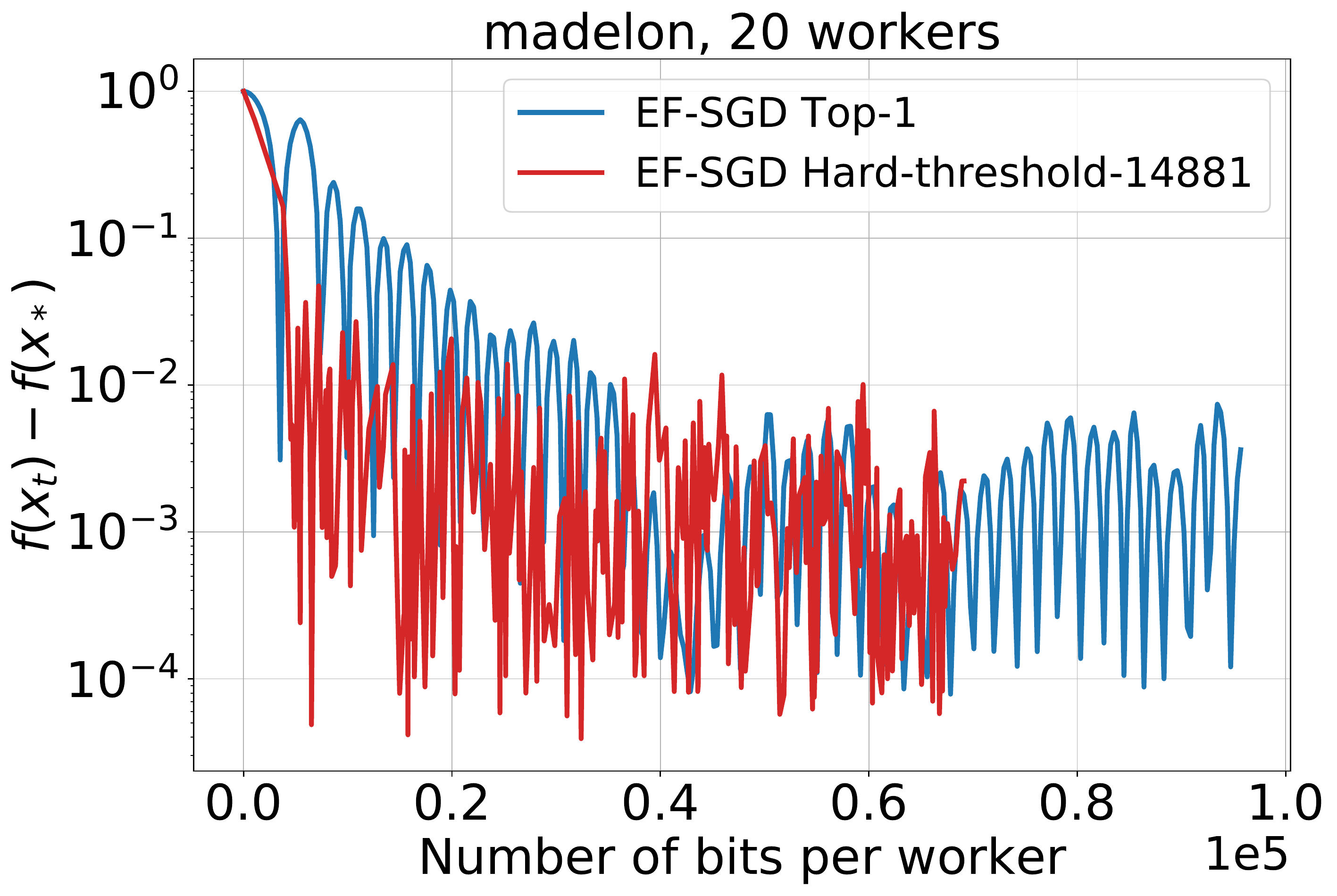}
				\caption{}
	\end{subfigure}
\begin{subfigure}{0.32\textwidth}
		\centering
		\includegraphics[width=\textwidth]{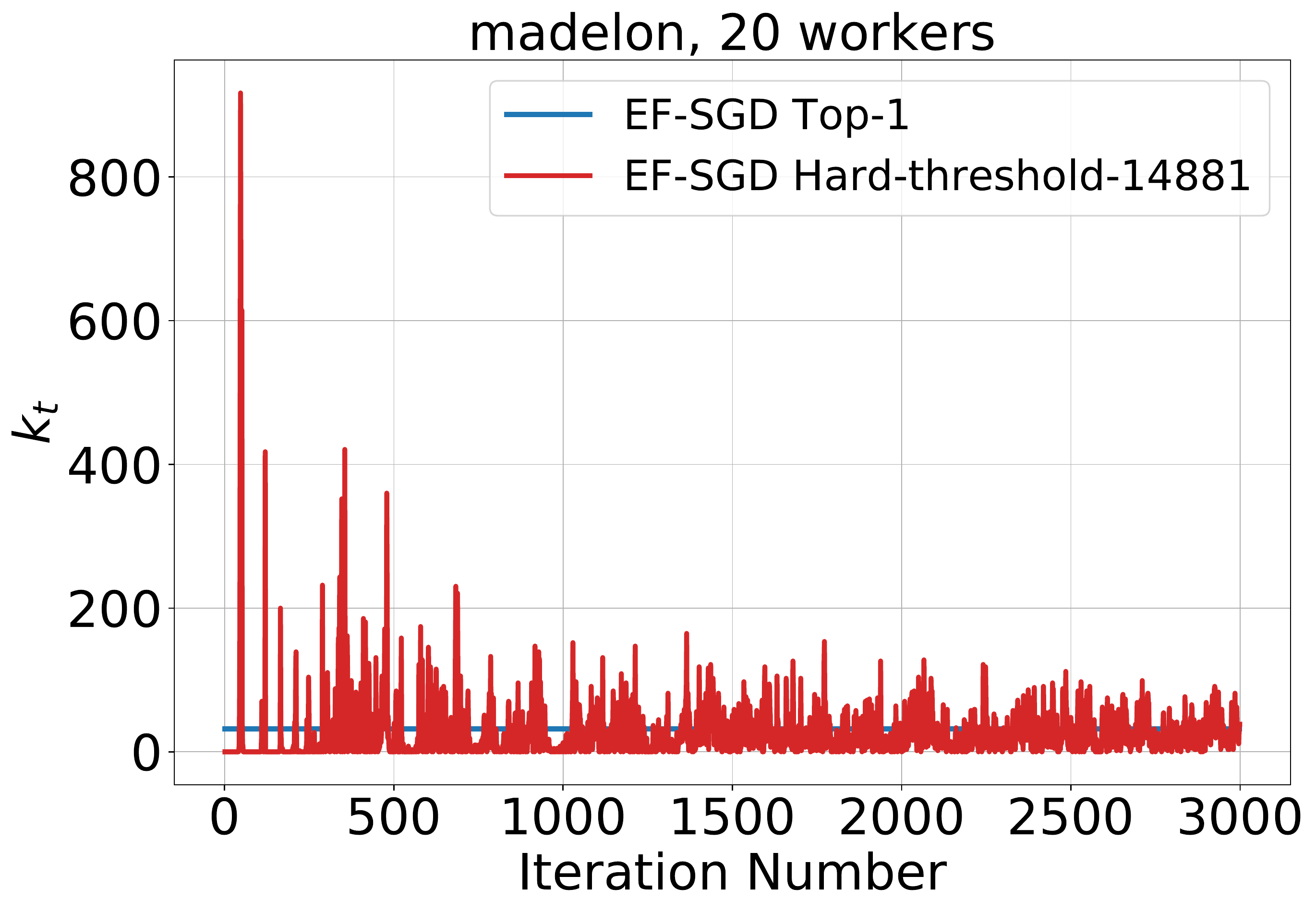}
		\caption{}
	\end{subfigure}
\begin{subfigure}{0.32\textwidth}
        \centering
		\includegraphics[width=\textwidth]{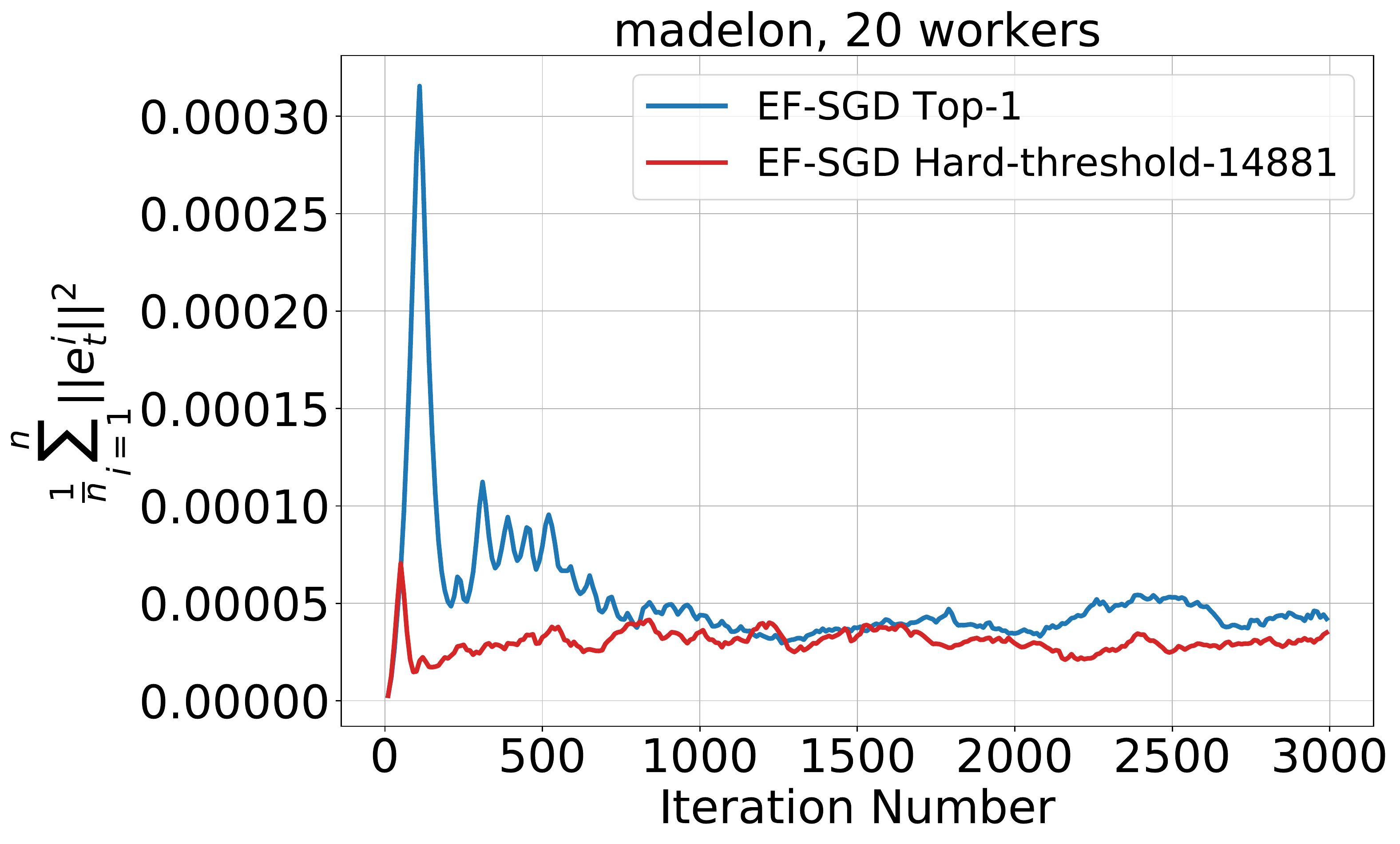}
		\caption{}
	\end{subfigure}
\caption{
\small{Convergence of Top-$k$ and Hard-threshold for a logistic regression model on \texttt{madelon} LIBSVM dataset with 20 workers: (a) Functional suboptimality vs. bits communicated; (b) parameters communicated vs. iterations; (c) error norm vs. iterations. Hard-threshold has a faster convergence than Top-$k$ even when it does not communicate any parameter in some iterations.}}\label{fig:madelon}
\end{figure*}

\subsubsection{Convergence to an arbitrary neighborhood of the optimum}
For the experiments in this section, the uncompressed baseline is distributed gradient descent (GD). Unlike SGD, 
GD has linear convergence to the exact optimum. However, Distributed EF-GD does not converge to the exact optimum due to compression noise. To remedy this, Gorbunov et al. \cite{gorbunov2020linearly} introduced a family of variance-reduced compression algorithms that have linear convergence to the exact optimum. We consider algorithm \texttt{EF-GDstar} from \cite{gorbunov2020linearly}~(known as \texttt{EC-GDstar} in \cite{gorbunov2020linearly}). 

We empirically show that \texttt{EF-GDstar} with hard-threshold compressor, can converge to an arbitrarily small neighborhood around the optimum, for an appropriate choice of hard-threshold. Figure \ref{fig:gd_star_madelon_20_workers} and Figure \ref{fig:gd_star_madelon_100_workers} demonstrate the convergence of \texttt{EF-GDstar} using Hard-threshold and Top-$k$ sparsifiers with 20 workers and 100 workers, respectively. We choose \myNum{i} $k=1$ for $20$ workers and $k=5$ for $100$ workers, respectively; \myNum{ii} $\lambda=2.98$, such that $d\gamma^2\lambda^2=5\times10^{-12}$. By using this $\lambda$, the compression error for hard-threshold is less than $5\times10^{-12}$ in Figures \ref{fig:gd_star_madelon_20_workers} c and \ref{fig:gd_star_madelon_100_workers} c. Moreover, hard-threshold converges to $f(x_T)-f(x_\star)\leq5\times10^{-12}$ in both Figures \ref{fig:gd_star_madelon_20_workers} b and \ref{fig:gd_star_madelon_100_workers} b. Additionally, hard-threshold sends $1.7\times$ and $8\times$ less data than Top-$k$ in Figure \ref{fig:gd_star_madelon_20_workers} a and Figure \ref{fig:gd_star_madelon_100_workers} a, respectively. Furthermore, Figure \ref{fig:gd_star_madelon_20_workers} is an extreme sparsification scenario where hard-threshold communicates $<1$ parameter per iteration per worker.

\begin{figure*}[!htbp]
\centering
\begin{subfigure}{0.32\textwidth}
		\centering
		\includegraphics[width=\textwidth]{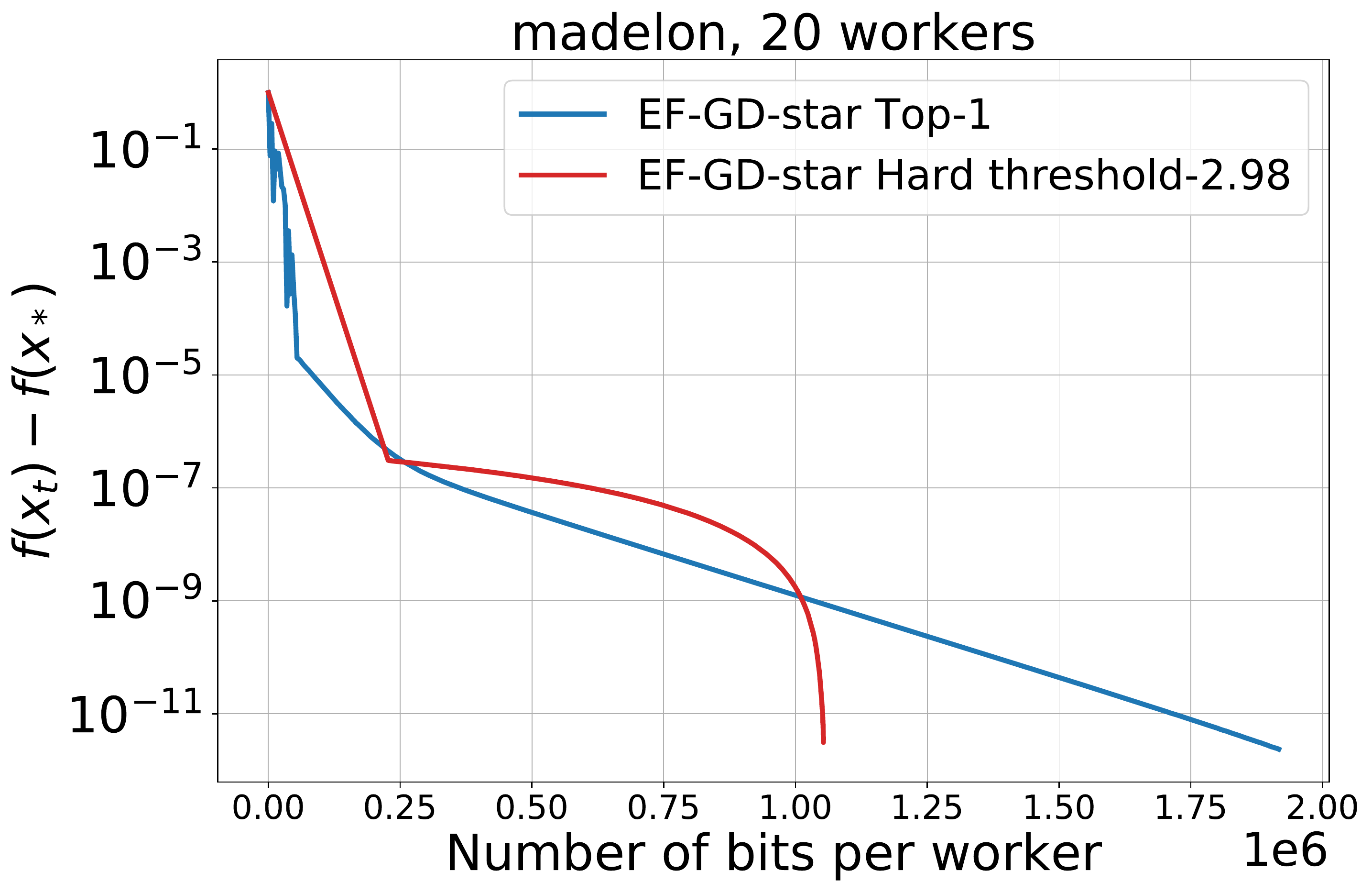}
				\caption{}
	\end{subfigure}
\begin{subfigure}{0.32\textwidth}
		\centering
		\includegraphics[width=\textwidth]{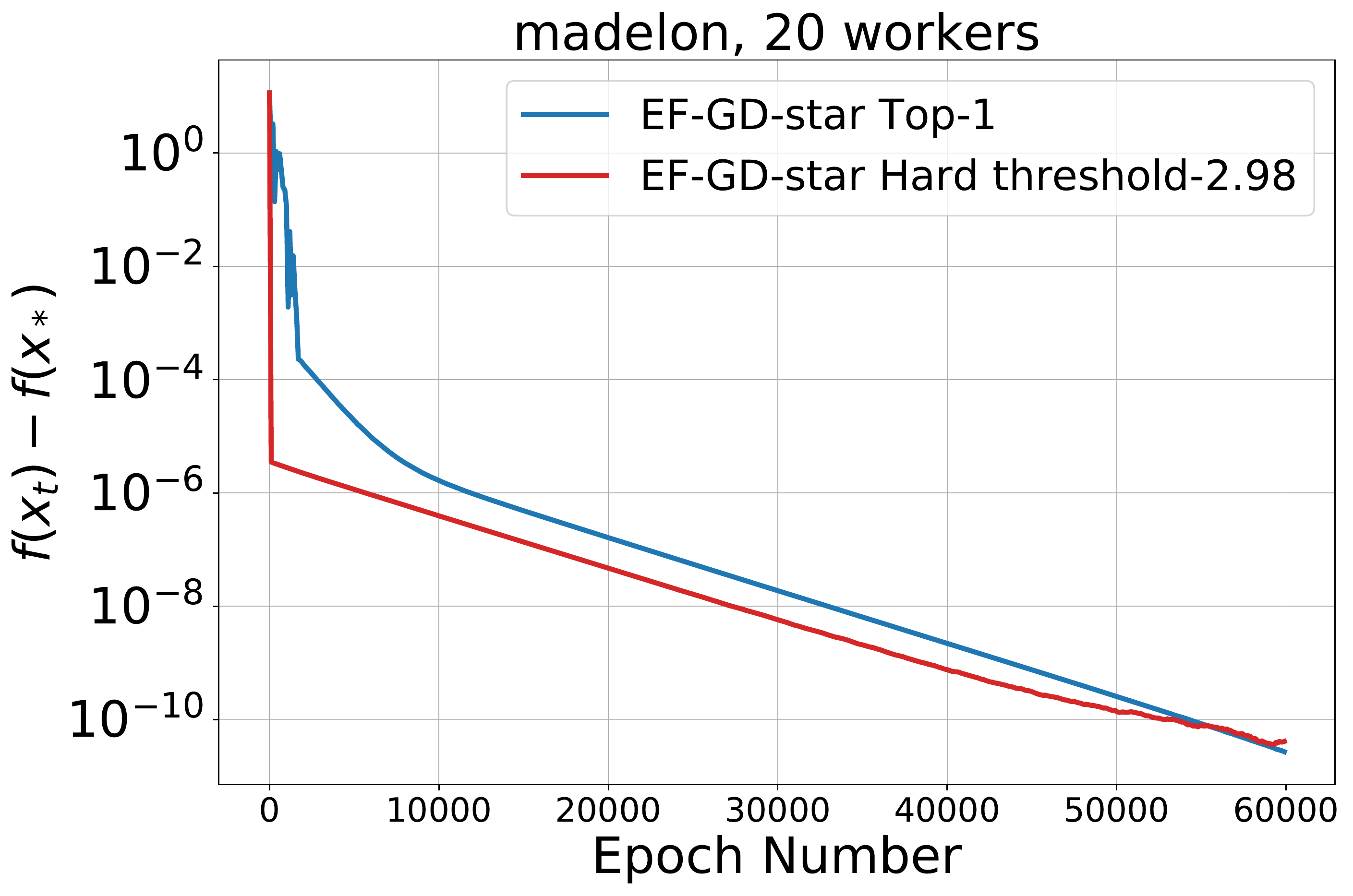}
		\caption{}
	\end{subfigure}
\begin{subfigure}{0.32\textwidth}
        \centering
		\includegraphics[width=\textwidth]{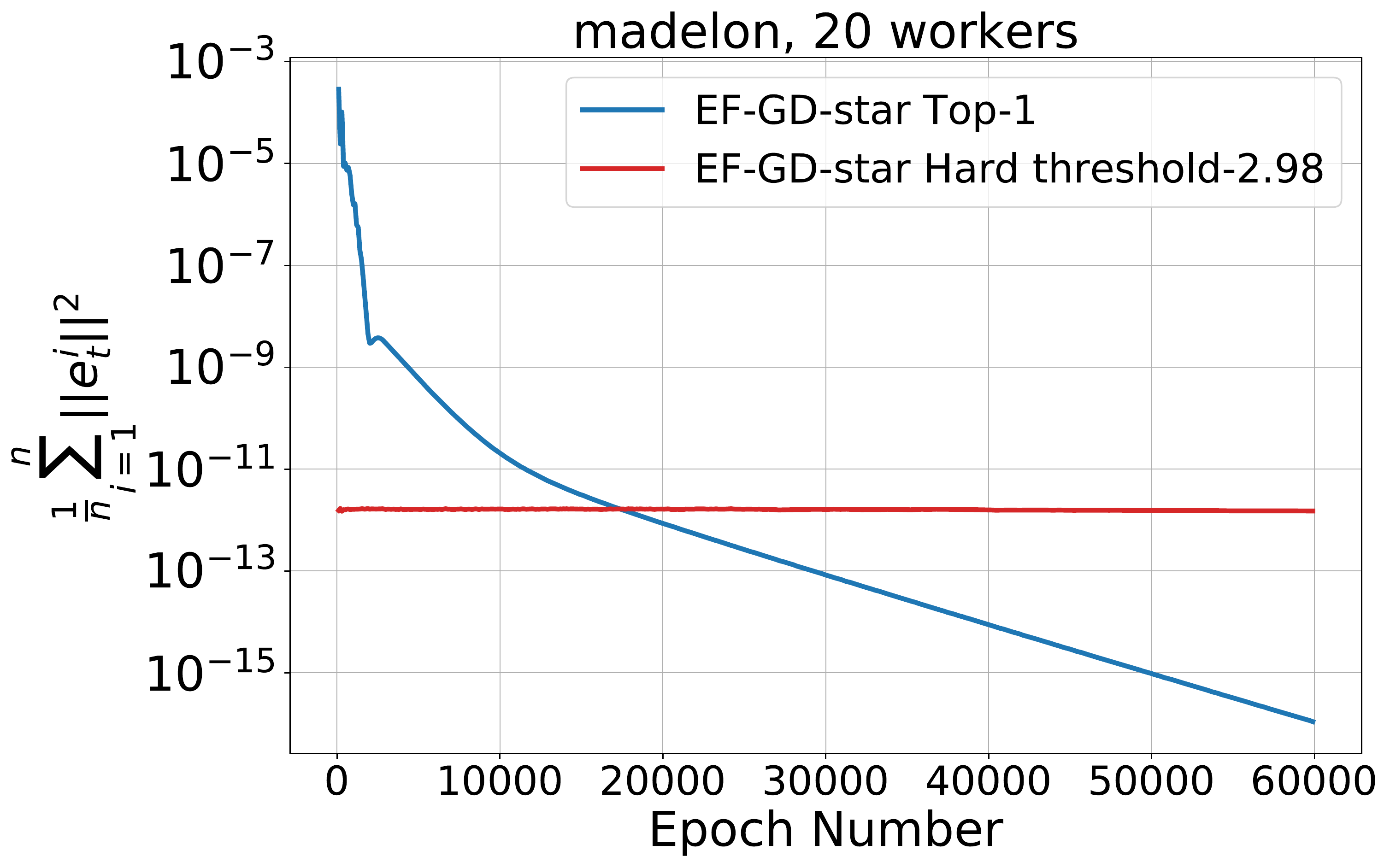}
		\caption{}
	\end{subfigure}
\caption{
\small{Convergence of \texttt{EF-GDstar} using Top-$k$ and Hard-threshold sparsifiers on a logistic regression model on \texttt{madelon} LIBSVM dataset with 20 workers: (a) Functional suboptimality vs. bits communicated; (b) functional suboptimality vs. epochs; (c) error-norm vs. epochs.}}\label{fig:gd_star_madelon_20_workers}
\end{figure*}

\begin{figure*}[!htbp]
\centering
\begin{subfigure}{0.32\textwidth}
		\centering
		\includegraphics[width=\textwidth]{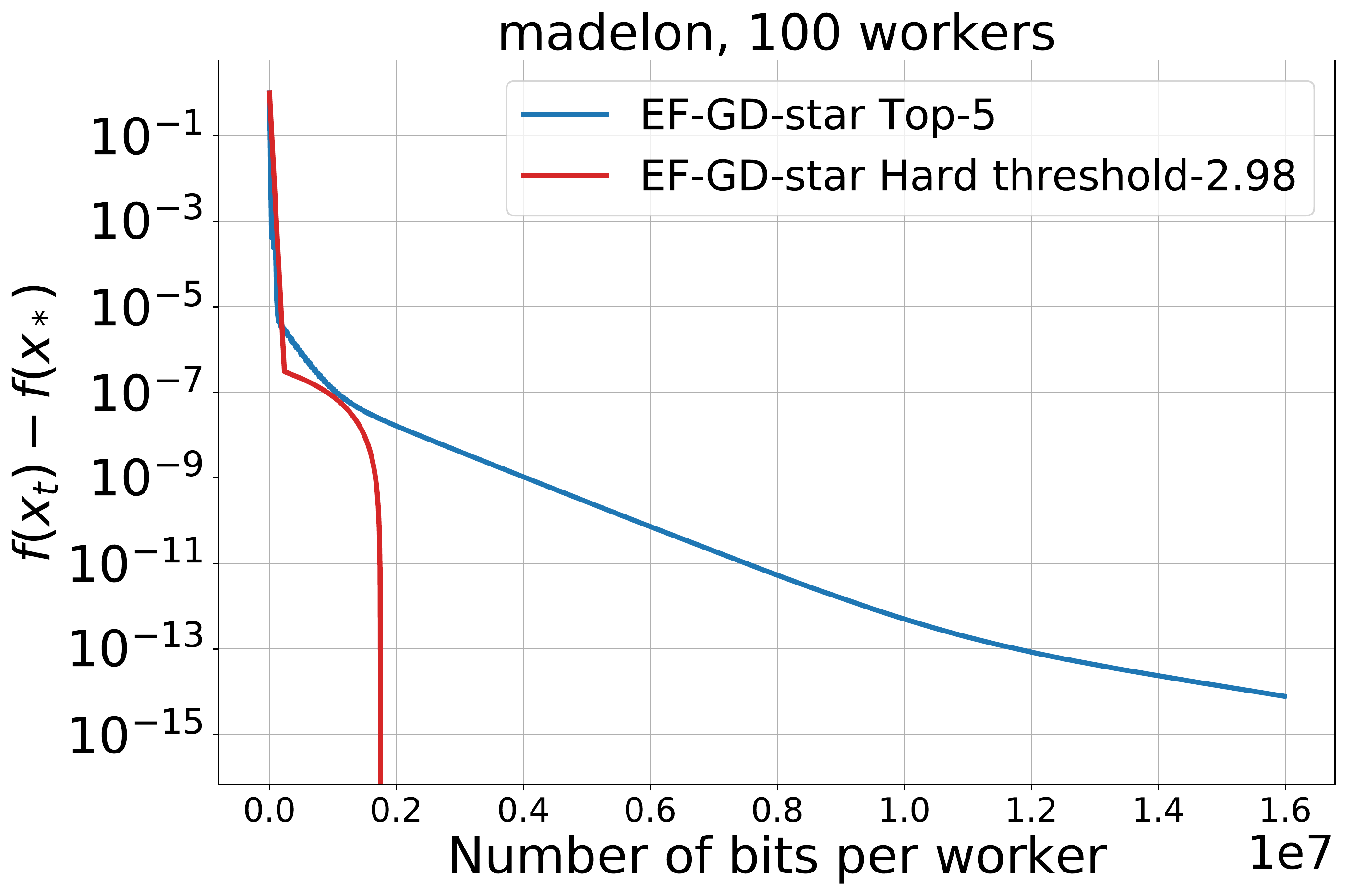}
				\caption{}
	\end{subfigure}
\begin{subfigure}{0.32\textwidth}
		\centering
		\includegraphics[width=\textwidth]{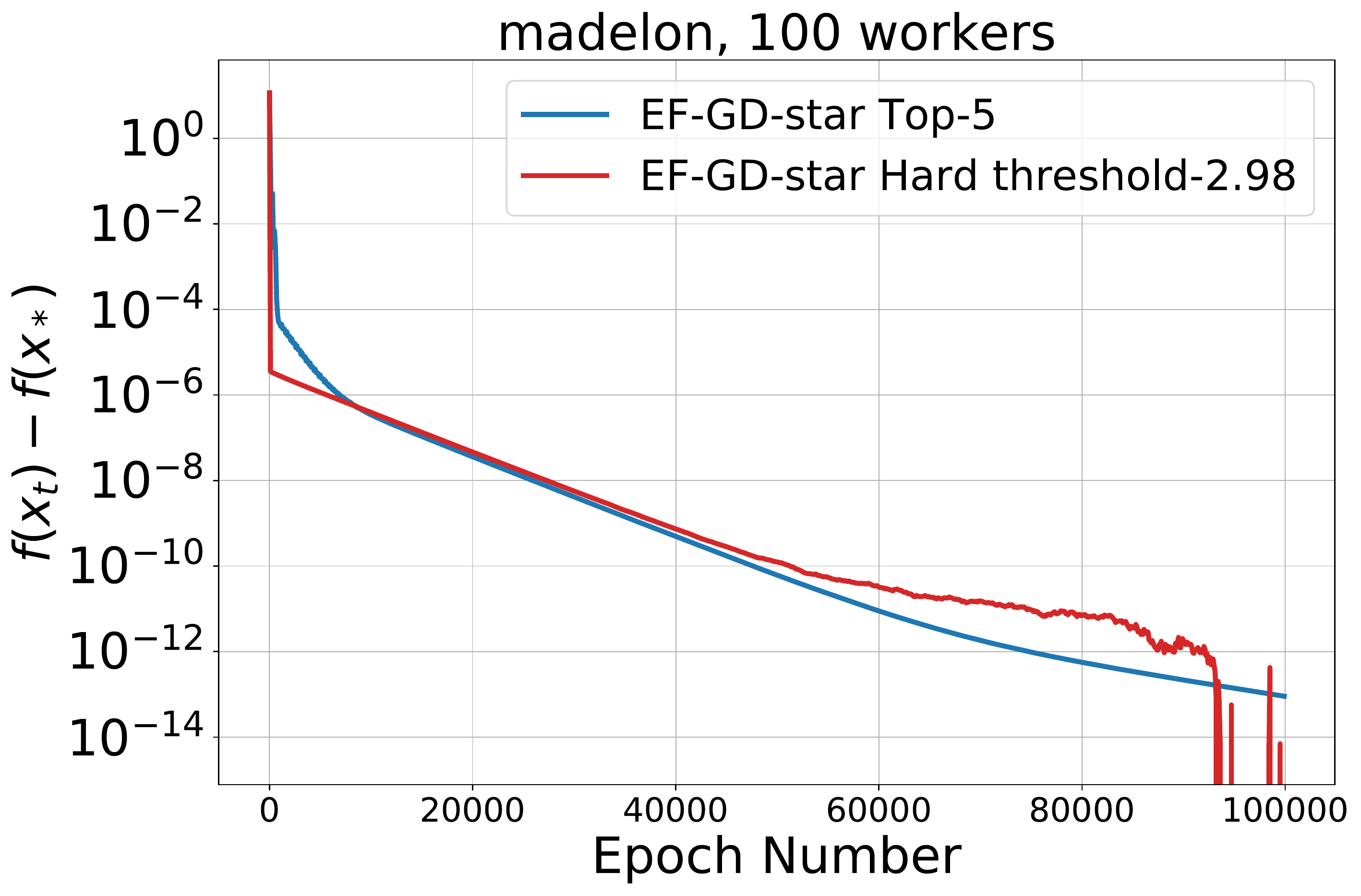}
		\caption{}
	\end{subfigure}
\begin{subfigure}{0.32\textwidth}
        \centering
		\includegraphics[width=\textwidth]{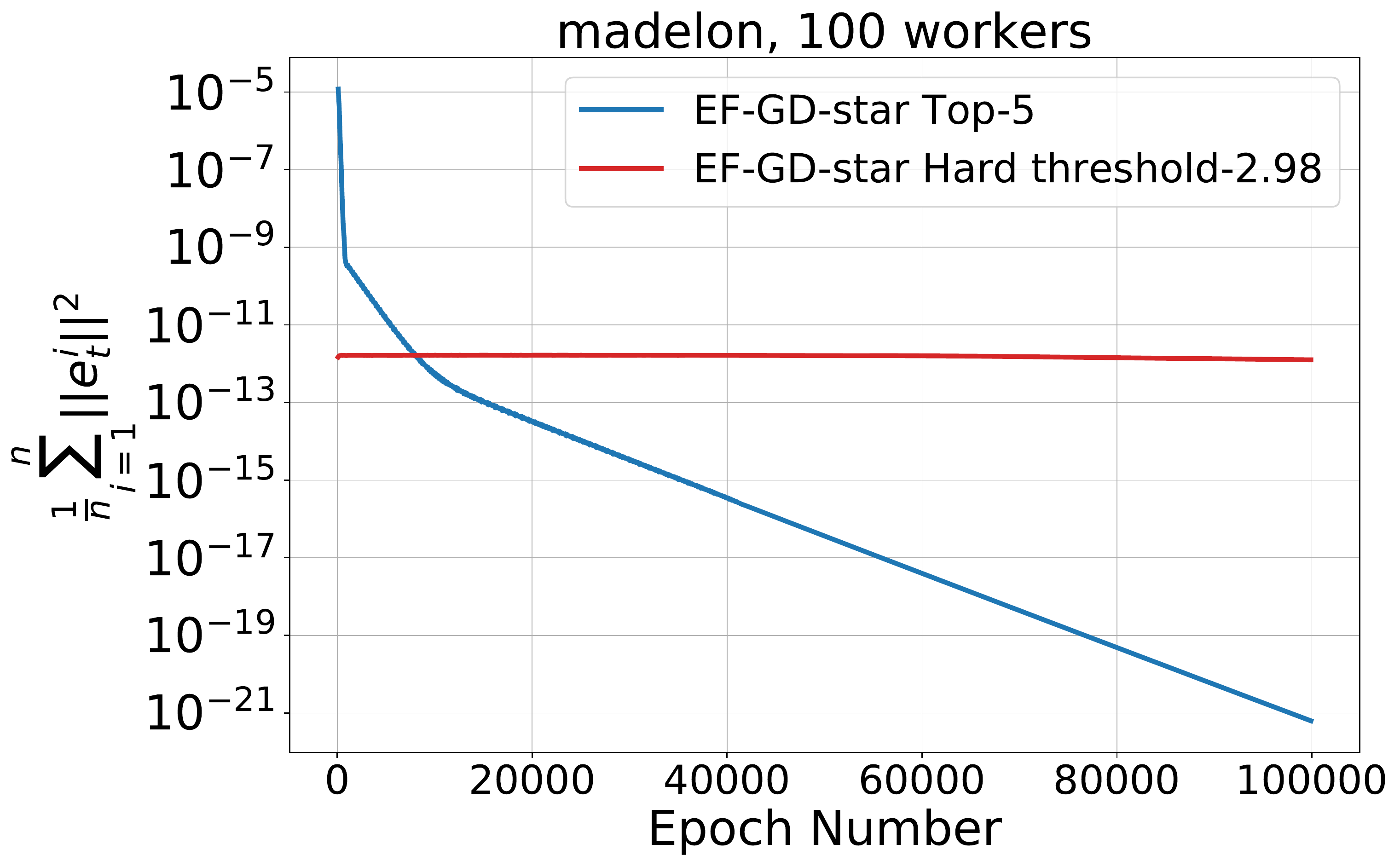}
		\caption{}
	\end{subfigure}
\caption{
\small{Convergence of \texttt{EF-GDstar} using Top-$k$ and Hard-threshold sparsifiers on a logistic regression model on \texttt{madelon} LIBSVM dataset with 100 workers: (a) Functional suboptimality vs. bits communicated; (b) functional suboptimality vs. epochs; (c) error norm vs. epochs.}}\label{fig:gd_star_madelon_100_workers}
\end{figure*}
Our results demonstrate that it is possible to use the hard-threshold compressor to converge to an arbitrarily small neighborhood around the optimum. We leave the convergence analyses, and devising practical variants for future research.

\subsection{Comparison against ACCORDION}\label{sec:accordion}
We compare against ACCORDION \cite{agarwal2020accordion} on CIFAR-10 and CIFAR-100 datasets. ACCORDION shifts between two user-defined $k$ values: $k_{\max}$ and $k_{\min}$, by using Top-$k_{\max}$ when the training is in a \emph{critical regime}, else using Top-$k_{\min}$. We compare against ACCORDION with hard-threshold $\lambda=\frac{1}{2\sqrt{k_{\min}}}$; see Table \ref{tab:accordion-setup} for the complete experiment details.

\begin{table*}[!htbp]
\centering
\caption{ACCORDION experiments}\label{tab:accordion-setup}
\vspace{3pt}
\scalebox{1.0}{
\begin{tabular}{cc}
\hline
Dataset & CIFAR-10 and CIFAR-100\\
Architectures & ResNet-18 \cite{resnet}, SENet18 \cite{senet}, GoogleNet \cite{GoogleNet}\\
Repository &PowerSGD \cite{PowerSGD} \\
&\footnotesize{See \url{https://github.com/epfml/powersgd}} \\
License &MIT \\
\hline
Number of workers & 8 \\
Global Batch-size & 256 $\times$ 8 \\
\hline 
Optimizer & SGD with Nesterov Momentum \\
Momentum & 0.9 \\
Post warmup LR & $0.1$ $\times$ $16$ \\
LR-decay & $/10$ at epoch 150 and 250 \\ 
LR-warmup & Linearly within 5 epochs, starting from $0.1$ \\
Number of Epochs &300 \\
Weight decay &$10^{-4}$ \\
\hline
Repetitions & 6, with different seeds\\
\hline
Accordion: $k_{\min}$ value & $0.1\%$ for both CIFAR-10 and CIFAR-100\\
Accordion: $k_{\max}$ value & $1\%$ for CIFAR-10 and $2\%$ for CIFAR-100\\
Hard-threshold: $\lambda$ values &  \\
(Calculated using $\lambda=\frac{1}{2\sqrt{k_{\min}}}$) & ResNet-18-CIFAR-10: $4.73\times10^{-3}$\\
& ResNet-18-CIFAR-100: $4.72\times10^{-3}$\\
& GoogleNet-CIFAR-10: $6.37\times10^{-3}$\\
& GoogleNet-CIFAR-100: $6.32\times10^{-3}$\\
& SENet18-CIFAR-10: $4.68\times10^{-3}$\\
& SENet18-CIFAR-100: $4.68\times10^{-3}$\\
\hline
\end{tabular}
}
\end{table*}

We report the result in Tables \ref{tab:accordion-cifar10} and \ref{tab:accordion-cifar100}. Each setting is repeated with $6$ different seeds and we report the average. For the CIFAR-10 dataset, we observe that hard-threshold has $0.5\%-0.8\%$ higher test accuracy than ACCORDION and is approximately $3.5\times$ more communication efficient than ACCORDION. For the CIFAR-100 dataset, except the ResNet-18 model, we observe that hard-threshold obtains more than $0.8\%$ higher accuracy than ACCORDION with more than $1.26\times$ communication savings over ACCORDION.

\begin{table*}[]
\centering
\caption{Comparison against ACCORDION \cite{agarwal2020accordion} on CIFAR-10}
\label{tab:accordion-cifar10}
\begin{tabular}{llll}
\hline
\multirow{2}{*}{Network}   & \multirow{2}{*}{Method}              & \multirow{2}{*}{Accuracy ($\%$)} & \multirow{2}{*}{Average Density  ($\%$)}         \\  
                           &                                        &                           &      \\ \hline
\multirow{4}{*}{ResNet-18}  & Top-$1 \%$ ($k_{\max}/d$)             & $94.1$  & $1.00$ ($1 \times$)        \\ \cline{2-4} 
                           & Top-$0.1 \%$ ($k_{\min}/d$)           & $93.2$  & $0.10$ ($10 \times$)       \\ \cline{2-4} 
                           & ACCORDION                             & $93.5$  & $0.53$ ($1.9 \times$)       \\ \cline{2-4} 
                           & Hard-threshold ($\frac{1}{2\sqrt{k_{\min}}}$) &   $\mathbf{94.0}$   &  $\mathbf{0.13}$ $(\mathbf{7.7 \times})$ \\ \hline
\multirow{4}{*}{GoogleNet} & Top-$1 \%$ ($k_{\max}/d$)                      & $94.1$   & $1.00$ ($1 \times$)        \\ \cline{2-4} 
                           & Top-$0.1 \%$ ($k_{\min}/d$)                          & $92.9$  & $0.10$ ($10 \times$)       \\ \cline{2-4} 
                           & ACCORDION                              & $93.4$   & $0.47$ ($2.1 \times$)       \\ \cline{2-4} 
                           & Hard-threshold ($\frac{1}{2\sqrt{k_{\min}}}$) &  $\mathbf{94.2}$       &    $\mathbf{0.13}$ ($\mathbf{7.7 \times}$)      \\ \hline
\multirow{4}{*}{SENet18}     & Top-$1 \%$ ($k_{\max}/d$)                     & $ 94.0$ & $1.00$ ($1 \times$) \\ \cline{2-4} 
                          & Top-$0.1 \%$  ($k_{\min}/d$)                        & $ 92.5$  & $0.10$ ($10 \times$)       \\ \cline{2-4} 
                          & ACCORDION                               & $ 93.5$  & $0.47$ ($2.1 \times$)       \\ \cline{2-4} 
                          & Hard-threshold ($\frac{1}{2\sqrt{k_{\min}}}$) &  $\mathbf{94.2}$  & $\mathbf{0.14}$ ($\mathbf{7.1 \times}$)      \\ \hline
\end{tabular}
\end{table*}

\begin{table*}[]
\centering
\caption{Comparison against ACCORDION \cite{agarwal2020accordion} on CIFAR-100}
\label{tab:accordion-cifar100}
\begin{tabular}{llll}
\hline
\multirow{2}{*}{Network}   & \multirow{2}{*}{Method}              & \multirow{2}{*}{Accuracy ($\%$)} & \multirow{2}{*}{Average Density ($\%$)}           \\  
                           &                                        &                           &      \\ \hline
\multirow{4}{*}{ResNet-18}  & Top-$2\%$  ($k_{\max}/d$)                  & $71.8$   & $2.00$ ($1 \times$)        \\ \cline{2-4} 
                           & Top-$0.1 \%$  ($k_{\min}/d$)                 & $70.6$  & $0.10$ ($20 \times$)       \\ \cline{2-4} 
                           & ACCORDION                             & $\mathbf{71.6}$  & $0.57$ ($3.5 \times$)    \\ \cline{2-4} 
                           & Hard-threshold ($\frac{1}{2\sqrt{k_{\min}}}$) &  $71.4$  & $\mathbf{0.35}$ ($\mathbf{5.7 \times}$)  \\ \hline
\multirow{4}{*}{GoogleNet} & Top-$2 \%$  ($k_{\max}/d$)                           & $75.5$  & $2.00$ ($1 \times$)        \\ \cline{2-4} 
                           & Top-$0.1 \%$  ($k_{\min}/d$)                       & $73.1$ & $0.10$ ($20 \times$)       \\ \cline{2-4} 
                           & ACCORDION                              & $74.2$   & $0.48$ ($4.2 \times$)      \\ \cline{2-4} 
                           & Hard-threshold ($\frac{1}{2\sqrt{k_{\min}}}$) &   $\mathbf{75.0}$  &   $\mathbf{0.38}$ ($\mathbf{5.3 \times}$)         \\ \hline
\multirow{4}{*}{SENet18}     & Top-$2 \%$  ($k_{\max}/d$)  & $ 71.9$  & $2.00$ ($1 \times$)\\\cline{2-4} 
                          & Top-$0.1 \%$   ($k_{\min}/d$)   & $ 70.1$  & $0.10$ ($20 \times$)   \\ \cline{2-4} 
                          & ACCORDION                               & $ 71.0$  & $0.55$ ($3.6 \times$)       \\ \cline{2-4} 
                          & Hard-threshold ($\frac{1}{2\sqrt{k_{\min}}}$) &   $\mathbf{72.1}$    &   $\mathbf{0.36}$   ($\mathbf{5.6 \times}$)   \\ \hline
\end{tabular}
\end{table*}

\subsection{Entire-model sparsification}\label{apdx:fullmodel}
Sparsification can be performed in two ways: layer-wise or entire-model. In layer-wise sparsification, the sparsifier is invoked individually on each tensor resulting from each layer. In contrast, in entire-model sparsification, the sparsifier is applied to a single concatenated tensor resulting from all layers. Since hard-threshold is an element-wise sparsifier, layer-wise and entire-model sparsification result in the same sparsified vector. However, it is expected that layer-wise and entire model vary substantially for Top-$k$. Layer-wise Top-$k$ is used in all practical implementations \cite{DBLP:conf/interspeech/SeideFDLY14, lin2018deep, grace} because performing entire-model Top-$k$ is both compute and memory intensive. 

While we employ layer-wise Top-$k$ in our experiments, we present in Figure \ref{fig:accuracy vs data-volume global} the \emph{test metric vs. data volume} experiment for ResNet-18-CIFAR-10 benchmark (Figure \ref{fig:accuracy vs data-volume}a) using entire-model Top-$k$. We find that hard-threshold is more communication-efficient than entire-model Top-$k$ as well. Notably, at an average density ratio of $0.003\%$, hard-threshold has more than $4\%$ higher accuracy than entire-model Top-$k$.

\begin{figure*}
\centering
\begin{minipage}{0.7\textwidth}		
		\includegraphics[width=\textwidth]{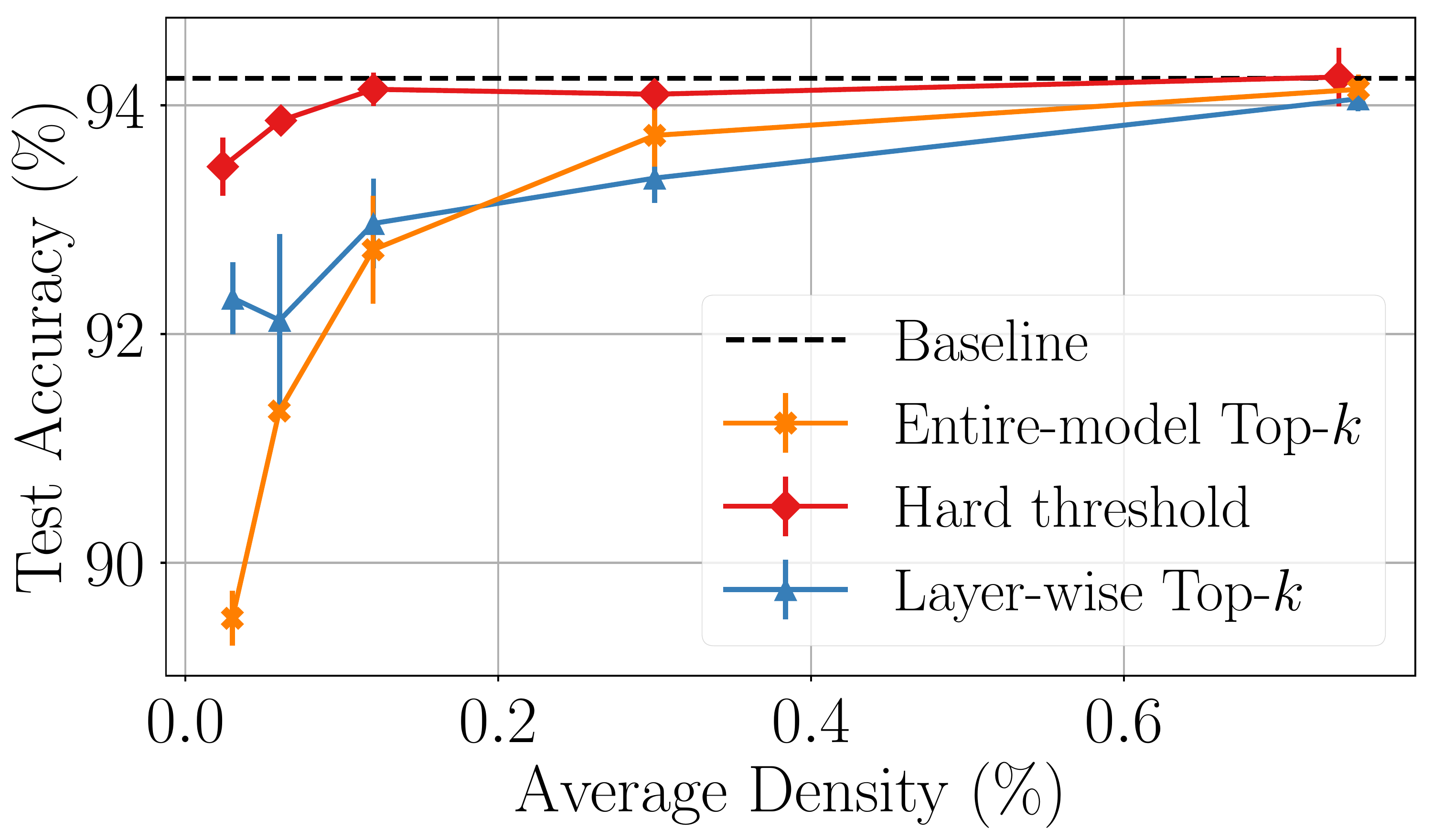}
		\caption{ResNet-18 on CIFAR-10}
\end{minipage}
\caption{\small{\textbf{Test metric vs. Data volume for entire-model compression}. The dashed black line in each plot denotes the no compression baseline. Each setting is repeated with three seeds, and we plot the average with standard deviation.~For description on parameters, see Tables \ref{tab:resnet18}, \ref{tab:lstm}, and \ref{tab:ncf}.}}\label{fig:accuracy vs data-volume global}
\vspace{7pt}
\end{figure*}

\subsection{Error-Feedback~(EF)}\label{apdx:memory}
In this section, we discuss various aspects of EF (or memory). Particularly, in \S\ref{sec:effect_mem} we investigate if hard-threshold is more communication-efficient than Top-$k$ without EF. Then, in Section \ref{grad_comp_vs_update}, we discuss and compare the different ways to perform EF in the literature. 

\subsubsection{Convergence without EF}\label{sec:effect_mem}
To understand how the sparsifiers perform without the EF, 
we conduct experiments without EF for ResNet-18 benchmark. We report this in Figure \ref{fig:no_mem}.  Similar to the with EF case, we find that hard-threshold has better convergence than Top-$k$. We note that with EF, hard-threshold achieved baseline performance at an extreme average density of $0.12\%$. However, without EF, hard-threshold fails to achieve baseline performance ($94.2\%$) even at a significantly higher average density of $5\%$. Hence, EF is a necessary tool to ensure faster convergence.

\begin{figure}
\centering
\begin{subfigure}{0.48\textwidth}
    \centering
		\includegraphics[width=\textwidth]{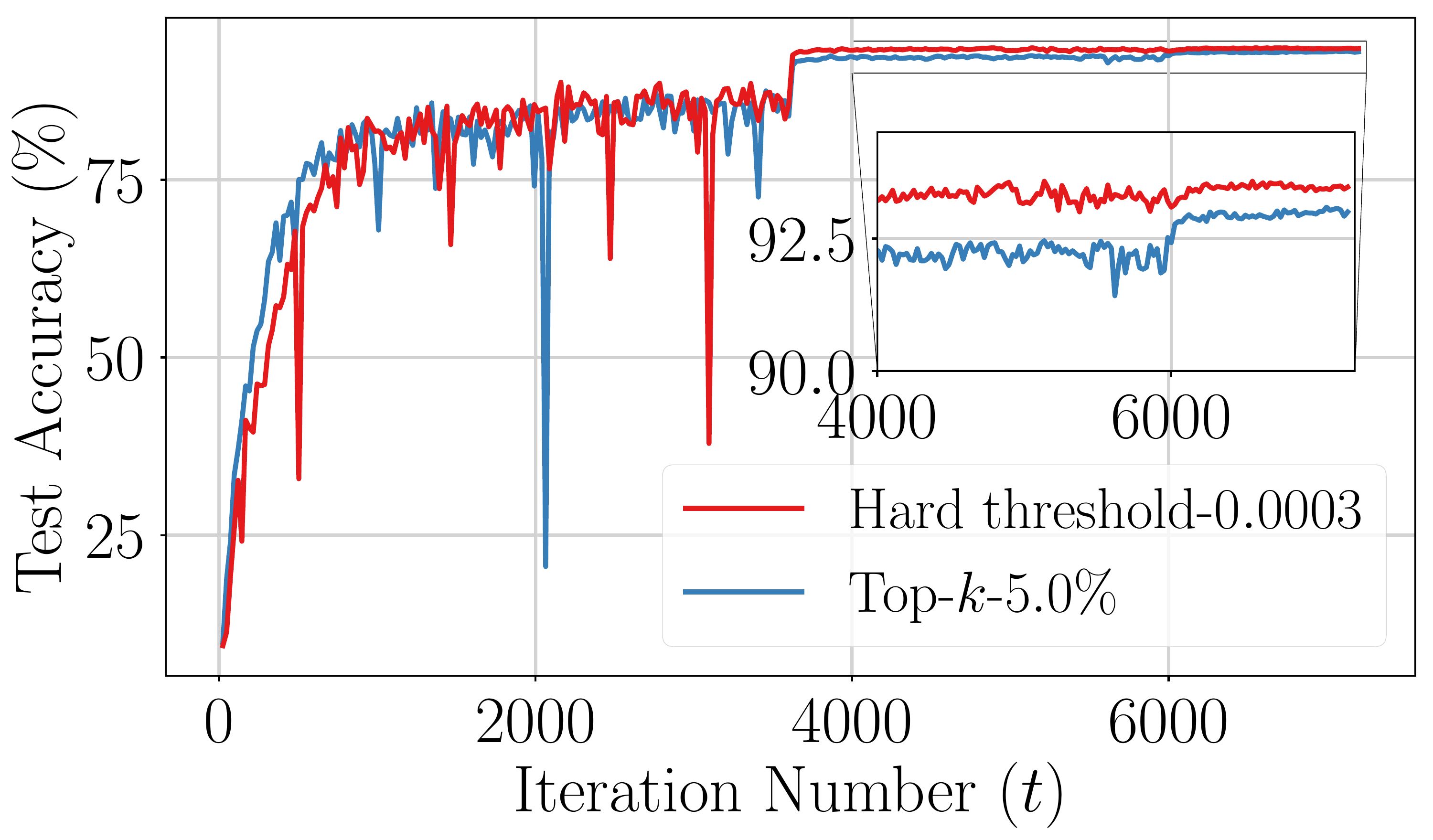}
			\caption{}
\end{subfigure}
\begin{subfigure}{0.48\textwidth}
		\centering
		\includegraphics[width=\textwidth]{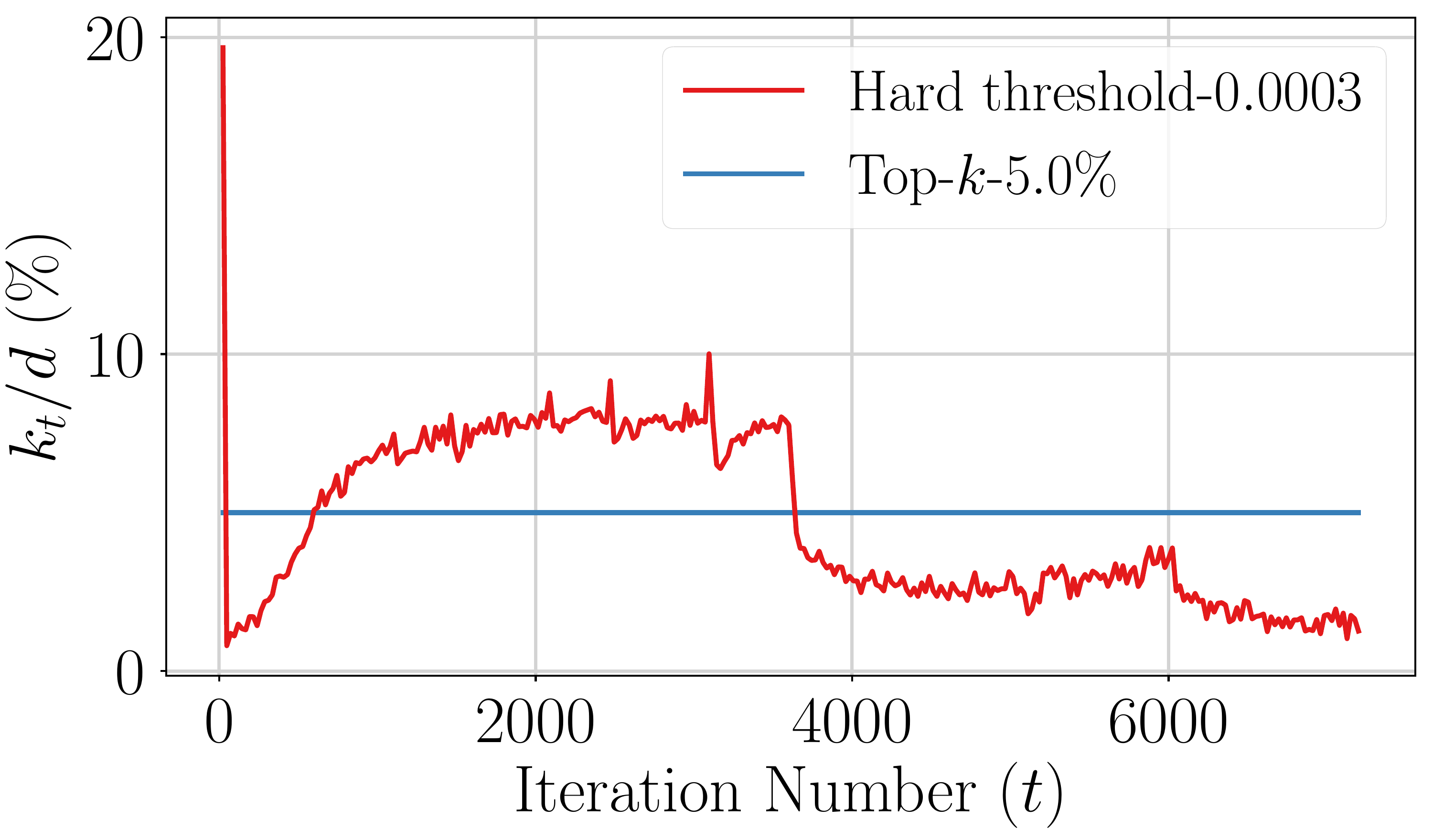}
		\caption{}
\end{subfigure}
\caption{Top-$k$ and hard-threshold without error compensation for ResNet-18 on CIFAR-10:~{(a)~Accuracy vs. Iterations, (b)~density, ($k_t/d$) vs. iterations.} Average density is $5\%$ for Top-$k$ and $4.7\%$ for hard-threshold.}\label{fig:no_mem}
\end{figure}

\begin{figure*}
\begin{minipage}{0.5\textwidth}		
\centering
\begin{algorithm}[H]
\small
    \SetKwFor{ForP}{for}{in parallel do}
	\SetAlgoLined
    \ForP{worker $w=1,..,W$}
    {
    \For{iteration $t = 1, 2, \cdots,$}
	{
	 Compute local stochastic gradient $g_w$\\
	 $\Delta_w \leftarrow g_w + e_w$\\
	$\cC(\Delta_w) \leftarrow {\tt COMPRESS}(\Delta_w)$\\
	$e_w \leftarrow \Delta_w - {\tt DECOMPRESS}(\Delta_w)$\\
	$\cC(\Delta) \leftarrow {\tt AGGREGATE}(\textstyle \cC(\Delta_1),\hdots,\cC(\Delta_W))$\\
	$\Delta^{'} \leftarrow {\tt DECOMPRESS}(\cC(\Delta))$\\
	$m \leftarrow \lambda m + \Delta^{'}$\\
	$x \leftarrow x - \gamma(\Delta^{'} + m)$\\
	}
    }
\caption{Distributed EF SGD with momentum by using gradient compression}\label{alg_1}
\end{algorithm}
\end{minipage}
\hskip .6cm
\begin{minipage}{0.5\textwidth}		
\centering
\begin{algorithm}[H]
 \small
    \SetKwFor{ForP}{for}{in parallel do}
	\SetAlgoLined
    \ForP{worker $w=1,..,W$}
    {
    \For{iteration $t = 1, 2, \cdots,$}
	{
	 Compute local stochastic gradient $g_w$\\
	 $m_w \leftarrow \lambda m_w + g_w$\\
	 $u_w \leftarrow m_w + g_w$\\
	 $\Delta_w \leftarrow u_w + e_w$\\
	 $\cC(\Delta_w) \leftarrow {\tt COMPRESS}(\Delta_w)$\\
	 $e_w \leftarrow \Delta_w- {\tt DECOMPRESS}(\Delta_w)$\\
	 $\cC(\Delta)\leftarrow {\tt AGGREGATE}(\cC(\Delta_1),\ldots, \cC(\Delta_W))$\\
	 $\Delta^{'}\leftarrow {\tt DECOMPRESS}(\cC(\Delta))$\\
	 $x \leftarrow x - \gamma(\Delta^{'})$\\
	}
    }
\caption{Distributed EF SGD with momentum by using update compression}\label{alg_2}
\end{algorithm}
\end{minipage}
\vspace{5pt}
\end{figure*}

\subsubsection{Different types of EF}\label{grad_comp_vs_update}
\begin{figure}
\centering
\begin{minipage}{0.5\linewidth}
    \centering
		\includegraphics[width=\linewidth]{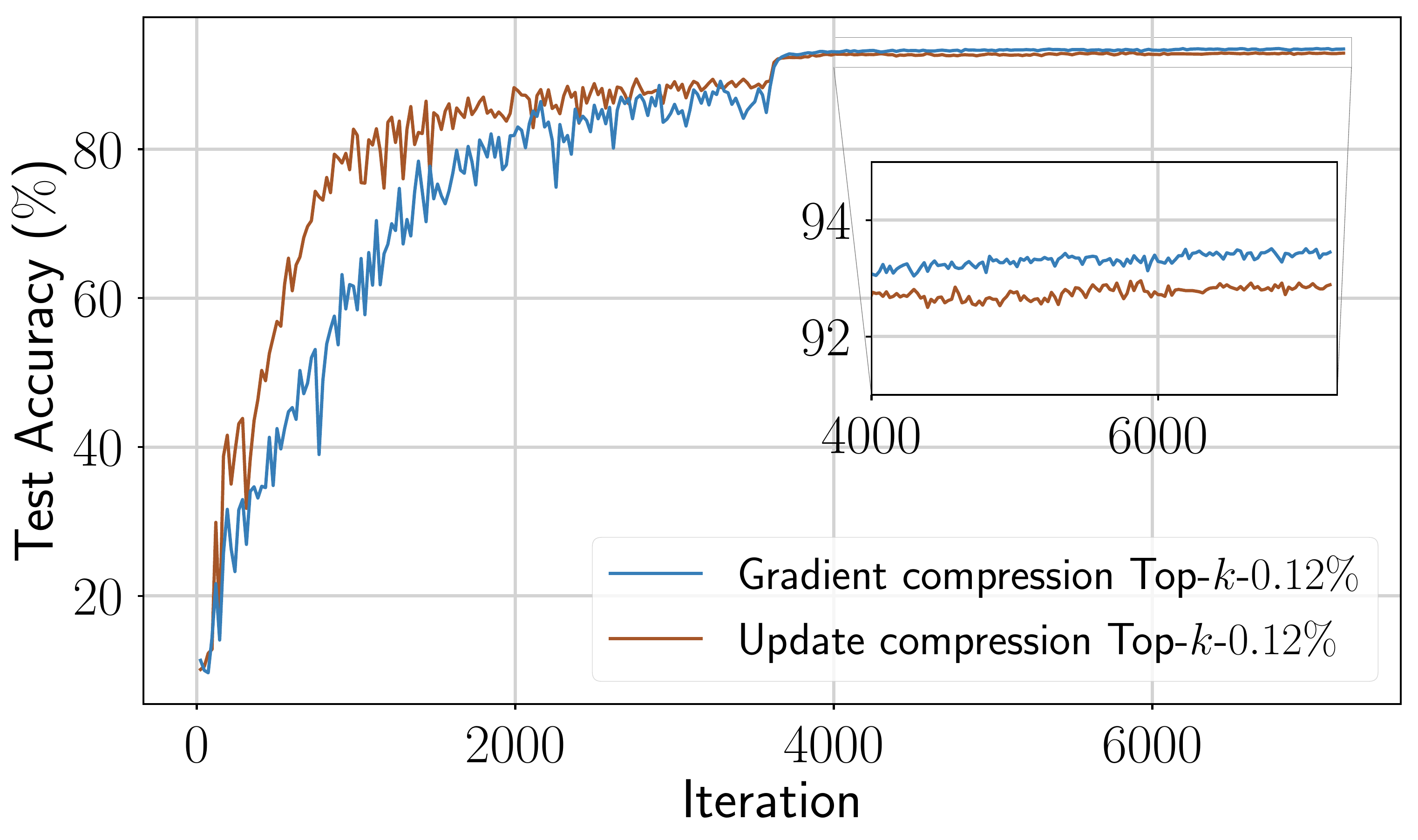}
\end{minipage}
\caption{{Test Accuracy for gradient compression vs. update compression for Top-$k$ on ResNet-18 on CIFAR-10. We experiment with three different seeds, and the plot represents the run with highest final accuracy for each setting. The test accuracy statistics ($\mu \pm \sigma$) are: Gradient compression ($92.96 \pm 0.39 \%$) and update compression ($90.78 \pm 2.03 \%$).}}\label{fig:update_compare}
\vspace{7pt}
\end{figure}

For optimizers other than vanilla SGD, one can compress and aggregate quantities other than stochastic gradients (such as momentum).
Consider an example for SGD with Nesterov momentum, where the compression and aggregation can be performed in the following two ways:
\begin{itemize}
    \item \textbf{Gradient compression.} This was proposed in \cite{PowerSGD} and is depicted in Algorithm \ref{alg_1}.
    In the case of SGD with Nesterov momentum, this update rule ensures that every worker maintains the same momentum state. However, the updates to momentum is sparse, as the momentum is calculated using sparsified gradients.
    
    \item \textbf{Update compression.}
    This was proposed in \cite{lin2018deep}, and is depicted in Algorithm \ref{alg_2}. 
    In the case of SGD with Nesterov momentum, every worker maintains a different momentum state calculated from their local stochastic gradients. Although updates to the momentum state is dense in this case, the momentum state is completely unaware of the compression and does not reflect the actual history of the updates. In order to circumvent this issue, Lin et. al. \cite{lin2018deep} had proposed momentum factor-masking to clear old local momentum states of a parameter once the parameter is updated. However, it is not easy to devise such modifications for optimizers which maintain multiple states derived from complicated calculations, such as RMSProp and ADAM.
\end{itemize}
\smartparagraph{Nomenclature for Algorithm \ref{alg_1} and \ref{alg_2}.} In Algorithm \ref{alg_1} and \ref{alg_2} we show the distributed training loop. We denote the learning rate by $\gamma$, momentum factor by $\lambda$, the model parameters by $x \in \R^d$, the momentum at worker $w$ by $m_w$, and the error at worker $w$ by $e_w$. At the beginning of the training, $m_w$ and $e_w$ are initialized to zero for all workers. By {\tt COMPRESS}, {\tt DECOMPRESS}, and {\tt AGGREGATE} we denote the compression, decompression, and aggregate function, respectively.

We also conduct experiments for Top-$k$ on ResNet-18 benchmark by using aforementioned update rules and find that gradient compression (Algorithm \ref{alg_1}) results in better performance (see Figure \ref{fig:update_compare}). In light of the above discussion and experimental evidence, we stick to gradient compression~(Algorithm \ref{alg_1}) for our main experiments.

\section{How to tune the hard-threshold?}\label{sec:tuning_hard_threshold}
We use our non-convex convergence results from \S\ref{sec:non_convex_convergence} to suggest a hard-threshold value which has better convergence than Top-$k$ with parameter $k$ for non-convex loss functions (including DNNs).
Substituting $\kappa^2=d\lambda^2$ for hard-threshold in Theorem \ref{thm:non-convex-wo-bg} we get
\begin{eqnarray}\label{eq:non-convex-hard-threshold}
 \textstyle \frac{1}{T}\sum_{t=0}^{T-1} \mbE \norm{\nabla f(x_t)}^2 \leq \frac{4(f(x_0)-f^{\star})}{\gamma T} +\frac{2\gamma L(M\zeta^2+\sigma^2)}{n} + 2\gamma^2 L^2d\lambda^2.
\end{eqnarray}
Similarly, substituting $\delta=\frac{k}{d}$ for Top-$k$ in Theorem \ref{thm:non-convex-wo-bg-rel} we get
\begin{eqnarray}\label{eq:non-convex-Top-k}
 \textstyle \frac{1}{T}\sum_{t=0}^{T-1} \mbE \norm{\nabla f(x_t)}^2 \leq \frac{8(f(x_0)-f^{\star})}{\gamma T} + \frac{4 \gamma L(M\zeta^2+\sigma^2)}{n} +\frac{8\gamma^2L^2d}{k}\left(\left(\frac{2d}{k}+M\right)\zeta^2+\sigma^2\right).
\end{eqnarray}
We ignore the first two terms unaffected by compression in \eqref{eq:non-convex-hard-threshold} and \eqref{eq:non-convex-Top-k}, and focus on the last term. To ensure that hard-threshold has better convergence than Top-$k$ we have
\begin{equation*}
\textstyle 2L^2d\lambda^2 \leq \frac{8L^2d}{k}\left(\left(\frac{2d}{k}+M\right)\zeta^2+\sigma^2\right),
\end{equation*}
that is,
\begin{equation}\label{eq:hard-threshold-val}
\textstyle \lambda \leq \frac{2}{\sqrt{k}}\sqrt{\left(\frac{2d}{k}+M\right)\zeta^2+\sigma^2}.
\end{equation}

To simplify further, we assume $\zeta^2=0$ (homogeneous distributed data), and $\sigma\sim\frac{1}{4}$. This leads us to the hard-threshold value
\begin{equation*}
    \lambda \sim \frac{1}{2\sqrt{k}}.
\end{equation*}

We find that $\lambda=\frac{1}{2\sqrt{k_{\min}}}$ has better performance (with similar total-data volume) than Top-$k_{\min}$ in Tables \ref{tab:accordion-cifar10} and \ref{tab:accordion-cifar100}, and therefore the derived formula works well. Finally, we note that more accurate hard-threshold values can be derived by having estimates of $\zeta^2$, $M$, and $\sigma^2$ in \eqref{eq:hard-threshold-val}.

\end{document}